\newcommand{\norm}[1]{\left\lVert#1\right\rVert}
\newcommand{\inprod}[2]{\left\langle#1, #2\right\rangle}
\newcommand{\Par}[1]{\left(#1\right)}
\newcommand{\Brack}[1]{\left[#1\right]}
\newcommand{\Brace}[1]{\left\{#1\right\}}
\newcommand{\R}{\mathbb{R}}
\newcommand{\E}{\mathbb{E}}
\newcommand{\tx}{\tilde{x}}
\newcommand{\ty}{\tilde{y}}
\newcommand{\tz}{\tilde{z}}
\newcommand{\oR}{\overline{R}}
\newcommand{\eps}{\epsilon}
\newcommand{\teps}{\tilde{\epsilon}}
\newcommand{\topt}{t_{\textup{opt}}}
\newcommand{\tear}{\overline{t}_{\textup{opt}}}
\newcommand{\so}{\mathcal{S}_{\text{over}}}
\newcommand{\su}{\mathcal{S}_{\text{under}}}
\newcommand{\id}{\mathbf{I}}
\newcommand{\tr}{\textup{tr}}
\newcommand{\indi}{\mathbbm{1}}
\newcommand*\lrp[1]{\left(#1\right)}
\newcommand*\lrn[1]{\left\|#1\right\|}
\newcommand*\ind{\mathds{1}}
\newtheorem{lemma}{Lemma}
\newtheorem{remark}{Remark}
\definecolor{blue}{rgb}{0.2, 0.3, 0.7}
\definecolor{liyao}{rgb}{0.5, 0.5, 0.5}
\definecolor{ForestGreen}{rgb}{0.1333,0.5451,0.1333}
\title{On Optimal Early Stopping:  Over-informative versus
Under-informative Parametrization}
\begin{document}

\date{}
\author{
			Ruoqi Shen\thanks{University of Washington, {\tt shenr3@cs.washington.edu}}
			\and
			Liyao Gao\thanks{University of Washington, {\tt marsgao@uw.edu}}
			\and
			Yi-An Ma\thanks{University of California San Diego, {\tt yianma@ucsd.edu}}
		}
\maketitle

\begin{abstract}
	Early stopping is a simple and widely used method to prevent over-training neural networks. We develop theoretical results to reveal the relationship between the optimal early stopping time and model dimension as well as sample size of the dataset for certain linear models. Our results demonstrate two very different behaviors when the model dimension exceeds the number of features versus the opposite scenario. While most previous works on linear models focus on the latter setting, we observe that the dimension of the model often exceeds the number of features arising from data in common deep learning tasks and propose a model to study this setting. We demonstrate experimentally that our theoretical results on optimal early stopping time corresponds to the training process of deep neural networks.
    \end{abstract}

\section{Introduction}
Generalization, accuracy, and computation are three of the major aspects in deploying large scale machine learning models.
They are often brought together into an optimization framework, where the first two aspects concern stationary solutions and the last aspect concerns convergence properties of the algorithms.
As a result, most previous works have analyzed them separately, seeking to strike a balance between generalization and accuracy first and then understand computational complexity \citep{zhang2002covering}.
These approaches often design a regularizer in the form of a penalty term added to the objective function \citep[c.f.,][and references therein]{nakkiran2020optimal}.
While this method is effective in many settings, it can prove challenging to determine the regularization that guarantees appealing behaviors.
Adding to the complication is the fact that they are often designed to avoid overfitting assuming that the optimization algorithm converges. 
It is not obvious how the behavior will change if the algorithm stops before the convergence of the optimization procedure.

Recent success of deep learning models combined with gradient based optimization has prompted increasingly many works to focus on blending the computation aspect into the picture of generalization--accuracy tradeoff \citep{du2018algorithmic}.
Such approaches often leverage implicit properties of the optimization algorithms in conjunction of the model and the data structures and are thus referred to as implicit regularization methods.
In training of deep neural networks, techniques such as stochastic gradient descent, batch normalization \citep{ioffe2015batch}, and dropouts \citep{srivastava2014dropout} are widely used to grant generalization properties implicitly. 

In this paper, we study regularization via early stopping \citep{morgan1989generalization, zhang2005boosting, yao2007early}. 
While in practice, early stopping is often forced by computational budget constraints, we focus on its efficacy in avoiding overfitting towards the training dataset. 
In the presence of label noise, in particular, we study the common strategy of concluding the training process at the optimal early stopping time, which achieves the lowest test risk before it increases again. 
One crucial question is how this optimal early stopping time relates to the model size and the sample size of data. 
Answering this question not only provides guidance for the model training process in practice, but also contributes to the understanding of the generalization property of different models.

\begin{figure*}
\begin{minipage}{0.65\textwidth}

  \includegraphics[width=1\linewidth]{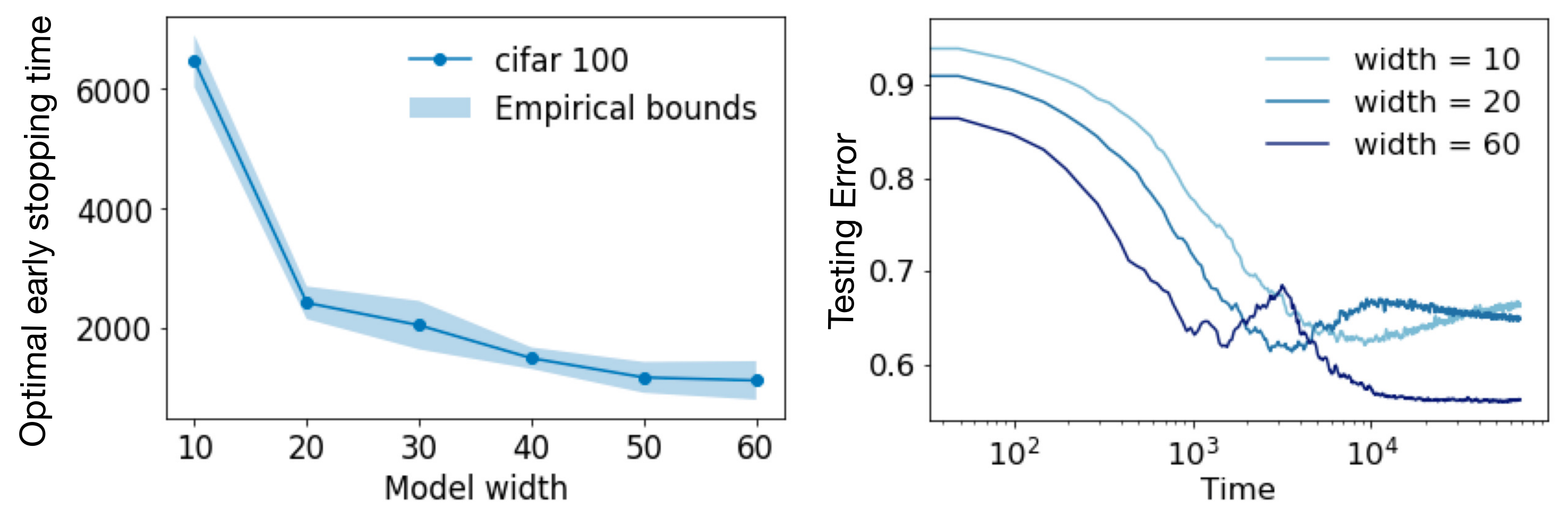}
  \caption{Left: Optimal early stopping time vs. model width $d$ for CIFAR-100 with 20\% label noise trained using ResNet networks with convolutional layers of widths $[d, 2d, 4d, 8d]$. Right: Testing error vs. time for CIFAR-100 with 20\% label noise trained using ResNet networks using different model widths.}
    \label{fig:intro_fig}
\end{minipage}
\hfill
\begin{minipage}{.32\textwidth}
  
  \includegraphics[width=1\linewidth]{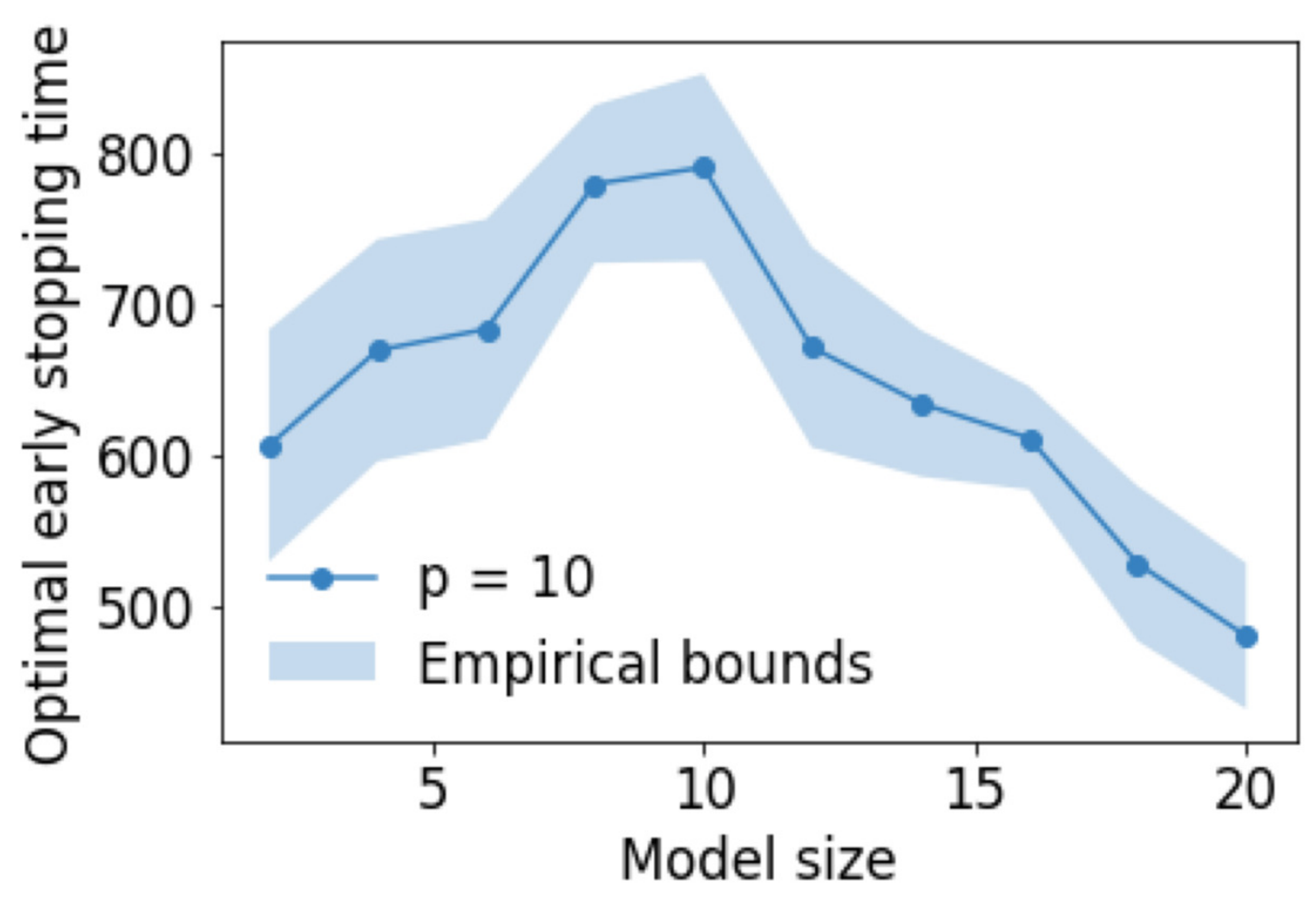}
  \caption{Optimal early stopping time vs. hidden layer width trained using two-layer ReLU networks on randomly generated points with $p=10$ informative features and 20\% label noise.}
  \label{fig:intro_fig2}
\end{minipage}

\end{figure*}

\paragraph{A conundrum on optimal early stopping} \cite{ali2019continuous} studied the regularization effect of early stopping on least square regression, but did not provide explicit characterization of the optimal early stopping time with respect to the model and data complexity. As a first step, we compute the optimal early stopping time of the model studied in \cite{ali2019continuous} under the setting where the inputs follow some Gaussian distribution. We discover that under the setting adopted therein, the optimal early stopping time increases along with the model size. However, this trend is contrary to what we observe in training neural networks. When training popular datasets such as MNIST, CIFAR-10/100 using deep neural networks, the optimal early stopping time decreases as the model size increases (see Figure \ref{fig:intro_fig}). These two contradicting trends indicate that previous theoretical models are not sufficient to explain what happens in practice.

\paragraph{Over-informative vs. under-informative parametrization} We notice that despite the huge models used in practice, the labels of the datasets are actually generated by low dimensional representations or features (See Section~\ref{sec:exp_deep}). Previous theoretical works have primarily focused on the cases where the model dimension is approximately the same or less than the number of features \citep{ belkin2020two, nakkiran2020optimal}. To complete the missing pieces in previous analysis, we propose a new setting where model size exceeds the number of features. We define such cases where more features than necessary can be extracted by the model as being \emph{over-informative}. This new setting demonstrates very different behavior from the previous theoretical model, which resolves the contradiction we previously observed.
We illustrate both theoretically and experimentally that this over-informative parametrization better reflects what happens in training models with excessive parameters. 

To see this, we first look at the last layer of a simplified neural network model. Let $\theta$ be the weight of the neural network's last layer and $\psi$ be some map given by all previous layers of the neural network. Consider the model given by 
\begin{equation}
\label{eq:two-layer}
    f_{\theta, \psi}(u) = \theta^\top \psi \Par{u},
\end{equation}
where $u$ is the input vector. Multi-layer structure enables neural networks to extract sophisticated features from the samples \citep{zeiler2014visualizing}. We thereby expect the matrix $\psi(u)$ in a trained model to be able to extract the important features of the sample $u$. 
The linear layer $v$ then weighs how much each feature contributes to the output and fits the corresponding label $y$. 
At initialization, the model extracts far more features than is necessary to fit the label $y$.
The reason is that sample $u$ contains noises that do not correspond to the label $y$ but are nevertheless extracted in $\psi(u)$. 
In the image classification example, features extracted from the unimportant parts of the images are not very useful in determining the labels.
Moreover, due to the over-parametrization nature of the neural networks, which typically have large hidden layer widths, a same feature can be extracted from multiple noisy angles.  
This setting, where more features than necessary are extracted, corresponds to the over-informative parametrization mentioned above. 

Our goal is to study how the optimal early stopping time changes when the model size and sample size changes. 
In general, when the model size increases, the model can extract more features. In the over-informative setting where the features extracted are already informative enough, extracting even more features can hardly bring any new information. Instead, due to random weight initialization and sample noise, extracting more feature can bring additional distraction to the training process. Contrary to the over-informative setting, the under-informative setting is the setting where not enough features are extracted, due to low sample quality or low model capacity. In such case, the features extracted are not informative enough to fit the label, so extracting more features can bring in new information, which helps the learning process. Due to the different roles extra features play in the over-informative and the under-informative setting, we expect early stopping, which is a regularization method, to behave differently in these two settings. Intuitively, in the over-informative setting, extra features can bring distraction so that the model can overfit more easily, so we need stronger regularization when we have more features. In the under-informative setting, the available information is limited, so the opposite is true. We need strong regularization when the number of available features is small to prevent overfitting. We show both theoretically and empirically that this hypothesis is true. Similarly, in both settings, when the sample size is small, we expect strong regularization to be beneficial. While many previous works focus on the under-informative setting, we study both settings and establish the differences between those two settings. We believe this over-informative and under-informative viewpoint and their differences can inspire future works in studying models with excessive parameters. 

We study the training process of a simplified model where the extracted features $\psi(u)$ are fixed and only the last layer is trained. This falls under the lazy-training regime. For a simple two-layer neural network, the number of features extracted is determined by the hidden layer width. For general neural networks, the relationship between the model size and the number of feature extracted can be more complicated. We believe model width is a stronger indicator on how many features are extracted than model depth since model depth relates more to the complexity of the features extracted. However, understanding the feature learning process of neural networks is still an open problem, so we are open to different answers to this question. In this simplified model, the problem is reduced to a linear model where the input is the features extracted from the samples. For the over-informative setting, the label of each sample can be generated by a low-dimensional transformation of the features extracted. For the under-informative setting, the features are not informative enough so that the features are given by a low-dimensional transformation of what actually generates the labels. We introduce the models we study formally in Section~\ref{sec:theory_over} and Section~\ref{sec:theory_under}. 

In the over-informative setting, we show the optimal early stopping time decreases as the model dimension increases or as the sample size decreases, similar to what we observe when training deep neural networks for the image classification tasks. On the other hand, in the under-informative setting, the optimal early stopping time increases as the model dimension increases or as the number of samples increases.
In addition to the theoretical results, we demonstrate experimentally that these trends persist when training various non-linear models. In particular, we are able to observe a change in the behavior of the early stopping time when the model dimension exceeds the feature number in non-linear models (e.g., see Figure \ref{fig:intro_fig2}). 

\paragraph{Generalization of the early-stopped models}Even for the same model, stopping at the optimal early stopping time versus after the training converges can lead to solutions with very different generalization properties. We also give a brief study on how the test risk of models changes as a function of other training parameters when stopped at the optimal early stopping time. In particular, it has been shown that without regularization, the test risk can exhibit the so-called ``double descent'' phenomenon \citep{belkin2018reconciling, nakkiran2019deep}, where the risk as a function of model size or sample size experience two distinct phases of descent. 
This phenomenon can make deciding the best model parameters challenging.
We demonstrate that early stopping helps mitigate double descent in multiple settings.

\subsection{Related work}

Regularization is widely used in training machine learning models to prevent overfitting. \citep{dobriban2018high, dobriban2020wonder, hastie2019surprises, kobak2020optimal, mei2019generalization, nakkiran2020optimal} studied the test risk of ridge regression. In addition, a lot of recent works studied the theoretical guarantee of implicit regularization in model training \citep{ derezinski2019exact,vaskevicius2019implicit, haochen2020shape, blanc2020implicit, razin2020implicit}. \cite{nakkiran2020optimal} studies the optimal $\ell_2$ regularization in the under-informative setting. In our work, we study regularization using early stopping, which gives a regularizer different from the $\ell_2$ regularization. More importantly, we focus on the over-informative setting, which better reflect what happens in large models, and establish the difference between the over-informative and under-informative setting. Early stopping time has been studied in some recent works \citep{li2020gradient,vavskevivcius2020statistical}. In particular, \cite{ali2019continuous,ali2020sgd} studied the risk of some early stopped linear models, but didn't characterize the exact early stopping time. Our work characterizes the exact optimal early time in the over-informative and under-informative settings and study how they change as a function of the sample and model complexity. 

The test risk of simplified machine learning models has been studied in a long line of works. A partial list of papers that studied linear models similar to ours includes \cite{bartlett2020benign, chen2020multiple, dobriban2018high, hastie2019surprises, montanari2019generalization, mitra2019understanding, muthukumar2020harmless}. Finally, our work studies the low rank structure of the feature representations. \cite{li2018measuring, dusenberry2020efficient} have also empirically explored and leveraged this property.

\subsection{Notation}
We use tilde notation $\tilde{O}$, $\tilde{\Theta}$, $\tilde{\Omega}$ to hide $\log$ factors in standard asymptotic notation. For any matrix $A$, $\exp(A)$ is defined as $\exp(A):=\sum_{n=0}^{\infty}\frac{A^n}{n!}$. 
\section{Main results}
\subsection{Over-informative parametrization}
\label{sec:theory_over}
In this section, we introduce and analyze formally the over-informative  setting, where the model size exceeds the number of features extracted. This setting completes the missing piece of previous theoretical analysis on large models with excessive parameters. For data $(x,y)$, the input $x\in \R^d$ is from a high dimensional space, which represents the over-informative features extracted. The label $y\in\R$ is generated from a low-dimensional representation $z\in \R^p$, $p\leq d$, which represents the true features of the sample. $z$ is mapped from $x$ by a low-dimensional transformation. We let $\so$ denote the over-informative parametrization setting studied in this section.

Formally, we consider the following linear regression problem with $d > p$. Let the covariate/input matrix $X\in \R^{n \times d}$ be a random matrix such that each entry of it is generated from $\mathcal{N}(0,1)$ independently. Let $x_1,...,x_n$ be the row vectors of the matrix $X$. For a semi-orthogonal matrix $P\in \R^{p\times d}$ with $p < d$ and some unknown parameter $\theta^*$, the response is generated by 
\begin{equation}
	\label{eq:y_generate}
	y_i = \inprod{Px_i}{\theta^*} + \eps_i,
\end{equation}
where the label noise $\eps_i \sim \mathcal{N}(0,\sigma^2)$ is independent of $x_i$ and is generated independently for each $(x_i,y_i)$.  Let $\eps \in \R^{n}$ be the vector with entries given by $\eps_i$.

For semi-orthogonal matrix $P$ and parameter $\theta^*$, let $\mathcal{D}_{P, \theta^*}$ be the joint distribution of a single data point $(x, y)$ given $P$ and $\theta^*$. For an estimator $\beta$, define the population risk  
\begin{equation*}
	R(\beta) = \E_{(\tx,\ty) \sim \mathcal{D}_{P, \theta^*}} \Brack{\Par{\inprod{\tx}{\beta} - \ty}^2}.
\end{equation*}
We consider the standard linear least squares problem
\begin{equation}
    \label{eq:lsp}
	\min_{\beta \in \R^d} \frac{1}{2n} \norm{y- X\beta}_2^2.
\end{equation} 

Applying gradient flow starting from $0$ on \eqref{eq:lsp} gives a continuous differential equation over time $t$, 
\begin{equation}
	\label{eq:gradflow}
	\frac{d}{dt}\beta_{X,y}(t) = \frac{X^\top}{n}(y - X\beta_{X,y}(t)).
\end{equation}
with an initial condition $\beta_{X,y}(0) = 0$. We can solve the differential equation exactly and obtain 
	\begin{equation}
		\label{eq:gradflowsol}
		\beta(t) = \Par{X^\top X}^+ \Par{I - \exp\Par{-tX^\top X/n}}X^\top y,
	\end{equation}
	for all $t\geq 0$ \citep{ali2019continuous}.

In practice, one would discretize the gradient flow and perform gradient descent. We discuss the error due to such discretization in Appendix~\ref{sec:discrete}.
\begin{remark}
 We assume the input data follows an isotropic Gaussian distribution. In practice, the feature extracted might not be independent, especially when the number of features is large so that some features can be redundant. However, our analysis only relies on the singular values of the data matrix $X$, so it is possible to relax the data assumptions on $X$ to bounds on the singular values of the data matrix. When the features are correlated, if the relationship between the singular values and the model/sample size stays the same, the trends we identify still hold. 
\end{remark}
\begin{remark}
The linear model has been widely used as the first step to study more complicated models as well due to its simple form. It is possible to extend our results to non-linear settings such as the random feature setting. One approach to extend to such lazy-training settings is to first locate the features extracted at initialization and then show how gradient flow progresses on the features extracted, similar to what we do. Extending to non-lazy training regime can be more challenging. In this paper, we stick to the simplest model that the observed phenomenon can be shown so that we can get a simple and explicit characterization of the optimal early stopping time. 
\end{remark}
For any dataset $y \in \R^n$ and $X\in \R^{n\times d}$, let $\beta_{X,y}(t)$ be the gradient flow solution of standard linear regression problem at time $t$. 
We further assume that $P\in \R^{p\times d}$ is a uniformly random semi-orthogonal matrix. 

 Our goal is to study the expected risk of the estimator $\beta_{X,y}(t)$ over $P$ and $\epsilon$. We denote the expected risk at time $t$ for dataset size $n$ as
\begin{equation*}
	\oR_{X,  \theta^*}(t) = \E_{P, \epsilon} \Brack{R(\beta_{X,y}(t))}.
\end{equation*}
We study the optimal early stopping time, the time $t$ that achieves the first local minimum of the expected risk. We characterize the optimal early stopping time for different parameter choices. 
We let $\topt$ denote the optimal stopping time, omitting $(X,  \theta^*)$ when clear from context,
\begin{align}
	&\topt(X,  \theta^*) =  \min t \notag\\ &\;\;s.t. \;\;\frac{d}{dt} 	\oR_{X,  \theta^*}(t')  < 0 \text{  for  } 0<  t' < t, \text{  and  } \frac{d}{dt} 	\oR_{X,  \theta^*}(t)  =0. 
\end{align}

With the gradient flow solution $\beta_{X,y}$, we can derive a high probability upper and lower bound on $\topt$.
\begin{restatable}[Optimal early stopping time in the over-informative parametrization setting]{theorem}{overoptstopping}
	\label{thm: over_optstopping}
	In setting $\so$, for a fixed parameter $\theta^*$ and noise variance $\sigma^2$, when $n \leq d$, let $\gamma = \Par{\frac{\sqrt n + \sqrt{2\log n}}{\sqrt d}}^2.$ For $\gamma \leq 1$, with probability at least $1-\frac 2 n$ over the randomness of $X$, the optimal early stopping time $\topt$ satisfies
	\begin{align*}
		\frac{n}{\Par{1 +\sqrt \gamma}^2d}\log\Par{1 + \frac{\Par{1-\sqrt \gamma}^2\norm{\theta^*}_2^2}{\sigma^2 }} \leq \topt \leq  \frac{n}{\Par{1-\sqrt \gamma}^2d}\log\Par{1 + \frac{\Par{1+\sqrt \gamma}^2\norm{\theta^*}_2^2}{\sigma^2 }} .
	\end{align*}
	When $n > d$, let $\gamma = \Par{\frac{\sqrt n}{\sqrt d + \sqrt {2 \log d}}}^2$. For $\gamma \geq 1$, with probability at least $1-\frac 2 d$ over the randomness of $X$, the optimal early stopping time $\topt$ satisfies
	\begin{align*}
	\frac 1 {\Par{1+\frac 1 {\sqrt \gamma}}^2}\log\Par{1 +\Par{1-\frac 1 {\sqrt \gamma}}^2 \frac{n\norm{\theta^*}_2^2}{d\sigma^2 }} \leq \topt 
	\leq\frac 1 {\Par{1-\frac 1 {\sqrt \gamma}}^2} \log\Par{1 +\Par{1+\frac 1 {\sqrt \gamma}}^2 \frac{n\norm{\theta^*}_2^2}{d\sigma^2 }}.
	\end{align*}
\end{restatable}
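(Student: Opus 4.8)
The plan is to reduce the expected population risk $\oR_{X,\theta^*}(t)$ to an explicit scalar function of $t$ in the eigenbasis of $X^\top X$, differentiate it, and localize the first sign change of the derivative. Writing $\beta^* \defeq P^\top\theta^*$, the model becomes $y = X\beta^* + \eps$, and since $\tx\sim\mathcal N(0,\id_d)$ with independent label noise, expanding the square gives $R(\beta) = \norm{\beta-\beta^*}_2^2 + \sigma^2$. I would then diagonalize $X^\top X = \sum_i \lambda_i v_i v_i^\top$ and substitute the closed form \eqref{eq:gradflowsol}: in the orthonormal basis $\{v_i\}$, $\beta(t)-\beta^*$ splits into a bias part (coefficient $-e^{-t\lambda_i/n}(v_i^\top\beta^*)$ on the range of $X^\top X$ and $-(v_i^\top\beta^*)$ on its kernel) and a noise part (coefficient $\tfrac{1-e^{-t\lambda_i/n}}{\lambda_i}v_i^\top X^\top\eps$). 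Taking $\E_\eps$ kills the bias--noise cross terms and replaces $(v_i^\top X^\top\eps)^2$ by $\sigma^2\lambda_i$. The key observation for the $P$-average is that for uniformly random semi-orthogonal $P$ the vector $\beta^*=P^\top\theta^*$ is rotationally invariant, hence uniform on the sphere of radius $\norm{\theta^*}_2$, so $\E_P[(v_i^\top\beta^*)^2]=\norm{\theta^*}_2^2/d$ for every unit $v_i$, independently of $X$. Combining these yields
\[
\oR_{X,\theta^*}(t) = \frac{\norm{\theta^*}_2^2}{d}\sum_{\lambda_i>0} e^{-2t\lambda_i/n} + \sigma^2\sum_{\lambda_i>0}\frac{(1-e^{-t\lambda_i/n})^2}{\lambda_i} + C,
\]
where $C$ collects the $t$-independent terms (including the kernel contribution when $n\le d$).

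Differentiating termwise, I would obtain $\frac{d}{dt}\oR_{X,\theta^*}(t)=\frac{2}{n}\sum_{\lambda_i>0}\phi_i(t)$ with $\phi_i(t)=e^{-t\lambda_i/n}\Brack{\sigma^2 - \Par{\sigma^2 + \norm{\theta^*}_2^2\lambda_i/d}e^{-t\lambda_i/n}}$. Each $\phi_i$ is strictly negative near $t=0$, vanishes exactly once at $t_i^\ast=\frac{n}{\lambda_i}\log\Par{1+\frac{\norm{\theta^*}_2^2\lambda_i}{d\sigma^2}}$, and is positive afterwards. Since the derivative is negative on $(0,\min_i t_i^\ast)$ (every term negative) and positive once $t>\max_i t_i^\ast$ (every term positive), $\oR_{X,\theta^*}$ is strictly decreasing before $\min_i t_i^\ast$ and the first local minimum is sandwiched: $\min_i t_i^\ast \le \topt \le \max_i t_i^\ast$. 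I would then verify that $f(\lambda)\defeq\frac{n}{\lambda}\log\Par{1+\frac{\norm{\theta^*}_2^2\lambda}{d\sigma^2}}$ is strictly decreasing (equivalently $\log(1+u)>\frac{u}{1+u}$ for $u>0$), so that $\min_i t_i^\ast=f(\lmax)$ and $\max_i t_i^\ast=f(\lmin)$, where $\lmax,\lmin$ are the largest and smallest nonzero eigenvalues of $X^\top X$.

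Finally I would invoke non-asymptotic Gaussian singular-value bounds: the extreme singular values of $X$ concentrate around $\sqrt d\pm\sqrt n$ (when $n\le d$) or $\sqrt n\pm\sqrt d$ (when $n>d$), with each deviation of $\sqrt{2\log n}$ (resp. $\sqrt{2\log d}$) failing with probability at most $1/n$ (resp. $1/d$). A union bound gives, with the stated probability, $d(1-\sqrt\gamma)^2\le\lmin\le\lmax\le d(1+\sqrt\gamma)^2$ in the regime $n\le d$, and the analogous $n(1\mp 1/\sqrt\gamma)^2$ bounds when $n>d$. Plugging these into $f(\lmax)\le\topt\le f(\lmin)$ and loosening by using an upper eigenvalue bound in the $n/\lambda$ prefactor together with a lower eigenvalue bound inside the increasing logarithm (and symmetrically for the other side) produces exactly the four displayed expressions. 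The main obstacle is the combination of the exact risk computation and the single-crossing/sandwich argument: one must check that the average over the random subspace $P$ decouples cleanly from the $X$-dependent eigenbasis, and that the derivative---though a sum of many single-crossing terms---still has its first zero trapped between the per-eigenvalue thresholds. The eigenvalue concentration and the monotonicity of $f$ are comparatively routine.
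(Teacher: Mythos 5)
Your proposal is correct and follows essentially the same route as the paper's proof: the exact risk formula obtained by diagonalizing $X^\top X$ and averaging over the rotationally invariant $\beta^* = P^\top \theta^*$ (the paper's Lemma~\ref{lem:overrisk}), the per-eigenvalue single-crossing sandwich argument for the derivative of the risk, and Davidson--Szarek singular-value concentration (the paper's Lemmas~\ref{lem:eigbounds} and~\ref{lem:eigbounds2}). The only cosmetic difference is your explicit monotonicity argument for $f(\lambda)=\frac{n}{\lambda}\log\Par{1+\norm{\theta^*}_2^2\lambda/(d\sigma^2)}$; the paper skips this and bounds each per-eigenvalue threshold directly by combining the upper eigenvalue bound in the prefactor with the lower eigenvalue bound inside the logarithm, which is exactly your final loosening step.
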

Note that our definitions of $\gamma$ differ slightly in the case $n \leq d$ and $n > d$, but they are both of order $\tilde{\Theta}\Par{\frac n d}$.
Theorem~\ref{thm: over_optstopping} gives an approximation of the optimal stopping time up to a constant as long as $\gamma$ is constantly bounded away from $1$. 
When $\gamma \leq \frac 1 4$ (roughly $n \leq 4d$), omitting the logarithmic terms, $\frac {9n}{4d} \leq \topt \leq \frac{2n}{d}$.
When $\gamma \geq 4$ (roughly $n \geq 4d$),  $\frac {n}{3(d+n)} \log \Par{1+\frac {n\norm{\theta^*}} {4d\sigma^2}}\leq \topt \leq \frac{5n}{d+n} \log\Par{1+3\frac {n\norm{\theta^*}} {d\sigma^2}}$.  Combining these two cases shows that when $\gamma$ is bounded away from $1$, $\topt$ is about ${\Theta}\Par{\frac n {d+n}\log \frac n d}$. When $\gamma $ is very close to $1$, the upper bound on $\topt$ in Theorem~\ref{thm: over_optstopping} can be loose. 
We show in Appendix~\ref{sec:asymptotic}, that in the asymptotic case where $n$ and $d$ are large and $\frac{\norm{\theta^*}}{\sigma^2}$ is lower bounded, the approximation $\topt = \tilde{\Theta}\Par{\frac n {d+n}}$ still holds when $\gamma$ is close to 1.
In the asymptotic analysis, we also achieve tighter constants in the bounds of the optimal early stopping time. 
In Appendix~\ref{sec:exp_appendix}, we give some numerical experiments on this model.

Theorem~\ref{thm: over_optstopping} shows that the optimal early stopping time is roughly $\tilde{\Theta}\Par{\frac n {n+d}}$, when $n\leq d$ and when $n$ and $d$ are around the same order. When $n\gg d$, the optimal early stopping time is roughly $\Theta(\log \frac n d)$.  It implies that {\bf when the number of features is smaller than the model dimension, optimal early stopping time increases as $n$ increases or $d$ decreases.} We show in Section~\ref{sec:exp_network_opt} and Section~\ref{sec:exp_twolayer} that training deep neural networks on large real datasets also follows these two trends. An intuitive explanation of such phenomenon is when the sample size is large, the model has access to more information, so the model can be trained longer before the overfitting starts. On the contrary, when $d$ increases, the number of parameters is large so that even small updates of each parameter can result in overfitting. In such case, the model can benefit from stronger early stopping regularization.

\subsection{Under-informative parametrization}
\label{sec:theory_under}
In this section, we study the under-informative parametrization setting, where the number of underlying features exceeds the model size, so the extracted features are not informative enough. This setting shows very different behavior from the over-informative setting. In this setting, the label $y$ is generated by some covariate $z$, which is in a high-dimensional space, but the model only has access to low-dimensional extracted features, which are given by a low dimensional projection of $z$. 

Formally, we consider the following linear regression problem with $d \leq p$. Let the covariate matrix $Z\in \R^{n \times p}$ be a random matrix such that each entry of it is generated from $\mathcal{N}(0,1)$ independently. Let $z_1,...,z_n$ be the row vectors of the matrix $Z$. For some unknown parameter $\theta^*$, the response is generated by 
\begin{equation}
	\label{eq:y_gene_under}
	y_i = \inprod{z_i}{\theta^*} + \eps_i.
\end{equation}
where the label noise $\eps_i \sim \mathcal{N}(0,\sigma^2)$ is independent of $z_i$ and $y_i$ and is independent for each sample. 
For a semi-orthogonal matrix $P\in \R^{p\times d}$ with $d \leq p$, let $X = ZP$. we apply gradient flow on $(X,y)$. For any semi-orthogonal matrix $P$ and parameter $\theta^*$, let $\mathcal{D}_{P, \theta^*}$ be the joint distribution of a single data point $(x, y)$. For an estimator $\beta$, define the population risk  
\begin{equation*}
	R(\beta) = \E_{(\tx,\ty) \sim \mathcal{D}_{P, \theta^*}} \Brack{\Par{\inprod{\tx}{\beta} - \ty}^2}.
\end{equation*}

Similar as in the over-informative setting, we consider the gradient flow solution \eqref{eq:gradflowsol} of the standard linear least square problem \eqref{eq:lsp}. For any dataset $y \in \R^n$ and $X\in \R^{n\times d}$ given by some $Z$, $P$ and $\eps$, let $\beta_{X,y}(t)$ be the solution of the gradient flow given in \eqref{eq:gradflowsol}. We further assume that $P\in \R^{p\times d}$ is a uniformly random semi-orthogonal matrix. 

We let $\su$ denote the above setting. $\su$ studies the case where we only have partial access to the features that determine the label $y$ and train the model on the limited available information.

Our goal is to study the expected risk of the estimator $\beta_{X,y}(t)$ over $Z$, $P$ and $\epsilon$.  We denote the expected risk over $Z$, $P$ and $\epsilon$ at time $t$ as
\begin{equation*}
	\oR_{\theta^*}(t) = \E_{Z, P, \epsilon} \Brack{R(\beta_{X,y}(t))}.
\end{equation*}
Note that here in $\su$, we take expectation on $Z$ when computing the expected risk. In $\so$, we do not take expectation on $X$ and instead get a high probability bound on $X$. This slight difference is due to the different methods we use in analyzing these two settings. 
We study the optimal early stopping time, the time $t$ that achieves the first local minimum of the expected risk. We characterize the optimal early stopping time for different choices of $p$, $n$ and $d$. 
We let $\topt$ denote the optimal stopping time, omitting $ \theta^*$ when clear from context,
\begin{align}
	&\topt(\theta^*) = \min t \notag\\
	&\;\;s.t. \;\; \frac{d}{dt} 	\oR_{  \theta^*}(t')  < 0 \text{  for  } 0<  t' < t, \text{  and  } \frac{d}{dt} 	\oR_{\theta^*}(t)  =0. 
\end{align}
We are able to give an approximation of the optimal stopping time $\topt$.
\begin{restatable}[Optimal early stopping time in under-informative setting]{theorem}{underoptstopping}
	\label{thm: under_optstopping}
	In setting $\su$, for a fixed parameter $\theta^*$ and noise variance $\sigma^2$, if $(8n+9d+16)\norm{\theta^*}_2^2\leq{p\sigma^2 + p\norm{\theta^*}_2^2}$, the optimal early stopping time $\topt$ satisfies
	\begin{equation*}
		\frac{{n\norm{\theta^*}_2^2}}{ 2\Par{p\sigma^2 + \Par{p-d}\norm{\theta^*}_2^2} }\leq t_\textup{opt}\leq  \frac{2n{\norm{\theta^*}_2^2}}{{p\sigma^2 + \Par{p-d}\norm{\theta^*}_2^2}}.
	\end{equation*}
\end{restatable}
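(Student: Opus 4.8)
The plan is to reduce setting $\su$ to an effective isotropic linear regression, derive a closed form for $\oR_{\theta^*}(t)$, and then localize the first sign change of its derivative.

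First I would exploit that $P$ is semi-orthogonal, so $P^\top P = \id$ and $PP^\top$ is the orthogonal projection onto the $d$-dimensional column space of $P$. Writing $z_i = P\Par{P^\top z_i} + \Par{\id - PP^\top}z_i$ and setting $x_i = P^\top z_i$, the label decomposes as $y_i = \inprod{x_i}{P^\top\theta^*} + \teps_i$ with $\teps_i = \inprod{z_i}{\Par{\id-PP^\top}\theta^*} + \eps_i$. Two facts drive the reduction: (i) since the rows of $X = ZP$ are $x_i = P^\top z_i \sim \mathcal{N}(0,\id_d)$ and independent, $X$ is a standard Gaussian matrix whose law does not depend on $P$, hence $X$ is independent of both $P$ and of the direction $P^\top\theta^*$; and (ii) conditioned on $P$, the vector $\teps$ is $\mathcal{N}(0,\tilde{\sigma}^2\id_n)$ with $\tilde{\sigma}^2 = \sigma^2 + \norm{\Par{\id-PP^\top}\theta^*}_2^2$ and independent of $X$ (because $P^\top z_i$ and $\Par{\id-PP^\top}z_i$ are uncorrelated jointly Gaussian). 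A direct computation then gives the population risk $R(\beta) = \norm{\beta - P^\top\theta^*}_2^2 + \tilde{\sigma}^2$, so $\su$ is exactly an isotropic regression with signal $P^\top\theta^*$ and effective noise level $\tilde{\sigma}^2$.

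Second, I would substitute $y = X\Par{P^\top\theta^*} + \teps$ into the gradient-flow solution \eqref{eq:gradflowsol}, take the expectation over $\teps$ (the cross term vanishes), and obtain a bias--variance split. Writing $A = X^\top X$ with eigenpairs $(\lambda_j, v_j)$, the bias equals $\sum_j e^{-2t\lambda_j/n}\inprod{P^\top\theta^*}{v_j}^2$ and the variance equals $\tilde{\sigma}^2\sum_{\lambda_j>0}\lambda_j^{-1}\Par{1-e^{-t\lambda_j/n}}^2$. Because the Gaussian $X$ is rotationally invariant and independent of $P^\top\theta^*$, its eigenvectors form a uniformly random orthonormal frame, so $\E\inprod{P^\top\theta^*}{v_j}^2 = \norm{P^\top\theta^*}_2^2/d$; averaging the eigenvector directions collapses the bias to $\Par{\norm{P^\top\theta^*}_2^2/d}\,\E\tr\Par{e^{-2tA/n}}$. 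Taking the outer expectation over $P$ via $\E_P\norm{P^\top\theta^*}_2^2 = \theta^{*\top}\E_P\Brack{PP^\top}\theta^* = \tfrac{d}{p}\norm{\theta^*}_2^2$ (since $\E_P\Brack{PP^\top} = \tfrac{d}{p}\id_p$ by symmetry) and $\E_P\tilde{\sigma}^2 = \sigma^2 + \tfrac{p-d}{p}\norm{\theta^*}_2^2$, I arrive at
$$\oR_{\theta^*}(t) = a\,\E\Brack{\tr\Par{e^{-2tA/n}}} + b\Par{\E\Brack{\tr\Par{A^+\Par{\id - e^{-tA/n}}^2}} + 1},$$
with $a = \norm{\theta^*}_2^2/p$ and $b = \tfrac1p\Par{p\sigma^2 + (p-d)\norm{\theta^*}_2^2}$. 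Hence $\topt$ is the first positive zero of $\oR_{\theta^*}'(t) = \tfrac2n\Par{b\,H(t) - a\,L(t)}$, where $L(t) = \E\tr\Par{A\,e^{-2tA/n}}$ and $H(t) = \E\tr\Par{e^{-tA/n} - e^{-2tA/n}}$, and $\oR_{\theta^*}'(0^+) = -2ad < 0$.

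Third, I would sandwich $L$ and $H$ using the scalar inequalities $1-2u \le e^{-2u}\le 1$ and $u - \tfrac32 u^2 \le e^{-u}(1-e^{-u}) \le u$, valid for all $u \ge 0$, together with the exact Wishart trace moments $\E\tr(A) = nd$ and $\E\tr(A^2) = nd(n+d+1)$. This yields $nd\Par{1 - \tfrac{2t(n+d+1)}{n}} \le L(t) \le nd$ and $td\Par{1 - \tfrac{3t(n+d+1)}{2n}} \le H(t) \le td$. Setting $t_\ell = \tfrac{na}{2b}$ and $t_u = \tfrac{2na}{b}$ — which are exactly the claimed bounds since $a/b = \norm{\theta^*}_2^2/\Par{p\sigma^2 + (p-d)\norm{\theta^*}_2^2}$ — I would show that (a) for $t \in (0,t_\ell)$ the lower bound on $L$ beats the upper bound on $H$, so $\oR_{\theta^*}'(t) < 0$ and no critical point occurs before $t_\ell$, giving $\topt \ge t_\ell$; and (b) at $t = t_u$ the upper bound on $L$ is dominated by the lower bound on $H$, so $\oR_{\theta^*}'(t_u) \ge 0$. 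Both reduce to scalar inequalities of the form $b \ge c\,a(n+d+1)$ with small constants $c$, and each follows from the hypothesis $(8n + 9d + 16)\norm{\theta^*}_2^2 \le p\sigma^2 + p\norm{\theta^*}_2^2$, which is precisely $b \ge 8a(n+d+2)$, comfortably stronger than what (a) and (b) require. Since $\oR_{\theta^*}'$ is continuous with $\oR_{\theta^*}'(0^+) < 0$ and $\oR_{\theta^*}'(t_u) \ge 0$, its first zero $\topt$ lies in $(0,t_u]$, while (a) forces $\topt \ge t_\ell$; strict negativity on $(0,t_\ell)$ certifies this first zero is the first local minimum. I expect the main obstacle to be the second step: justifying the independence and rotational-invariance facts so that only traces of spectral functions of $A = X^\top X$ survive, and correctly propagating the two layers of averaging (over $X$, then over $P$) into the clean constants $a$ and $b$. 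The hypothesis enters solely to pin $\topt$ inside the small-$t$ window where the quadratic corrections, governed by $\E\tr(A^2) = nd(n+d+1)$, remain dominated by the linear terms.
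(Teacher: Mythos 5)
Your proposal is correct and follows essentially the same route as the paper's own proof: you reduce setting $\su$ to an isotropic regression with effective noise variance $\sigma^2 + \frac{p-d}{p}\norm{\theta^*}_2^2$ (this is exactly the content of the paper's Lemma~\ref{lem:underrisk}), differentiate the resulting bias--variance expression, and sandwich the derivative via scalar exponential bounds together with the Wishart trace moments (the paper's Lemma~\ref{lem:eigenratio}), localizing the first sign change between precisely the two claimed endpoints. The only deviations are cosmetic --- you evaluate your quadratic lower bound on $H$ directly at $t_u$ rather than using the paper's linearization on $t \le T = \frac{n}{4(n+d+2)}$, and your second moment $\E\Brack{\tr(A^2)} = nd(n+d+1)$ is in fact the correct value where the paper's Lemma~\ref{lem:eigenratio} states $nd(n+d+2)$; neither difference affects the validity of the argument, since the hypothesis $(8n+9d+16)\norm{\theta^*}_2^2 \le p\sigma^2 + p\norm{\theta^*}_2^2$ leaves ample slack in both versions.
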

Theorem~\ref{thm: under_optstopping} approximates the early optimal stopping $\topt$ up to a constant when $p$ is larger that $n$ and $d$ by a constant factor. For $d$ that is close to $p$, numerical experiments show that the approximation $\tilde{\Theta}\Par{\frac{{n\norm{\theta^*}_2^2}}{{p\sigma^2 + \Par{p-d}\norm{\theta^*}_2^2} }}$ can still hold. Theorem~\ref{thm: under_optstopping} implies that {\bf when the number of features is larger than the model dimension, optimal early stopping time increases as $n$ or $d$ increases.} An intuitive explanation is when the model is when the model size and the sample size are small, the model does not have enough information to train so that the model tends to fit the noise easier. In such case, regularization from early stopping can be beneficial.

\section{Generalization of the early-stopped models}
We give a brief discussion on the effect of early stopping on generalization. 
It has been observed empirically that without early stopping or other types of regularization, when the number of samples $n$ increases, the test error can experience double descent in presence of label noise. \cite{nakkiran2019deep} observed that optimal early stopping can possibly mitigate the double descent phenomenon empirically. In Proposition~\ref{thm:over_n_mono}, we bound the expected risk at the optimal stopping time for varying data size $n$ and explain this phenomenon theoretically. 
\begin{restatable}{proposition}{overnmono}
	\label{thm:over_n_mono}
   In setting $\so$, for a fixed parameter $\theta^*$ and noise variance $\sigma^2$. Let $\tear =  {\alpha n} /(n+d)$ for some constant $\alpha$. Let $\lambda_1 \geq ...\geq \lambda_d$ be the eigenvalues of the matrix $\frac 1 n X^\top X$. Then, 
   \begin{equation}
   	\label{eq: riskmono}
	\overline{R}_1  \leq \E_X [\oR_{X, \theta^*}(\tear)] \leq \oR_2 . 
    \end{equation}
   where $$\overline{R}_1 = \sigma^2 + \E_X\Brack{\sum_{i=1}^d\exp\Par{-2\alpha n \lambda_i/d}}\frac {\norm{\theta^*}^2}{d},$$ and $ \oR_2 =\oR_1 + \frac 1 2\alpha\sigma^2.$  $\overline{R}_1$ and $\oR_2$ decrease monotonically as $n$ increases.
\end{restatable}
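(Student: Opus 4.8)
The plan is to first reduce the problem to a scalar spectral calculation by computing $\oR_{X,\theta^*}(t)$ in closed form, then to control its bias and variance pieces separately at $t=\tear$, and finally to read off monotonicity from a matrix-exponential trace representation. Since $\tx\sim\mathcal N(0,\id)$ and $\ty=\inprod{\tx}{P^\top\theta^*}+\teps$ in setting $\so$, the population risk factors cleanly as $R(\beta)=\norm{\beta-P^\top\theta^*}_2^2+\sigma^2$. Writing the labels as $y=XP^\top\theta^*+\eps$ and substituting the gradient-flow solution \eqref{eq:gradflowsol}, I would diagonalize $\tfrac1n X^\top X=\sum_i\lambda_i v_iv_i^\top$ and split $\beta_{X,y}(\tear)-P^\top\theta^*$ into its $\eps$-mean (bias) and $\eps$-fluctuation (variance). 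A short computation gives a bias contribution $\sum_i e^{-2t\lambda_i}\inprod{v_i}{P^\top\theta^*}^2$ and a variance contribution $\tfrac{\sigma^2}{n}\sum_{i:\lambda_i>0}\tfrac{(1-e^{-t\lambda_i})^2}{\lambda_i}$.

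The next step is to average over $P$, whose only appearance is through $P^\top\theta^*$. Using rotation invariance of the uniform semi-orthogonal law, $\E_P[P^\top\theta^*\theta^{*\top}P]=c\,\id_d$, and the constant is fixed by the trace identity $\tr\Par{\E_P[P^\top\theta^*\theta^{*\top}P]}=\E_P\tr\Par{\theta^*\theta^{*\top}PP^\top}=\norm{\theta^*}_2^2$, so $c=\norm{\theta^*}_2^2/d$. Hence $\E_P\inprod{v_i}{P^\top\theta^*}^2=\norm{\theta^*}_2^2/d$ for every $i$, and
\[
\oR_{X,\theta^*}(t)=\sigma^2+\frac{\norm{\theta^*}_2^2}{d}\sum_{i=1}^d e^{-2t\lambda_i}+\frac{\sigma^2}{n}\sum_{i:\lambda_i>0}\frac{\Par{1-e^{-t\lambda_i}}^2}{\lambda_i}.
\]
For the lower bound I would simply drop the nonnegative variance term; the surviving bias term, evaluated at $t=\tear$, is exactly the bias term of $\overline{R}_1$ (equivalently $\tfrac{\norm{\theta^*}_2^2}{d}\E_X[\tr\exp(-\tfrac{2\alpha}{n+d}X^\top X)]$, since $2\tear\lambda_i=\tfrac{2\alpha}{n+d}\mu_i$ where $\mu_i$ are the eigenvalues of $X^\top X$). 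For the upper bound, the excess $\E_X[\oR_{X,\theta^*}(\tear)]-\overline{R}_1$ is precisely the variance term. Using $(1-e^{-x})^2\le x(1-e^{-x})\le x$ with $x=\tear\lambda_i$ gives $\tfrac{(1-e^{-\tear\lambda_i})^2}{\lambda_i}\le\tear$, and since $\tfrac1n X^\top X$ has at most $\min(n,d)$ positive eigenvalues the variance is bounded by $\tfrac{\sigma^2}{n}\min(n,d)\,\tear=\alpha\sigma^2\tfrac{\min(n,d)}{n+d}\le\tfrac12\alpha\sigma^2$, where the last inequality uses $\min(n,d)\le\tfrac{n+d}{2}$. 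This produces the claimed $\oR_2=\overline{R}_1+\tfrac12\alpha\sigma^2$.

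For monotonicity it suffices to treat $\overline{R}_1$, since $\oR_2$ differs from it by a constant in $n$. I would write the nontrivial part as $\tfrac{\norm{\theta^*}_2^2}{d}\E_X[\tr\exp(-cX^\top X)]$ and invoke that $A\mapsto\tr\exp(-cA)$ is antitone in the Loewner order (Weyl monotonicity of eigenvalues together with the decay of $x\mapsto e^{-cx}$). Coupling $X_{n+1}$ to its first $n$ rows makes $X^\top X$ grow by the rank-one increment $x_{n+1}x_{n+1}^\top\succeq0$, which pushes the trace down pathwise and hence in expectation.

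The main obstacle is exactly this monotonicity claim, because $n$ enters $\overline{R}_1$ in two competing places: through the stopping time $\tear=\alpha n/(n+d)$ and through the law of $X$. The Loewner/coupling argument cleanly handles the growth of $X^\top X$, but the coefficient $c=\tfrac{2\alpha}{n+d}$ multiplying it simultaneously shrinks, which pulls the trace up, so one cannot merely invoke ``adding a Gaussian row increases $X^\top X$.'' Reconciling the two effects is where the real work lies — the cleanest route is to consolidate the entire $n$-dependence into a coefficient that is constant in $n$ (this is what the $\tr\exp(-\tfrac{2\alpha}{d}X^\top X)$ form accomplishes, making the trace decrease pathwise and the monotonicity immediate), while otherwise one must quantify the concentration of the spectrum of $\tfrac1n X^\top X$ around $\id_d$ to show the expected trace still decreases.
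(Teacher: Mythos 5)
Your risk decomposition, the lower bound obtained by dropping the variance term, and the upper bound via $(1-e^{-x})^2 \le x(1-e^{-x}) \le x$ together with $\min(n,d)\le\frac{n+d}{2}$ reproduce the paper's argument exactly (its risk lemma plus the first half of its proof of this proposition), so that part matches. The interesting point is the monotonicity step, and the obstacle you flag there is genuine rather than an artifact of your approach: the paper's own proof does not overcome it. The paper couples $X$ with $X'\in\R^{(n+1)\times d}$, defines $\lambda_i'$ as the eigenvalues of $\frac1n X'^\top X'$ (normalized by $n$, not $n+1$), and concludes with the chain $\E_{X}\Brack{\sum_i \exp\Par{-2\frac{\alpha n}{n+d}\lambda_i}} \ge \E_{X'}\Brack{\sum_i \exp\Par{-2\frac{\alpha n}{n+d}\lambda_i'}} \ge \E_{X'}\Brack{\sum_i \exp\Par{-2\frac{\alpha (n+1)}{n+1+d}\lambda_i'}}$. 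But the bias term of the risk at sample size $n+1$ involves the eigenvalues of $\frac{1}{n+1}X'^\top X'$, i.e.\ $\frac{n}{n+1}\lambda_i'$, so the quantity one must dominate is $\E_{X'}\Brack{\sum_i \exp\Par{-2\frac{\alpha n}{n+1+d}\lambda_i'}}$, which is \emph{larger} than the final term of the paper's chain. In other words, the paper silently compares the two sample sizes at the same normalization --- precisely the ``consolidate the $n$-dependence into a fixed coefficient'' move you describe --- which is valid for $\E\Brack{\sum_i \exp\Par{-c\,\mu_i}}$ with $c$ fixed and $\mu_i$ the eigenvalues of $X^\top X$, but does not by itself show that the bias at $\tear(n)$ dominates the bias at $\tear(n+1)$.

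This also explains the tension you noticed between the two readings of $\overline{R}_1$. With the proposition's literal exponent $-2\alpha n\lambda_i/d = -\frac{2\alpha}{d}\mu_i$, monotonicity is immediate by your Loewner/coupling argument, and the lower bound still holds since $\frac{2\alpha}{n+d}\le\frac{2\alpha}{d}$; but the upper bound $\E_X\Brack{\oR_{X,\theta^*}(\tear)}\le \overline{R}_1+\frac 1 2\alpha\sigma^2$ then fails in general, because the gap $\E\Brack{\sum_i \Par{e^{-2\alpha\mu_i/(n+d)}-e^{-2\alpha\mu_i/d}}}\frac{\norm{\theta^*}_2^2}{d}$ scales with $\norm{\theta^*}_2^2$ and is not controlled by $\frac 1 2\alpha\sigma^2$. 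With the exponent $-\frac{2\alpha}{n+d}\mu_i$ (the actual bias at $\tear$, which is what the paper's proof computes), both sandwich inequalities hold, but monotonicity would require, pathwise or in expectation, $\frac{\mu_i'}{n+1+d}\ge\frac{\mu_i}{n+d}$, which Cauchy interlacing alone does not give. So your proposal establishes everything the paper's proof actually establishes, and the step you could not close is exactly the step where the paper's proof is flawed; closing it under the $\frac{2\alpha}{n+d}$ reading would need a quantitative argument, e.g.\ concentration of the spectrum of $\frac1n X^\top X$, along the lines you sketch at the end.
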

Theorem~\ref{thm: over_optstopping} shows that the optimal early stopping time is roughly $\topt \approx \frac{\alpha n}{n+d}$ for a small constant $\alpha$ up the logarithmic factors. When $\frac{\norm{\theta}^*}{\sigma^2}$ is not too small, \eqref{eq: riskmono} gives a small region that bounds the expected risk at $\tear$. We are able to show that both the upper and the lower bounds of the region decrease as $n$ increases. This implies that { optimal early stopping can mitigate sample-wise double descent.} A related question is whether stopping at optimal early stopping time can mitigate the double descent due to increasing model size $d$. \cite{nakkiran2019deep} observed that double descent can still exist  at optimal early stopping.

For the under-informative parametrization setting, we can also approximately bound the risk at the optimal early stopping time. 

\begin{restatable}{proposition}{optriskunder}
	\label{thm:optrisk_under}
	In setting $\su$, for a fixed parameter $\theta^*$ and noise variance $\sigma^2$, let $\tear = \frac{{n\norm{\theta^*}_2^2}}{ {p\sigma^2 + \Par{p-d}\norm{\theta^*}_2^2} }$. Then, when $(8n+9d+16)\norm{\theta^*}_2^2\leq{p\sigma^2 + p\norm{\theta^*}_2^2}$, we have $$\overline{R}_1   \leq \oR_{\theta^*}( \tear) \leq   \overline{R}_2,$$
	where 
	$\overline{R}_1 = \sigma^2 + \norm{\theta^*}_2^2 - \frac { 2nd \norm{  \theta^*}_2^4}{p^2\sigma^2 + p\Par{p-d}\norm{\theta^*}_2^2} $ and $\overline{R}_2= \sigma^2 + \norm{\theta^*}_2^2 - \frac { 3nd \norm{  \theta^*}_2^4}{4\Par{p^2\sigma^2 + p\Par{p-d}\norm{\theta^*}_2^2}}. $
	
	Here, $ \overline{R}_1\geq 0.8\overline{R}_2$ and both $\overline{R}_1$ and $\overline{R}_2$ decrease when $d$ or $n$ increases.
	
\end{restatable}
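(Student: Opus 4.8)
The plan is to first derive an exact closed form for $\oR_{\theta^*}(t)$ and then evaluate and bound it at $t=\tear$. In $\su$ a test point satisfies $\tx = P^\top\tz$ with $\tz\sim\mathcal N(0,\id_p)$ and $\ty=\inprod{\tz}{\theta^*}+\teps$, so the population risk collapses to $R(\beta)=\sigma^2+\norm{P\beta-\theta^*}_2^2$ and it suffices to control $\E\norm{P\beta_{X,y}(t)-\theta^*}_2^2$. I would exploit three structural facts. First, since $P$ has orthonormal columns, $X=ZP$ has i.i.d.\ $\mathcal N(0,1)$ entries and is independent of $P$; writing $y=X(P^\top\theta^*)+\xi$, the residual $\xi=Z(\id_p-PP^\top)\theta^*+\eps$ is independent of $X$ with per-coordinate variance $\sigma^2+\norm{(\id_p-PP^\top)\theta^*}_2^2$. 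Second, substituting \eqref{eq:gradflowsol} and using $(X^\top X)^+X^\top X=\id_d$ (full column rank a.s.), the error splits orthogonally into a bias piece $\norm{(\id_p-PP^\top)\theta^*}_2^2+\theta^{*\top}P\exp(-2tX^\top X/n)P^\top\theta^*$ and, after taking $\E_\xi$, a variance piece $\tfrac{1}{n}\Par{\sigma^2+\norm{(\id_p-PP^\top)\theta^*}_2^2}\sum_i(1-e^{-t\lambda_i})^2/\lambda_i$, where $\lambda_1\ge\cdots\ge\lambda_d$ are the eigenvalues of $\tfrac1nX^\top X$. Third, by rotational invariance of the i.i.d.\ Gaussian $X$, $\E_X\exp(-2tX^\top X/n)=\tfrac1d\E_X\Brack{\sum_i e^{-2t\lambda_i}}\id_d$, while over a uniform semi-orthogonal $P$, $\E_P[PP^\top]=\tfrac dp\id_p$. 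Factoring the $P$-average out of the trace (valid since $X\perp P$ and the $X$-average is $P$-free) yields
\[
\oR_{\theta^*}(t)=\sigma^2+\frac{p-d}{p}\norm{\theta^*}_2^2+\frac{\norm{\theta^*}_2^2}{p}\E_X\Brack{\sum_{i=1}^d e^{-2t\lambda_i}}+\frac{p\sigma^2+(p-d)\norm{\theta^*}_2^2}{pn}\E_X\Brack{\sum_{i=1}^d\frac{(1-e^{-t\lambda_i})^2}{\lambda_i}}.
\]

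With this formula the two-sided bound reduces to sandwiching the two spectral sums at $t=\tear$, using the elementary inequalities $x-\tfrac{x^2}{2}\le 1-e^{-x}\le x$ and the Wishart moments $\E_X\sum_i\lambda_i=\E\tr(\tfrac1nX^\top X)=d$ and $\E_X\sum_i\lambda_i^2=\E\tr(\tfrac1nX^\top X)^2=\tfrac{d(n+d+1)}{n}$. For $\oR_1\le\oR_{\theta^*}(\tear)$, I upper-bound $1-e^{-2t\lambda_i}\le 2t\lambda_i$, so the negative bias contribution is at most $\tfrac{\norm{\theta^*}_2^2}{p}\cdot 2\tear d$, which with $\tear=\tfrac{n\norm{\theta^*}_2^2}{p\sigma^2+(p-d)\norm{\theta^*}_2^2}$ equals exactly the reduction $\tfrac{2nd\norm{\theta^*}_2^4}{p^2\sigma^2+p(p-d)\norm{\theta^*}_2^2}$ in $\oR_1$; since the variance term only adds, dropping it gives $\oR_{\theta^*}(\tear)\ge\oR_1$. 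For $\oR_{\theta^*}(\tear)\le\oR_2$, I lower-bound $1-e^{-2t\lambda_i}\ge 2t\lambda_i-2t^2\lambda_i^2$ and upper-bound $(1-e^{-t\lambda_i})^2/\lambda_i\le t^2\lambda_i$. The leading negative term is again the $2$-unit reduction, the variance term contributes exactly one unit of $\tfrac{nd\norm{\theta^*}_2^4}{p^2\sigma^2+p(p-d)\norm{\theta^*}_2^2}$, and the quadratic bias correction contributes a higher-order term of size $\propto(n+d+1)\norm{\theta^*}_2^2/(p\sigma^2+(p-d)\norm{\theta^*}_2^2)$; the hypothesis $(8n+9d+16)\norm{\theta^*}_2^2\le p\sigma^2+p\norm{\theta^*}_2^2$ is precisely what keeps this last term below $\tfrac14$ of a unit, leaving a net reduction of at least $\tfrac34$ units, i.e.\ $\oR_{\theta^*}(\tear)\le\oR_2$.

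The two auxiliary claims are then elementary. Both $\oR_1$ and $\oR_2$ have the shape $\sigma^2+\norm{\theta^*}_2^2$ minus a positive multiple of $\tfrac{nd\norm{\theta^*}_2^4}{p^2\sigma^2+p(p-d)\norm{\theta^*}_2^2}$; the same hypothesis (which forces this reduction to be small relative to $\sigma^2+\norm{\theta^*}_2^2$) yields $\oR_1\ge 0.8\,\oR_2$ by a direct comparison of the two coefficients $2$ and $\tfrac34$. Monotonicity is immediate from the closed form: increasing $n$ enlarges the reduction's numerator with the denominator fixed, while increasing $d$ enlarges the numerator and simultaneously shrinks the denominator $p^2\sigma^2+p(p-d)\norm{\theta^*}_2^2$, so in both cases the reduction grows and $\oR_1,\oR_2$ decrease.

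\textbf{Main obstacle.} The delicate step is assembling the exact risk formula correctly through the intertwined randomness of $Z$, $P$, and $\eps$—in particular verifying the independence of the effective residual $\xi$ from $X$, the orthogonal bias/variance split, and the factorization of the $P$-average via rotational invariance. Once the closed form is secured, the remainder is a careful but routine matching of constants, where the only subtle point is recognizing that the variance term evaluated at $\tear$ is itself first order: it accounts for the entire gap between the $2$ and $\tfrac34$ coefficients and therefore cannot be discarded as a lower-order correction.
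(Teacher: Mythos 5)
Your proposal is correct and follows essentially the same route as the paper: your exact risk formula is precisely the paper's Lemma~\ref{lem:underrisk} (same bias/variance split, same use of independence of the residual from $X$ and of rotational invariance), and your evaluation at $\tear$ via $1-x \le e^{-x} \le 1-x+\tfrac{1}{2}x^2$, the Wishart trace moments, and the hypothesis forcing the quadratic bias correction below a quarter of the unit $\frac{nd\norm{\theta^*}_2^4}{p^2\sigma^2+p(p-d)\norm{\theta^*}_2^2}$ reproduces the paper's proof of Proposition~\ref{thm:optrisk_under}, including the observation that the variance term contributes exactly one unit. The only harmless discrepancies are that your claim $(X^\top X)^+X^\top X=\id_d$ requires $n\ge d$ (the paper handles possible rank deficiency with indicators, and zero eigenvalues do not affect any of the bounds), and that you use $\E\bigl[\sum_i\lambda_i^2\bigr]=\tfrac{d}{n}(n+d+1)$ where the paper's Lemma~\ref{lem:eigenexpect} states $\tfrac{d}{n}(n+d+2)$; either constant suffices under the stated hypothesis.
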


Proposition~\ref{thm:optrisk_under} gives an upper bound and a lower bound on the risk at the approximated optimal early stopping time. Under the assumption $d$ and $n$ are small, $\oR_1$ and $\oR_2$ give a tight region ($ \overline{R}_1\geq 0.8\overline{R}_2$). Proposition~\ref{thm:optrisk_under} supports the observation that { risk at optimal early stopping time monotonically decreases as $n$ or $d$ increases}. 
We give additional numerical experiments for the results in this section in Section~\ref{sec:exp_appendix}. %

\section{Experiment}

\label{sec:experiment}

\begin{figure*}[h!]
\centering
\includegraphics[width=0.95\linewidth]{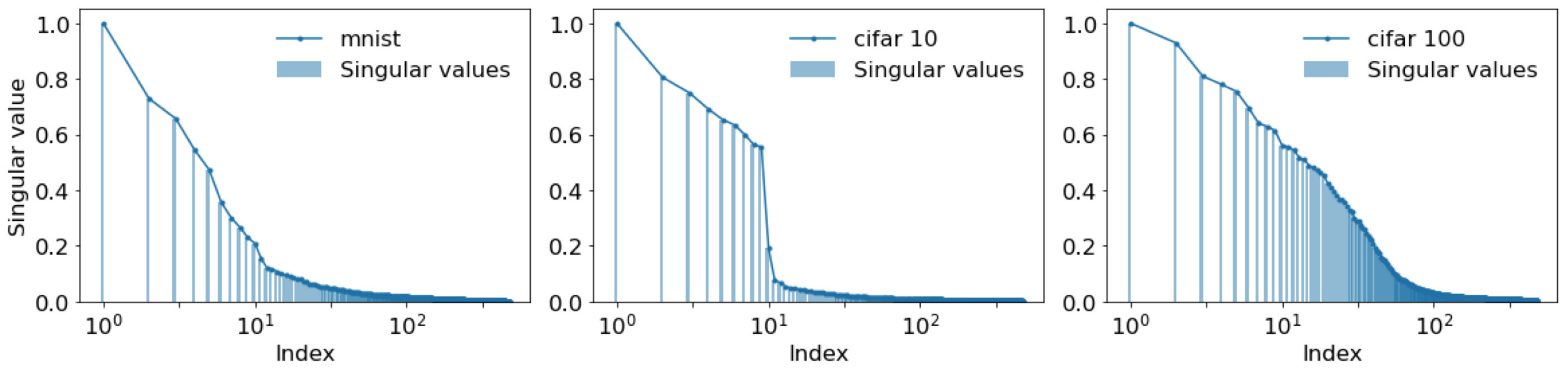}
\caption{Scaled singular values of representations obtained from MNIST, CIFAR-10 and CIFAR-100.}
\label{fig:eigval2}
\end{figure*}

\begin{figure*}[h!]
\centering

\includegraphics[width=0.98\linewidth]{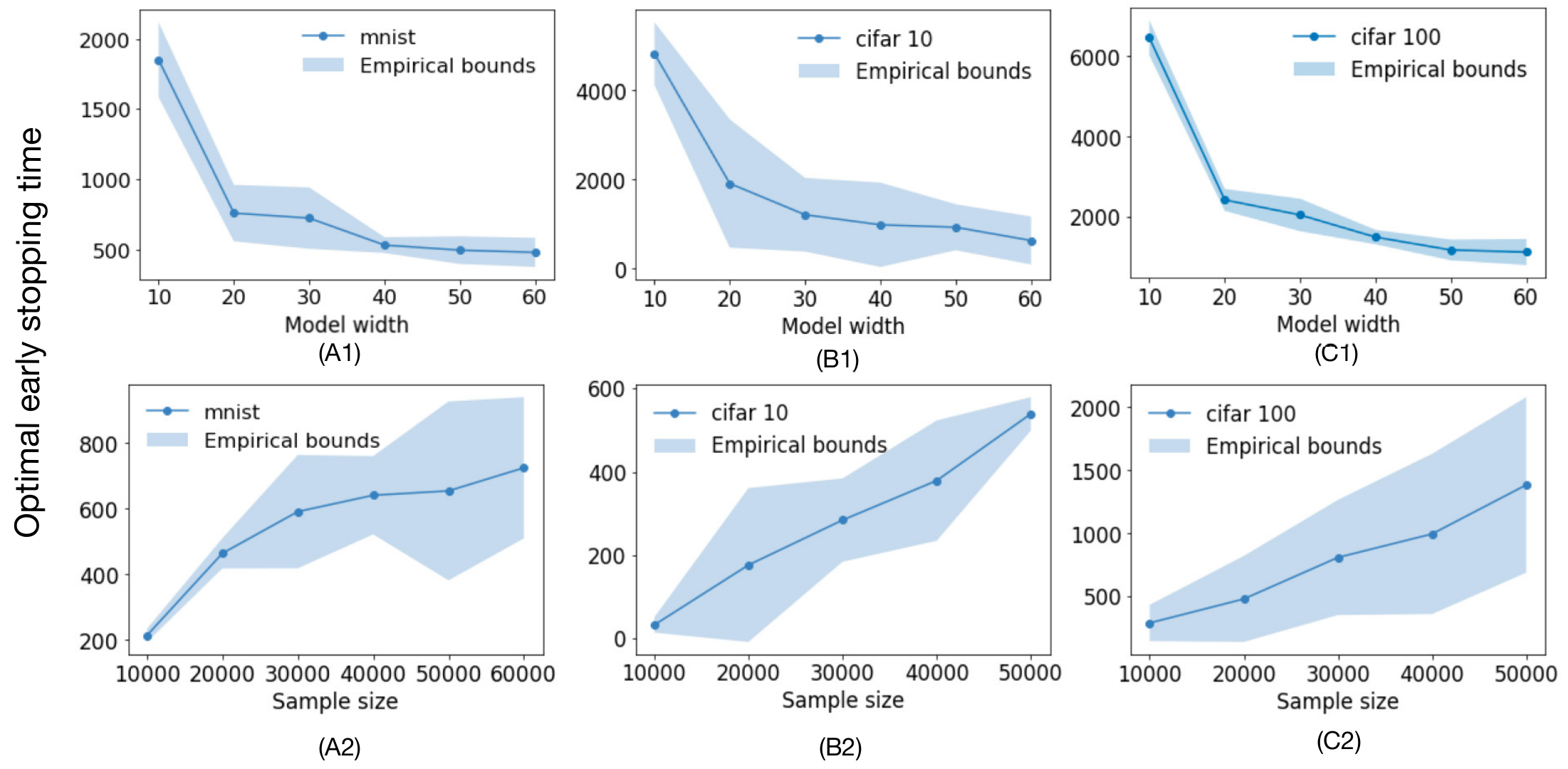}
\caption{In (A1) (B1) (C1), optimal early stopping time decreases with increasing model sizes for MNIST, CIFAR-10, and CIFAR-100. In (A2) (B2) (C2), optimal early stopping time increases with increasing sample sizes for MNIST, CIFAR-10, and CIFAR-100. For MNIST, we use a simple ReLU network with one hidden layer. We vary the hidden layer width over $d=[10, 20, ..., 60]$. For CIFAR-10/100, we use a family of ResNet networks with convolutional layers of widths $[d, 2d, 4d, 8d]$ for different layer depths. The sample size ranges from $n=[1e^5, 2e^5, ..., 6e^5]$ for MNIST, and $n=[1e^5, 2e^5, ..., 5e^5]$ for CIFAR-10/100. Inputs are normalized to $[-1, 1]$ with standard data-augmentation. We use stochastic gradient descent with cross-entropy loss, learning rate $\eta=0.1$ for MNIST, CIFAR-10, and $\eta=0.05$ for CIFAR-100, and minibatch size $B=512$. The optimal stopping time is computed as $t_{opt}=\eta T\frac{n}{B}$, where $T$ is the epoch with the
optimal early stopping performance.}
\label{fig:cifar_ci}

\end{figure*}

\begin{figure*}[h]
\centering

\includegraphics[width=0.65\linewidth]{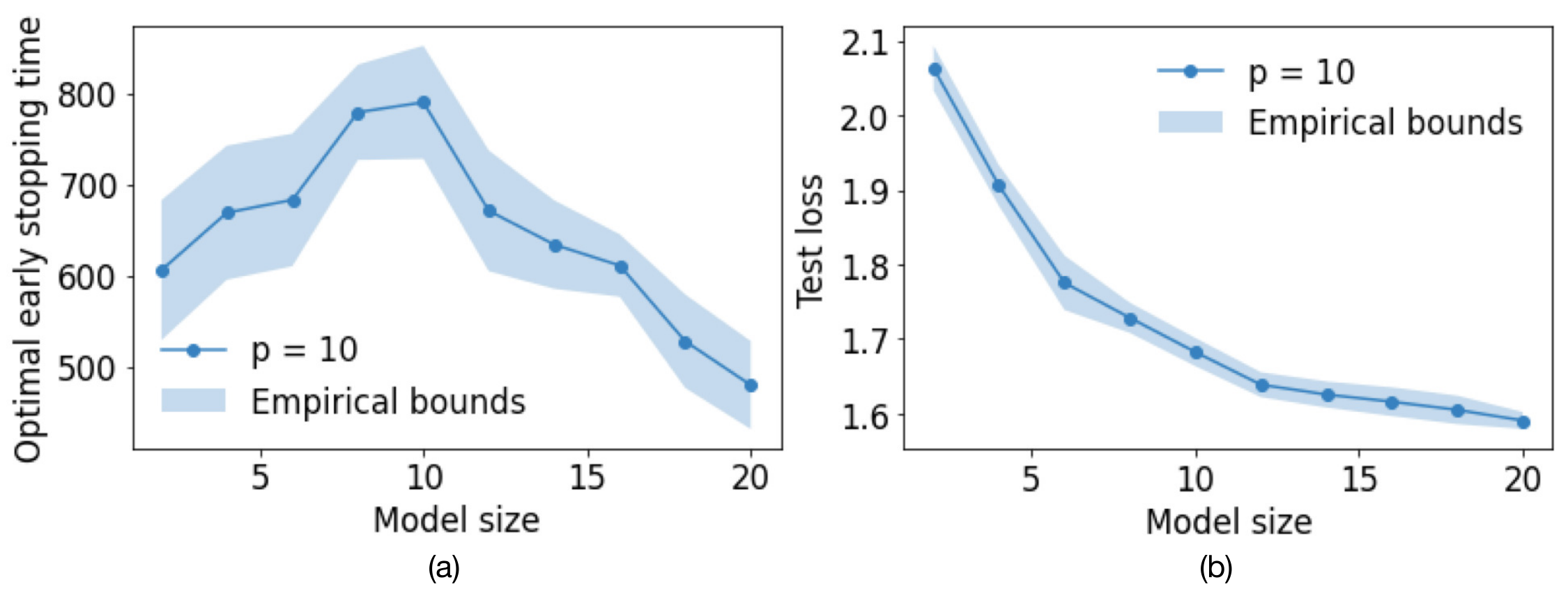}
\caption{(a) Optimal stopping time increases and then decreases as the model size increases. (b) Test loss decreases at the optimal early stopping time decreases as the model size increases.}
\label{fig:simulation_results}
\end{figure*}

 We first demonstrate that datasets MNIST, CIFAR-10 and CIFAR-100 have low-rank representations so that training large neural networks on them typically falls under the over-informative setting. We show that training on those dataset, the optimal early stopping time follows the pattern we identify for the overparametrization setting in Section~\ref{sec:exp_deep}. In Section~\ref{sec:exp_twolayer}, we show that for simple two-layer neural networks where we are able to observe under-informative behavior, we are able to observe a transition in the behavior of the optimal early stopping time when the model dimension exceeds the feature number. We provide additional experimental results omitted in this section in Appendix~\ref{sec:exp_appendix}.

\subsection{Classification using deep neural networks}
\label{sec:exp_deep}

\paragraph{Existence of low-rank structure in image classification. }

Many previous works observe that low-rank structures exist and can be leveraged in deep image classification tasks in various ways~\citep{van2008visualizing,dusenberry2020efficient}. 
This structure facilitates the self-supervised learning and transfer learning paradigm where image features are learned and fixed, reducing the complexity of image-to-label mapping. Then only the last layer is trained to adapt to and classify out-of-domain images~\citep{chen2020simple}.
We follow this approach and demonstrate the existence of the low-rank structures in Figure \ref{fig:eigval2}, where we plot the singular values of the representations of MNIST, CIFAR-10, and CIFAR-100. First, we train CNN/ResNet-60 \citep{he2016deep,he2016identity} to convergence with zero training loss (no label noise). Then, we obtain the representations produced by the trained neural networks on MNIST, CIFAR-10, and CIFAR-100. Even when trained with a very wide neural network, only the first 10/100 singular values of the representations are significant compared to the others. This suggests that low-rank feature representations of the datasets exist.

\paragraph{Over-informative parameterization behavior in image classification using neural networks}
\label{sec:exp_network_opt}

We show the optimal early stopping time of MNIST, CIFAR-10, and CIFAR-100. 
We train MNIST using a fully connected network, and CIFAR-10/100 using ResNet. We add 20$\%$ of label noise to all datasets. We provide additional results on different label noise levels in Appendix \ref{sec:exp_appendix}. To select the optimal stopping time, we first apply a moving averaging window with length 5 to mitigate the effect of mini-batch noise. After smoothing, we select the point with the optimal stopping time with the first minimal testing error (the lowest point of the first valley, ignoring small local minima).

We show in Figure \ref{fig:cifar_ci} the optimal early stopping time for all three datasets with varying $n$ and $d$. In (A1) (A2) (A3), we observe that the optimal early stopping time decreases with increasing model width. 
In (B1) (B2) (B3), we observe that the optimal early stopping time increases with larger sample sizes. 
Due to the existence of low-rank representations, Figure \ref{fig:cifar_ci} falls under the over-informative parametrization setting even for small $d$.
The observations are consistent with our theoretical analysis. When trained with ultra small $d<10$, the models show great instability. We provide additional experimental results supporting the under-informative setting in Section~\ref{sec:exp_twolayer}.

\subsection{Classification using two-layer neural networks}

\label{sec:exp_twolayer}

Training datasets like CIFAR-10/100 using ultra small models is unstable, so in this section, we use simpler datasets to study the under-informative setting. In this experiment, we use $\textit{make}\textunderscore\textit{classification}$ function in \textit{sklearn} \citep{scikit-learn} to generate clusters of points normally distributed around vertices of a high-dimensional hypercube and assigns an equal number of clusters to each class. We generate $2000$ samples (50-50 train-test split) of $20$ dimensions/features, but only $p=10$ dimensions/features of each sample are informative features. We add 20\% label noise to the dataset. The optimal stopping time is selected similarly as in Section \ref{sec:exp_deep}. 

We train the data using two-layer ReLU networks with varying hidden layer width $d$. The blue area in Figure \ref{fig:simulation_results} gives $0.5$ standard deviation of 6 runs and the blue line gives the mean. In Figure \ref{fig:simulation_results}a, when $d\leq p=10$, we observe the under-informative behavior where the optimal early stopping time increases as the model size $d$ increases. When $d\geq 10 = p$, the optimal stopping time decreases with $d$. We can further observe that the test loss decreases with increasing model sizes at the optimal early stopping time. 

\section{Conclusion and future work}

In this paper, we propose a new model of over-informative parametrization
to study the optimal early stopping time and establish its difference from the under-informative parametrization. This model captures the phenomenon that the model size usually exceeds the number of important underlying features in practice. We believe this model can be of independent interest. One future direction is to study the performance of other optimizers or techniques on this model. 
Moreover, we give an explicit characterization of the early stopping time and the resulting model. Future works can explore other regularization properties of this early stopped model, e.g., whether this early stopping helps when there is a domain shift in the testing dataset. 

Another direction worth exploring is to extend our results to nonlinear settings. Extending our result to the lazy training regime of neural networks can be relatively straightforward. However, theoretical analysis of the non-lazy training is still a largely open question, so extending this result to non-lazy setting can be more challenging. To extend this result to the non-lazy setting for a fully nonlinear neural network model, one needs to study how the features extracted change and how the weights on the features change at the same time.

\section*{Acknowledgments}
This work is supported in part by the National Science Foundation Grants NSF-SCALE MoDL(2134209) and NSF-CCF-2112665 (TILOS), the U.S. Department Of Energy, Office of Science, and the Facebook Research award. 
\bibliography{refs.bib}

\begin{thebibliography}{40}
\providecommand{\natexlab}[1]{#1}
\providecommand{\url}[1]{\texttt{#1}}
\expandafter\ifx\csname urlstyle\endcsname\relax
  \providecommand{\doi}[1]{doi: #1}\else
  \providecommand{\doi}{doi: \begingroup \urlstyle{rm}\Url}\fi

\bibitem[Ali et~al.(2019)Ali, Kolter, and Tibshirani]{ali2019continuous}
Alnur Ali, J~Zico Kolter, and Ryan~J Tibshirani.
\newblock A continuous-time view of early stopping for least squares
  regression.
\newblock In \emph{The 22nd International Conference on Artificial Intelligence
  and Statistics}, pages 1370--1378. PMLR, 2019.

\bibitem[Ali et~al.(2020)Ali, Dobriban, and Tibshirani]{ali2020sgd}
Alnur Ali, Edgar Dobriban, and Ryan Tibshirani.
\newblock The implicit regularization of stochastic gradient flow for least
  squares.
\newblock In \emph{Proceedings of the 37th International Conference on Machine
  Learning}, pages 233--244. PMLR, 2020.

\bibitem[Bartlett et~al.(2020)Bartlett, Long, Lugosi, and
  Tsigler]{bartlett2020benign}
Peter~L Bartlett, Philip~M Long, G{\'a}bor Lugosi, and Alexander Tsigler.
\newblock Benign overfitting in linear regression.
\newblock \emph{Proceedings of the National Academy of Sciences}, 117\penalty0
  (48):\penalty0 30063--30070, 2020.

\bibitem[Belkin et~al.(2018)Belkin, Hsu, Ma, and Mandal]{belkin2018reconciling}
Mikhail Belkin, Daniel Hsu, Siyuan Ma, and Soumik Mandal.
\newblock Reconciling modern machine learning practice and the bias-variance
  trade-off.
\newblock \emph{arXiv preprint arXiv:1812.11118}, 2018.

\bibitem[Belkin et~al.(2020)Belkin, Hsu, and Xu]{belkin2020two}
Mikhail Belkin, Daniel Hsu, and Ji~Xu.
\newblock Two models of double descent for weak features.
\newblock \emph{SIAM Journal on Mathematics of Data Science}, 2\penalty0
  (4):\penalty0 1167--1180, 2020.

\bibitem[Blanc et~al.(2020)Blanc, Gupta, Valiant, and
  Valiant]{blanc2020implicit}
Guy Blanc, Neha Gupta, Gregory Valiant, and Paul Valiant.
\newblock Implicit regularization for deep neural networks driven by an
  ornstein-uhlenbeck like process.
\newblock In \emph{Conference on learning theory}, pages 483--513. PMLR, 2020.

\bibitem[Chen et~al.(2020{\natexlab{a}})Chen, Min, Belkin, and
  Karbasi]{chen2020multiple}
Lin Chen, Yifei Min, Mikhail Belkin, and Amin Karbasi.
\newblock Multiple descent: Design your own generalization curve.
\newblock \emph{arXiv preprint arXiv:2008.01036}, 2020{\natexlab{a}}.

\bibitem[Chen et~al.(2020{\natexlab{b}})Chen, Kornblith, Norouzi, and
  Hinton]{chen2020simple}
Ting Chen, Simon Kornblith, Mohammad Norouzi, and Geoffrey Hinton.
\newblock A simple framework for contrastive learning of visual
  representations.
\newblock In \emph{International conference on machine learning}, pages
  1597--1607. PMLR, 2020{\natexlab{b}}.

\bibitem[Davidson and Szarek(2001)]{davidson2001local}
Kenneth~R Davidson and Stanislaw~J Szarek.
\newblock Local operator theory, random matrices and banach spaces.
\newblock \emph{Handbook of the geometry of Banach spaces}, 1\penalty0
  (317-366):\penalty0 131, 2001.

\bibitem[Derezi{\'n}ski et~al.(2019)Derezi{\'n}ski, Liang, and
  Mahoney]{derezinski2019exact}
Micha{\l} Derezi{\'n}ski, Feynman Liang, and Michael~W Mahoney.
\newblock Exact expressions for double descent and implicit regularization via
  surrogate random design.
\newblock \emph{arXiv preprint arXiv:1912.04533}, 2019.

\bibitem[Dobriban and Sheng(2020)]{dobriban2020wonder}
Edgar Dobriban and Yue Sheng.
\newblock Wonder: Weighted one-shot distributed ridge regression in high
  dimensions.
\newblock \emph{Journal of Machine Learning Research}, 21\penalty0
  (66):\penalty0 1--52, 2020.

\bibitem[Dobriban et~al.(2018)Dobriban, Wager, et~al.]{dobriban2018high}
Edgar Dobriban, Stefan Wager, et~al.
\newblock High-dimensional asymptotics of prediction: Ridge regression and
  classification.
\newblock \emph{The Annals of Statistics}, 46\penalty0 (1):\penalty0 247--279,
  2018.

\bibitem[Du et~al.(2018)Du, Hu, and Lee]{du2018algorithmic}
Simon~S Du, Wei Hu, and Jason~D Lee.
\newblock Algorithmic regularization in learning deep homogeneous models:
  Layers are automatically balanced.
\newblock \emph{arXiv preprint arXiv:1806.00900}, 2018.

\bibitem[Dusenberry et~al.(2020)Dusenberry, Jerfel, Wen, Ma, Snoek, Heller,
  Lakshminarayanan, and Tran]{dusenberry2020efficient}
Michael Dusenberry, Ghassen Jerfel, Yeming Wen, Yian Ma, Jasper Snoek,
  Katherine Heller, Balaji Lakshminarayanan, and Dustin Tran.
\newblock Efficient and scalable bayesian neural nets with rank-1 factors.
\newblock In \emph{International conference on machine learning}, pages
  2782--2792. PMLR, 2020.

\bibitem[HaoChen et~al.(2020)HaoChen, Wei, Lee, and Ma]{haochen2020shape}
Jeff~Z HaoChen, Colin Wei, Jason~D Lee, and Tengyu Ma.
\newblock Shape matters: Understanding the implicit bias of the noise
  covariance.
\newblock \emph{arXiv preprint arXiv:2006.08680}, 2020.

\bibitem[Hastie et~al.(2019)Hastie, Montanari, Rosset, and
  Tibshirani]{hastie2019surprises}
Trevor Hastie, Andrea Montanari, Saharon Rosset, and Ryan~J Tibshirani.
\newblock Surprises in high-dimensional ridgeless least squares interpolation.
\newblock \emph{arXiv preprint arXiv:1903.08560}, 2019.

\bibitem[He et~al.(2016{\natexlab{a}})He, Zhang, Ren, and Sun]{he2016deep}
Kaiming He, Xiangyu Zhang, Shaoqing Ren, and Jian Sun.
\newblock Deep residual learning for image recognition.
\newblock In \emph{Proceedings of the IEEE conference on computer vision and
  pattern recognition}, pages 770--778, 2016{\natexlab{a}}.

\bibitem[He et~al.(2016{\natexlab{b}})He, Zhang, Ren, and Sun]{he2016identity}
Kaiming He, Xiangyu Zhang, Shaoqing Ren, and Jian Sun.
\newblock Identity mappings in deep residual networks.
\newblock In \emph{European conference on computer vision}, pages 630--645.
  Springer, 2016{\natexlab{b}}.

\bibitem[Horn and Johnson(2012)]{horn2012matrix}
Roger~A Horn and Charles~R Johnson.
\newblock \emph{Matrix analysis}.
\newblock Cambridge university press, 2012.

\bibitem[Ioffe and Szegedy(2015)]{ioffe2015batch}
Sergey Ioffe and Christian Szegedy.
\newblock Batch normalization: Accelerating deep network training by reducing
  internal covariate shift.
\newblock In \emph{International conference on machine learning}, pages
  448--456. PMLR, 2015.

\bibitem[Kobak et~al.(2020)Kobak, Lomond, and Sanchez]{kobak2020optimal}
Dmitry Kobak, Jonathan Lomond, and Benoit Sanchez.
\newblock The optimal ridge penalty for real-world high-dimensional data can be
  zero or negative due to the implicit ridge regularization.
\newblock \emph{Journal of Machine Learning Research}, 21\penalty0
  (169):\penalty0 1--16, 2020.

\bibitem[Li et~al.(2018)Li, Farkhoor, Liu, and Yosinski]{li2018measuring}
Chunyuan Li, Heerad Farkhoor, Rosanne Liu, and Jason Yosinski.
\newblock Measuring the intrinsic dimension of objective landscapes.
\newblock \emph{arXiv preprint arXiv:1804.08838}, 2018.

\bibitem[Li et~al.(2020)Li, Soltanolkotabi, and Oymak]{li2020gradient}
Mingchen Li, Mahdi Soltanolkotabi, and Samet Oymak.
\newblock Gradient descent with early stopping is provably robust to label
  noise for overparameterized neural networks.
\newblock In \emph{International Conference on Artificial Intelligence and
  Statistics}, pages 4313--4324. PMLR, 2020.

\bibitem[Mei and Montanari(2019)]{mei2019generalization}
Song Mei and Andrea Montanari.
\newblock The generalization error of random features regression: Precise
  asymptotics and double descent curve.
\newblock \emph{arXiv preprint arXiv:1908.05355}, 2019.

\bibitem[Mitra(2019)]{mitra2019understanding}
Partha~P Mitra.
\newblock Understanding overfitting peaks in generalization error: Analytical
  risk curves for $ l\_2 $ and $ l\_1 $ penalized interpolation.
\newblock \emph{arXiv preprint arXiv:1906.03667}, 2019.

\bibitem[Montanari et~al.(2019)Montanari, Ruan, Sohn, and
  Yan]{montanari2019generalization}
Andrea Montanari, Feng Ruan, Youngtak Sohn, and Jun Yan.
\newblock The generalization error of max-margin linear classifiers:
  High-dimensional asymptotics in the overparametrized regime.
\newblock \emph{arXiv preprint arXiv:1911.01544}, 2019.

\bibitem[Morgan and Bourlard(1989)]{morgan1989generalization}
Nelson Morgan and Herv{\'e} Bourlard.
\newblock Generalization and parameter estimation in feedforward nets: Some
  experiments.
\newblock \emph{Advances in neural information processing systems}, 2:\penalty0
  630--637, 1989.

\bibitem[Muthukumar et~al.(2020)Muthukumar, Vodrahalli, Subramanian, and
  Sahai]{muthukumar2020harmless}
Vidya Muthukumar, Kailas Vodrahalli, Vignesh Subramanian, and Anant Sahai.
\newblock Harmless interpolation of noisy data in regression.
\newblock \emph{IEEE Journal on Selected Areas in Information Theory},
  1\penalty0 (1):\penalty0 67--83, 2020.

\bibitem[Nakkiran et~al.(2020{\natexlab{a}})Nakkiran, Kaplun, Bansal, Yang,
  Barak, and Sutskever]{nakkiran2019deep}
Preetum Nakkiran, Gal Kaplun, Yamini Bansal, Tristan Yang, Boaz Barak, and Ilya
  Sutskever.
\newblock Deep double descent: Where bigger models and more data hurt.
\newblock In \emph{International Conference on Learning Representations},
  2020{\natexlab{a}}.

\bibitem[Nakkiran et~al.(2020{\natexlab{b}})Nakkiran, Venkat, Kakade, and
  Ma]{nakkiran2020optimal}
Preetum Nakkiran, Prayaag Venkat, Sham Kakade, and Tengyu Ma.
\newblock Optimal regularization can mitigate double descent.
\newblock \emph{arXiv preprint arXiv:2003.01897}, 2020{\natexlab{b}}.

\bibitem[Pedregosa et~al.(2011)Pedregosa, Varoquaux, Gramfort, Michel, Thirion,
  Grisel, Blondel, Prettenhofer, Weiss, Dubourg, Vanderplas, Passos,
  Cournapeau, Brucher, Perrot, and Duchesnay]{scikit-learn}
F.~Pedregosa, G.~Varoquaux, A.~Gramfort, V.~Michel, B.~Thirion, O.~Grisel,
  M.~Blondel, P.~Prettenhofer, R.~Weiss, V.~Dubourg, J.~Vanderplas, A.~Passos,
  D.~Cournapeau, M.~Brucher, M.~Perrot, and E.~Duchesnay.
\newblock Scikit-learn: Machine learning in {P}ython.
\newblock \emph{Journal of Machine Learning Research}, 12:\penalty0 2825--2830,
  2011.

\bibitem[Razin and Cohen(2020)]{razin2020implicit}
Noam Razin and Nadav Cohen.
\newblock Implicit regularization in deep learning may not be explainable by
  norms.
\newblock \emph{arXiv preprint arXiv:2005.06398}, 2020.

\bibitem[Srivastava et~al.(2014)Srivastava, Hinton, Krizhevsky, Sutskever, and
  Salakhutdinov]{srivastava2014dropout}
Nitish Srivastava, Geoffrey Hinton, Alex Krizhevsky, Ilya Sutskever, and Ruslan
  Salakhutdinov.
\newblock Dropout: a simple way to prevent neural networks from overfitting.
\newblock \emph{The journal of machine learning research}, 15\penalty0
  (1):\penalty0 1929--1958, 2014.

\bibitem[Van~der Maaten and Hinton(2008)]{van2008visualizing}
Laurens Van~der Maaten and Geoffrey Hinton.
\newblock Visualizing data using t-sne.
\newblock \emph{Journal of machine learning research}, 9\penalty0 (11), 2008.

\bibitem[Vaskevicius et~al.(2019)Vaskevicius, Kanade, and
  Rebeschini]{vaskevicius2019implicit}
Tomas Vaskevicius, Varun Kanade, and Patrick Rebeschini.
\newblock Implicit regularization for optimal sparse recovery.
\newblock \emph{Advances in Neural Information Processing Systems},
  32:\penalty0 2972--2983, 2019.

\bibitem[Va{\v{s}}kevi{\v{c}}ius et~al.(2020)Va{\v{s}}kevi{\v{c}}ius, Kanade,
  and Rebeschini]{vavskevivcius2020statistical}
Tomas Va{\v{s}}kevi{\v{c}}ius, Varun Kanade, and Patrick Rebeschini.
\newblock The statistical complexity of early-stopped mirror descent.
\newblock \emph{arXiv preprint arXiv:2002.00189}, 2020.

\bibitem[Yao et~al.(2007)Yao, Rosasco, and Caponnetto]{yao2007early}
Yuan Yao, Lorenzo Rosasco, and Andrea Caponnetto.
\newblock On early stopping in gradient descent learning.
\newblock \emph{Constructive Approximation}, 26\penalty0 (2):\penalty0
  289--315, 2007.

\bibitem[Zeiler and Fergus(2014)]{zeiler2014visualizing}
Matthew~D Zeiler and Rob Fergus.
\newblock Visualizing and understanding convolutional networks.
\newblock In \emph{European conference on computer vision}, pages 818--833.
  Springer, 2014.

\bibitem[Zhang(2002)]{zhang2002covering}
Tong Zhang.
\newblock Covering number bounds of certain regularized linear function
  classes.
\newblock \emph{Journal of Machine Learning Research}, 2\penalty0
  (Mar):\penalty0 527--550, 2002.

\bibitem[Zhang and Yu(2005)]{zhang2005boosting}
Tong Zhang and Bin Yu.
\newblock Boosting with early stopping: Convergence and consistency.
\newblock \emph{The Annals of Statistics}, 33\penalty0 (4):\penalty0
  1538--1579, 2005.

\end{thebibliography}
\bibliographystyle{plainnat}

\newpage

\appendix

\section{Deferred Proof}
\subsection{Proof of Theorem~\ref{thm: over_optstopping}}

We first compute the expected risk in the setting $\so$. 

\begin{restatable}{lemma}{overrisk}
	\label{lem:overrisk}
	For all $n, p, d \in \mathbb{N}$ such that $p \leq d$, let $X\in \R^{n\times d}$ and $\theta^* \in \R^p$ be fixed. Let $\lambda_1 \geq ...\geq \lambda_d$ be the eigenvalues of the matrix $\frac 1 n X^\top X$. Then, 
	\begin{align}
		\label{eq:overrisk}
		\oR_{X, \theta^*}(t) = \sigma^2 + \sum_{i=1}^d \exp\Par{-2t\lambda_i}\frac { \norm{\theta^*}_2^2}d +  \frac{\sigma^2}{n} \sum_{i=1}^d \indi\Brace{\lambda_i \neq 0} \frac 1{\lambda_i} \Par{1-\exp(-t\lambda_i)}^2.
	\end{align}
\end{restatable}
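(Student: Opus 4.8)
## Proof Proposal for Lemma~\ref{lem:overrisk}

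The plan is to compute the population risk $R(\beta)$ for an arbitrary estimator $\beta$, substitute the gradient flow solution $\beta_{X,y}(t)$ from~\eqref{eq:gradflowsol}, and then take the expectation over the label noise $\epsilon$ and the random semi-orthogonal matrix $P$. First I would observe that for a test point $(\tx, \ty)$ with $\ty = \inprod{P\tx}{\theta^*} + \teps$, the population risk decomposes as $R(\beta) = \E_{\tx, \teps}[(\inprod{\tx}{\beta} - \inprod{P\tx}{\theta^*} - \teps)^2]$. Since $\tx \sim \mathcal{N}(0, \id_d)$ is isotropic and $\teps$ is independent mean-zero with variance $\sigma^2$, the cross terms involving $\teps$ vanish and the risk becomes $R(\beta) = \sigma^2 + \norm{\beta - P^\top \theta^*}_2^2$, using that $\E[\tx \tx^\top] = \id_d$ and $\inprod{P\tx}{\theta^*} = \inprod{\tx}{P^\top\theta^*}$. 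This reduces the problem to controlling the squared distance between the gradient flow iterate and the target $P^\top \theta^*$.

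Next I would plug in the solution~\eqref{eq:gradflowsol}. Writing $y = XP^\top\theta^* + \epsilon$ (since $y_i = \inprod{Px_i}{\theta^*} + \eps_i = \inprod{x_i}{P^\top\theta^*} + \eps_i$), the estimator splits into a signal term $(X^\top X)^+ (\id - \exp(-tX^\top X/n)) X^\top X P^\top\theta^*$ and a noise term $(X^\top X)^+ (\id - \exp(-tX^\top X/n)) X^\top \epsilon$. I would diagonalize $\frac{1}{n}X^\top X = V \Lambda V^\top$ with eigenvalues $\lambda_i$, so that all the matrix functions act coordinate-wise in the $V$-basis. In this basis the signal contribution to $\beta - P^\top\theta^*$ becomes (on each eigencomponent) $-\exp(-t\lambda_i)$ times the corresponding coordinate of $P^\top\theta^*$, contributing $\sum_i \exp(-2t\lambda_i) (v_i^\top P^\top\theta^*)^2$, while the noise contribution on the nonzero-eigenvalue coordinates is $\frac{1}{n\lambda_i}(1 - \exp(-t\lambda_i))$ times a projected-noise term. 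The key step is then taking expectations: over $\epsilon$, the noise term has covariance $\sigma^2 X^\top$-weighted, yielding the factor $\frac{\sigma^2}{n}\sum_i \indi\{\lambda_i \neq 0\}\frac{1}{\lambda_i}(1-\exp(-t\lambda_i))^2$ after the $V$-basis computation; over the uniformly random semi-orthogonal $P$, I expect $\E_P[(v_i^\top P^\top\theta^*)^2] = \frac{\norm{\theta^*}_2^2}{d}$ by symmetry, since $P^\top\theta^*$ is a random vector whose direction is uniform on the relevant sphere and each fixed unit vector $v_i$ picks up an equal share $\frac{1}{d}$ of the squared norm.

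I anticipate the main obstacle to be the computation of $\E_P[(v_i^\top P^\top\theta^*)^2]$ and verifying it equals $\frac{\norm{\theta^*}_2^2}{d}$ uniformly across all eigenvectors $v_i$, including those in the kernel of $X^\top X$. The subtlety is that $P^\top\theta^*$ lives in the $p$-dimensional row space of $P$, not all of $\R^d$, so one must argue that for a uniformly random semi-orthogonal $P$ the vector $P^\top\theta^*$ has isotropic direction in $\R^d$ with $\E[P^\top\theta^*(P^\top\theta^*)^\top] = \frac{\norm{\theta^*}_2^2}{d}\id_d$. I would establish this by noting that $P^\top$ maps $\theta^*$ to a vector of fixed norm $\norm{\theta^*}_2$ (since $PP^\top = \id_p$) whose direction, under the Haar measure on semi-orthogonal matrices, is rotationally invariant in $\R^d$; hence its second-moment matrix is a multiple of $\id_d$ with trace $\norm{\theta^*}_2^2$, giving the $\frac{1}{d}$ factor. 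Once this identity is in hand, summing the signal term over all $d$ eigencomponents produces exactly $\sum_{i=1}^d \exp(-2t\lambda_i)\frac{\norm{\theta^*}_2^2}{d}$, and combining the three pieces yields~\eqref{eq:overrisk}.
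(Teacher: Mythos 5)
Your proposal is correct and follows essentially the same route as the paper's proof: the same decomposition $R(\beta) = \sigma^2 + \norm{\beta - P^\top\theta^*}_2^2$, the same bias--variance split of the gradient flow solution in the eigenbasis of $\frac{1}{n}X^\top X$, and the same Haar-invariance argument giving $\E_P\bigl[(v_i^\top P^\top\theta^*)^2\bigr] = \norm{\theta^*}_2^2/d$. In fact, you spell out the rotational-invariance step for the random semi-orthogonal $P$ in more detail than the paper does, and you correctly flag that the kernel directions of $X^\top X$ must be included in the bias sum.
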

\begin{proof}
	We first compute $R(\beta)$ for any $\beta \in \R^d$ and a fixed $\theta^*$. Let $\beta^* = P^\top \theta^*$.
	\begin{equation}
		\label{eq:computerisk}
		\begin{aligned}
			R(\beta) & = \E_{(\tx,\ty) \sim \mathcal{D}_{P, \theta^*}} \Brack{\Par{\inprod{\tx}{\beta} - \ty}^2} \\
			& = \E_{(\tx,\ty) \sim \mathcal{D}_{P, \theta^*}} \Brack{\Par{\inprod{\tx}{\beta} - {\inprod{P\tx}{\theta^*} - \teps}}^2}\\
			& = \E_{(\tx,\ty) \sim \mathcal{D}_{P, \theta^*}} \Brack{\Par{\inprod{\tx}{\beta - \beta^*}  - \teps}^2}\\
			& = \sigma^2 + \norm{\beta - \beta^*}_2^2.
		\end{aligned} 
	\end{equation}
	The first step follows from the definition of $R $. The second step follows from \eqref{eq:y_generate}. The third step follows from $\inprod{P\tx}{\theta^*} = \inprod{\tx}{P^\top\theta^*}$. The last step follows from $\teps \sim \mathcal{N}(0,\sigma^2)$, $\tx \sim \mathcal{N}(0, \id_d)$ and $\teps$ is independent of $\tx$.
	
	Next, we compute the expected risk $\oR_{X, \theta^*}(t)$.
	\begin{equation}
		\label{eq:computerisk2}
		\begin{aligned}
			\oR_{X,  \theta^*}(t) & =  \sigma^2 +  \E_{P, \epsilon}[\norm{\beta_{X,y}(t)- \beta^*}_2^2]\\
			& =  \sigma^2 +  \E_{P, \epsilon}\Brack{ \norm{\Par{X^\top X}^+ \Par{I - \exp\Par{-tX^\top X/n}}X^\top \Par{X\beta^* + \epsilon} - \beta^*}^2} \\
			& =  \sigma^2 +  \E_{P}\Brack{ \norm{\Par{X^\top X}^+ \Par{I - \exp\Par{-tX^\top X/n}}X^\top {X\beta^* } -\beta^*}_2^2} \\
			& +\sigma^2  \norm{\Par{X^\top X}^+ \Par{I - \exp\Par{-tX^\top X/n}}X^\top  }_F^2
		\end{aligned}
	\end{equation} 
	Here, we use $\norm{\cdot}_F$ to denote the Frobenius norm.
	The first step follows from \eqref{eq:computerisk}. The second step follows from \eqref{eq:gradflowsol}. The third step follows from $\eps\sim \mathcal{N}(0, \id_n)$ is independent of $X$ and $\beta^*$.
	Now let $\frac 1 n X^\top X = U\Lambda U^\top$ be the eigenvalue decomposition of $\frac 1 n X^\top X$. Let $\lambda_1 \geq ...\geq \lambda_d$ be the eigenvalues of $\frac 1 n X^\top X$. Then, 
	\begin{equation}
		\label{eq:computerisk3}
		\begin{aligned}
			& \E_{P}\Brack{ \norm{\Par{X^\top X}^+ \Par{I - \exp\Par{-tX^\top X/n}}X^\top {X\beta^* } -\beta^*}_2^2} \\
			& = \E_{P}\Brack{ \norm{\Par{X^\top X}^+\Par{X^\top X} \Par{I - \exp\Par{-tX^\top X/n}} {\beta^* } -\beta^*}_2^2}\\
			& = \E_{P}\Brack{ \norm{ \Par{I - \exp\Par{-tX^\top X/n}} {\beta^* } -\beta^*}_2^2}\\
			& = \E_{P}\Brack{ \norm{{U \exp\Par{-t\Lambda}} {U^\top P^\top\theta^* } }_2^2}= \sum_{i=1}^d \exp\Par{-2t\lambda_i} \frac {\norm{\theta^*}_2^2} d. 
		\end{aligned}
	\end{equation}
	The first step follows from $X^\top X$, $\Par{X^\top X}^+$ and $I-\exp\Par{-tX^\top X/n }$ are simultaneously diagonalizable. The second step follows from $I-\exp\Par{-tX^\top X/n }$ is already in row space of $X^\top X$. The third step follows from writing $\frac 1 n X^\top X$ as $U\Lambda U^\top $. The last step follows from $P$ is a random semi-orthogonal matrix.  Similarly, we have
	\begin{equation}
		\label{eq:computerisk4}
		\begin{aligned}
			& { \norm{\Par{X^\top X}^+ \Par{I - \exp\Par{-tX^\top X/n}}X^\top  }_F^2} \\
			& = \tr\Par{\Par{X^\top X}^+ \Par{I - \exp\Par{-tX^\top X/n}}X^\top X\Par{I - \exp\Par{-tX^\top X/n}}\Par{X^\top X}^+}\\
			& =  \tr\Par{ \Par{I - \exp\Par{-tX^\top X/n}}^2\Par{X^\top X}^+}\\
			& = \frac{1}{n} \sum_{i=1}^d \indi\Brace{\lambda_i \neq 0} \frac 1{\lambda_i} \Par{1-\exp(-t\lambda_i)}^2.
		\end{aligned}
	\end{equation}
	Combining \eqref{eq:computerisk2}, \eqref{eq:computerisk3}
	and \eqref{eq:computerisk4} completes the proof.
\end{proof}

We state a useful concentration result.
\begin{lemma}
	\label{lem:eigbounds}
	Let $X\in \R^{n \times d}$, $d \geq n$ be a matrix whose entries are independent Guassian random variables following $\mathcal{N}(0,1)$. Let $\lambda_1\geq ... \geq \lambda_d$ be the $d$ eigenvalues of the matrix $\frac 1 n X^\top X$. Let $\gamma = \Par{\frac{\sqrt n + \sqrt{2\log n}}{\sqrt d}}^2$. For $\gamma \leq 1 $,  $\lambda_{n+1} = ...= \lambda _d = 0$, and with probability at least $1-{\frac 2 n }$,
	\begin{equation*}
		\frac d n \Par{1 -\sqrt \gamma}^2 \leq \lambda_n \leq \lambda_1 \leq \frac d n \Par{1 +\sqrt \gamma}^2.
	\end{equation*}
\end{lemma}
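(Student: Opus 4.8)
The plan is to reduce the statement to a standard concentration bound on the extreme singular values of a rectangular Gaussian matrix. First I would observe that since $d \geq n$, the matrix $X^\top X$ has rank at most $n$, so at least $d-n$ of its eigenvalues vanish identically; this gives $\lambda_{n+1} = \cdots = \lambda_d = 0$ deterministically, with no probability involved. It then remains to control the $n$ generically nonzero eigenvalues $\lambda_n \leq \cdots \leq \lambda_1$. Writing $s_1 \geq \cdots \geq s_n \geq 0$ for the singular values of $X$, these nonzero eigenvalues are exactly $\lambda_i = s_i^2/n$, so the claimed two-sided bound is equivalent to the singular-value bounds $\sqrt{d}\Par{1-\sqrt\gamma} \leq s_n$ and $s_1 \leq \sqrt{d}\Par{1+\sqrt\gamma}$.

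For the singular-value bounds I would invoke the two classical ingredients of non-asymptotic random matrix theory. Gordon's min--max comparison theorem (equivalently the Davidson--Szarek bound), applied to the tall matrix $X^\top$ whose singular values coincide with those of $X$, gives
$$\sqrt d - \sqrt n \leq \E[s_n] \leq \E[s_1] \leq \sqrt d + \sqrt n.$$
The maps $X \mapsto s_1(X)$ and $X \mapsto s_n(X)$ are each $1$-Lipschitz with respect to the Frobenius norm, so the Gaussian Lipschitz concentration inequality yields, for every $t \geq 0$,
$$\mathbb{P}\Brack{s_1 \geq \E[s_1] + t} \leq \exp\Par{-t^2/2}, \qquad \mathbb{P}\Brack{s_n \leq \E[s_n] - t} \leq \exp\Par{-t^2/2}.$$

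Finally I would choose $t = \sqrt{2\log n}$, so that $\exp\Par{-t^2/2} = 1/n$, and combine with the expectation bounds to obtain $s_1 \leq \sqrt d + \sqrt n + \sqrt{2\log n}$ and $s_n \geq \sqrt d - \sqrt n - \sqrt{2\log n}$, each on an event of probability at least $1 - 1/n$. By the definition of $\gamma$ we have $\sqrt n + \sqrt{2\log n} = \sqrt d\,\sqrt\gamma$, so these read $s_1 \leq \sqrt d\,(1+\sqrt\gamma)$ and $s_n \geq \sqrt d\,(1-\sqrt\gamma)$; here the hypothesis $\gamma \leq 1$ is precisely what keeps the lower bound nonnegative. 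Squaring, dividing by $n$, and taking a union bound over the two failure events gives the claim with probability at least $1 - 2/n$. The argument is essentially an assembly of black-box tools, so I do not expect a serious obstacle; the only points requiring care are getting the orientation of the rectangular matrix right when applying Gordon's theorem (so that the bound reads $\sqrt d \pm \sqrt n$ and not the reverse) and matching the deviation parameter $t = \sqrt{2\log n}$ to the target failure probability $2/n$.
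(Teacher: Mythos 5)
Your proposal is correct and follows essentially the same route as the paper: the rank argument for $\lambda_{n+1}=\cdots=\lambda_d=0$, singular-value concentration with deviation $t=\sqrt{2\log n}$, and the substitution $\sqrt{n}+\sqrt{2\log n}=\sqrt{d}\,\sqrt{\gamma}$ are all identical. The only difference is that the paper invokes the Davidson--Szarek bound (its Lemma~\ref{lem:eig_concentration}, cited from \citet{davidson2001local}) as a black box, whereas you re-derive that same bound from its two standard ingredients, Gordon's comparison theorem plus Gaussian Lipschitz concentration.
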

\begin{proof}
	$\lambda_{n+1} = ...= \lambda _d = 0$ follows from the matrix  $\frac 1 n X^\top X$ is at most rank $n$.
	By Lemma~\ref{lem:eig_concentration}, with probability at least $1-\frac 2 n$,
	\begin{equation*}
		\frac 1 n\Par{\sqrt{d} - \sqrt{n} - \sqrt{2\log n }}^2\leq  \lambda_n \leq \lambda_1
		\leq \frac 1 n\Par{\sqrt{d} + \sqrt{n} +  \sqrt{2\log n}}^2.
	\end{equation*}
	For $\gamma = \Par{\frac{\sqrt n + \sqrt{2\log n}}{\sqrt d}}^2$, $	\frac 1 n\Par{\sqrt{d} - \sqrt{n} - \sqrt{2\log n }}^2 = \frac d n \Par{1 -\sqrt \gamma}^2$ and $\frac 1 n\Par{\sqrt{d} + \sqrt{n} +  \sqrt{2\log n}}^2 = \frac d n \Par{1 +\sqrt \gamma}^2$, which completes the proof. 
\end{proof}

\begin{lemma}
	\label{lem:eigbounds2}
	Let $X\in \R^{n \times d}$, $d \leq n$  be a matrix whose entries are independent Guassian random variables following $\mathcal{N}(0,1)$. Let $\lambda_1\geq ... \geq \lambda_d$ be the $d$ eigenvalues of the matrix $\frac 1 n X^\top X$. Let  $\gamma = \Par{\frac{\sqrt n}{\sqrt d + \sqrt {2 \log d}}}^2$. For $\gamma \geq 1$,  and with probability at least $1-{\frac 2 d}$,
	\begin{equation*}
		\Par{1-\frac 1 {\sqrt \gamma}}^2\leq \lambda_d \leq \lambda_1 \leq \Par{1+\frac 1 {\sqrt \gamma}}^2.
	\end{equation*}
\end{lemma}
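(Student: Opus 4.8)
The plan is to mirror the proof of Lemma~\ref{lem:eigbounds}, exchanging the roles of the two dimensions since now $d \leq n$. Because $X$ is tall, $\frac1n X^\top X$ is generically full rank, so all $d$ eigenvalues $\lambda_1 \geq \dots \geq \lambda_d$ equal the squared singular values of $X$ divided by $n$. Unlike the $d \geq n$ case there are no forced zero eigenvalues to treat separately, which actually makes the argument slightly simpler.

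First I would invoke the same Gaussian singular-value concentration bound (Lemma~\ref{lem:eig_concentration}) that powers Lemma~\ref{lem:eigbounds}. For a tall Gaussian matrix the larger dimension is $n$ and the smaller is $d$, so the two-sided deviation estimate reads, for any deviation parameter $t > 0$, that $\sqrt n - \sqrt d - t \leq s_{\min}(X) \leq s_{\max}(X) \leq \sqrt n + \sqrt d + t$ holds with failure probability at most $2\exp(-t^2/2)$. Choosing $t = \sqrt{2\log d}$ makes this failure probability at most $\frac 2 d$, which matches the claimed confidence $1 - \frac 2 d$.

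Next, on this event I would square the singular-value bounds and divide by $n$, using $\lambda_d = s_{\min}(X)^2/n$ and $\lambda_1 = s_{\max}(X)^2/n$, to obtain $\frac 1 n\Par{\sqrt n - \sqrt d - \sqrt{2\log d}}^2 \leq \lambda_d \leq \lambda_1 \leq \frac 1 n\Par{\sqrt n + \sqrt d + \sqrt{2\log d}}^2$. The assumption $\gamma \geq 1$ is used precisely here: it guarantees that the lower deviation $\sqrt n - \sqrt d - \sqrt{2\log d}$ is nonnegative, so that squaring is monotone and the lower bound is not vacuous. Finally I would rewrite both sides in terms of $\gamma$. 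Since $\frac{1}{\sqrt\gamma} = \frac{\sqrt d + \sqrt{2\log d}}{\sqrt n}$, factoring $\sqrt n$ out of each quadratic gives $\frac 1 n\Par{\sqrt n \mp \Par{\sqrt d + \sqrt{2\log d}}}^2 = \Par{1 \mp \frac{1}{\sqrt\gamma}}^2$, which is exactly the stated inequality.

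There is no substantive obstacle beyond bookkeeping; the only point requiring care is that the aspect ratio has flipped relative to Lemma~\ref{lem:eigbounds}, so the ``long'' dimension entering the concentration inequality is now $n$ rather than $d$, and correspondingly the hypothesis $\gamma \geq 1$ (rather than $\gamma \leq 1$) is what keeps the lower bound nonnegative so that squaring preserves the inequality.
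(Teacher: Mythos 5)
Your proposal is correct and matches the paper's proof essentially verbatim: both apply Lemma~\ref{lem:eig_concentration} with deviation $t = \sqrt{2\log d}$ (giving failure probability $\frac{2}{d}$), then rewrite $\frac{1}{n}\Par{\sqrt n \mp \Par{\sqrt d + \sqrt{2\log d}}}^2$ as $\Par{1 \mp \frac{1}{\sqrt\gamma}}^2$. Your explicit remark that the hypothesis $\gamma \geq 1$ keeps $\sqrt n - \sqrt d - \sqrt{2\log d}$ nonnegative, so that squaring the singular-value bound is order-preserving, is a detail the paper leaves implicit but is a worthwhile clarification.
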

\begin{proof}
	By Lemma~\ref{lem:eig_concentration}, with probability at least $1-\frac 2 d$,
	\begin{equation*}
		\frac 1 n \Par{\sqrt n - \sqrt d - \sqrt{2\log d}}^2 \leq \lambda_d \leq \lambda_1 \leq \frac 1 n \Par{\sqrt n + \sqrt d + \sqrt{2\log d}}^2.
	\end{equation*}
	For $\gamma = \Par{\frac{\sqrt n}{\sqrt d + \sqrt {2 \log d}}}^2$, $ \frac 1 n \Par{\sqrt n - \sqrt d - \sqrt{2\log d}}^2 =  \Par{1-\frac 1 {\sqrt \gamma}}^2$ and $\frac 1 n \Par{\sqrt n + \sqrt d + \sqrt{2\log d}}^2= \Par{1+\frac 1 {\sqrt \gamma}}^2$.
\end{proof}
With Lemma~\ref{lem:eigbounds} and Lemma~\ref{lem:eigbounds2}, we are able to bound the optimal stopping time.
\overoptstopping*
\begin{proof}[Proof of Theorem~\ref{thm: over_optstopping}]
	We first compute the derivative of the expected risk. By Lemma~\ref{lem:overrisk}, 
	\begin{equation*}
		\begin{aligned}
			\frac d {dt} \oR_{X, \theta^*}(t) & =	\frac d {dt} \Brack{ \sigma^2 + \sum_{i=1}^d \exp\Par{-2t\lambda_i}\frac { \norm{\theta^*}_2^2}d +  \frac{\sigma^2}{n} \sum_{i=1}^d \indi\Brace{\lambda_i \neq 0}  \frac 1{\lambda_i} \Par{1-\exp(-t\lambda_i)}^2} \\
			& =  -\sum_{i=1}^d 2\lambda_i \exp\Par{-2t\lambda_i}\frac { \norm{\theta^*}_2^2}d +\frac{2\sigma^2}{n} \sum_{i=1}^d  \Par{\exp(-t\lambda_i)-\exp(-2t\lambda_i)}.
		\end{aligned}
	\end{equation*}
	
	Let $$\oR^{(i)}_{X,  \theta^*}(t) =  \exp\Par{-2t\lambda_i}\frac { \norm{\theta^*}_2^2}d +\frac{\sigma^2}{n}  \indi\Brace{\lambda_i \neq 0}  \frac 1{\lambda_i} \Par{1-\exp(-t\lambda_i)}^2.$$
	We first consider the case $d > n$. For $i$ such that $\lambda_i = 0$,  $ \frac d {dt}\oR^{(i)}_{X,  \theta^*}(t)=0$ for all $t$. For $i$ such that $\lambda_i \neq 0$, we can show that if $t <  \frac{n}{\Par{1 +\sqrt \gamma}^2d}\log\Par{1 + \frac{\Par{1-\sqrt \gamma}^2\norm{\theta}_2^2}{\sigma^2 }}$, $\frac d {dt}\oR^{(i)}_{X,  \theta^*}(t) < 0$. Let $\alpha = \exp(-t\lambda_i)$, then $$\frac d {dt}\oR^{(i)}_{X,  \theta^*}(t) = -\Par{\frac {2 \norm{\theta^*}_2^2  }d\lambda_i +\frac{2\sigma^2}{n}}\alpha^2 +\frac{2\sigma^2}{n}\alpha.$$
	When $\alpha > \frac{\sigma^2 d}{\sigma^2 d + \norm{\theta^*}^2 n \lambda_i}$, $\frac d {dt}\oR^{(i)}_{X,  \theta^*}(t) < 0$. When $t < \log \Par{ \frac{\sigma^2 d + \norm{\theta^*}^2 n \lambda_i}{\sigma^2 d}}/ {\lambda_i}$, $\alpha > \frac{\sigma^2 d}{\sigma^2 d + \norm{\theta^*}^2 n \lambda_i}$.     By Lemma~\ref{lem:eigbounds}, with probability at least $1-{\frac 2 n}$,
	\begin{equation*}
		\frac d n \Par{1 -\sqrt \gamma}^2 \leq \lambda_n \leq \frac d n \Par{1 +\sqrt \gamma}^2,
	\end{equation*} where $\gamma = \Par{\frac{\sqrt n + \sqrt{2\log n}}{\sqrt d}}^2$. Then, when $t <  \frac{n}{\Par{1 +\sqrt \gamma}^2d}\log\Par{1 + \frac{\Par{1-\sqrt \gamma}^2\norm{\theta}_2^2}{\sigma^2 }} $, $\frac d {dt}\oR^{(i)}_{X,  \theta^*}(t) < 0$ for all $i$ such that $\lambda_i \neq 0$, which shows $\topt \geq  \frac{n}{\Par{1 +\sqrt \gamma}^2d}\log\Par{1 + \frac{\Par{1-\sqrt \gamma}^2\norm{\theta}_2^2}{\sigma^2 }} $.
	Similarly, we can upper bound $\topt$. For $i$ such that $\lambda_i \neq 0$, when $0 < \alpha <  \frac{\sigma^2 d}{\sigma^2 d + \norm{\theta^*}^2 n \lambda_i}$, $\frac d {dt}\oR^{(i)}_{X,  \theta^*}(t) > 0$. When $t > \log \Par{ \frac{\sigma^2 d + \norm{\theta^*}^2 n \lambda_i}{\sigma^2 d}}/ {\lambda_i}$, $\alpha < \frac{\sigma^2 d}{\sigma^2 d + \norm{\theta^*}^2 n \lambda_i}$. Then, when $t > \frac{n}{\Par{1-\sqrt \gamma}^2d}\log\Par{1 + \frac{\Par{1+\sqrt \gamma}^2\norm{\theta}_2^2}{\sigma^2 }} $, $\frac d {dt}\oR^{(i)}_{X,  \theta^*}(t) > 0$ for all $i$ such that $\lambda_i \neq 1$, which shows $\topt \leq  \frac{n}{\Par{1-\sqrt \gamma}^2d}\log\Par{1 + \frac{\Par{1+\sqrt \gamma}^2\norm{\theta}_2^2}{\sigma^2 }} $.
	
	Next, we consider the case when $n> d$.
	For $i$ such that $\lambda_i = 0$,  $ \frac d {dt}\oR^{(i)}_{X,  \theta^*}(t)=0$ for all $t$. For $i$ such that $\lambda_i \neq 0$, we can show that if $t < \frac 1 {\Par{1+\frac 1 {\sqrt \gamma}}^2}\log\Par{1 +\Par{1-\frac 1 {\sqrt \gamma}}^2 \frac{n\norm{\theta}_2^2}{d\sigma^2 }}$, $\frac d {dt}\oR^{(i)}_{X,  \theta^*}(t) < 0$. Let $\alpha = \exp(-t\lambda_i)$, then $$\frac d {dt}\oR^{(i)}_{X,  \theta^*}(t) = -\Par{\frac {2 \norm{\theta^*}_2^2  }d\lambda_i +\frac{2\sigma^2}{n}}\alpha^2 +\frac{2\sigma^2}{n}\alpha.$$
	When $\alpha > \frac{\sigma^2 d}{\sigma^2 d + \norm{\theta^*}^2 n \lambda_i}$, $\frac d {dt}\oR^{(i)}_{X,  \theta^*}(t) < 0$. When $t < \log \Par{ \frac{\sigma^2 d + \norm{\theta^*}^2 n \lambda_i}{\sigma^2 d}}/ {\lambda_i}$, $\alpha > \frac{\sigma^2 d}{\sigma^2 d + \norm{\theta^*}^2 n \lambda_i}$.     By Lemma~\ref{lem:eigbounds2}, with probability at least $1-\frac 2 d$,
	\begin{equation*}
		\Par{1-\frac 1 {\sqrt \gamma}}^2\leq \lambda_d \leq \lambda_1 \leq \Par{1+\frac 1 {\sqrt \gamma}}^2,
	\end{equation*}
	where $\gamma = \Par{\frac{\sqrt n}{\sqrt d + \sqrt {2 \log d}}}^2$. Then, when $t < \frac 1 {\Par{1+\frac 1 {\sqrt \gamma}}^2}\log\Par{1 +\Par{1-\frac 1 {\sqrt \gamma}}^2 \frac{n\norm{\theta}_2^2}{d\sigma^2 }} $, $\frac d {dt}\oR^{(i)}_{X,  \theta^*}(t) < 0$ for all $i$ such that $\lambda_i \neq 0$, which shows $\topt \geq \frac 1 {\Par{1+\frac 1 {\sqrt \gamma}}^2}\log\Par{1 +\Par{1-\frac 1 {\sqrt \gamma}}^2 \frac{n\norm{\theta}_2^2}{d\sigma^2 }} $.
	Similarly, we can upper bound $\topt$. For $i$ such that $\lambda_i \neq 0$, when $0 < \alpha <  \frac{\sigma^2 d}{\sigma^2 d + \norm{\theta^*}^2 n \lambda_i}$, $\frac d {dt}\oR^{(i)}_{X,  \theta^*}(t) > 0$. When $t > \log \Par{ \frac{\sigma^2 d + \norm{\theta^*}^2 n \lambda_i}{\sigma^2 d}}/ {\lambda_i}$, $\alpha < \frac{\sigma^2 d}{\sigma^2 d + \norm{\theta^*}^2 n \lambda_i}$. Then, when $t > \frac 1 {\Par{1-\frac 1 {\sqrt \gamma}}^2} \log\Par{1 +\Par{1+\frac 1 {\sqrt \gamma}}^2 \frac{n\norm{\theta}_2^2}{d\sigma^2 }} $, $\frac d {dt}\oR^{(i)}_{X,  \theta^*}(t) > 0$ for all $i$ such that $\lambda_i \neq 0$, which shows $\topt \leq \frac 1 {\Par{1-\frac 1 {\sqrt \gamma}}^2} \log\Par{1 +\Par{1+\frac 1 {\sqrt \gamma}}^2 \frac{n\norm{\theta}_2^2}{d\sigma^2 }}$.
\end{proof}

\subsection{Proof of Theorem~\ref{thm: under_optstopping}}
We first compute the expected risk $\oR_{\theta^*}(t)$.
\begin{lemma}
	\label{lem:underrisk}
	For all $n, p, d \in \mathbb{N}$ such that $p \geq d$, let $\theta^* \in \R^p$ be fixed. Let $\mathcal{D}_\Lambda$ be the distribution of eigenvalues of the matrix $\frac 1 n \tilde{X}^\top \tilde{X}$, where each entry of the matrix $\tilde{X}\in \R^{n \times d}$ are i.i.d standard normal random variable.Then, 
	\begin{equation*}
		\begin{aligned}
			\oR_{\theta^*}(t) = \sigma^2 + \Par{1-\frac d p} \norm{\theta^*}_2^2 + \E_{\Lambda\sim\mathcal{D}_\Lambda}\Brack{\sum_{i=1}^d \exp\Par{-2t\lambda_i} \frac {\norm{  \theta^*}_2^2} p} + \\
			\E_{\Lambda\sim\mathcal{D}_\Lambda}\Brack{ \frac{1}{n} \sum_{i=1}^d\indi\Brace{\lambda_i \neq 0}  \frac 1{\lambda_i} \Par{1-\exp(-t\lambda_i)}^2 \Par{\sigma^2 + \frac{p-d}{p}\norm{\theta^*}_2^2}}.
		\end{aligned}
	\end{equation*}
\end{lemma}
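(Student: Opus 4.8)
The plan is to reduce the under-informative risk computation to the over-informative one already carried out in Lemma~\ref{lem:overrisk}, by exploiting the rotational invariance of the Gaussian matrix $Z$ together with the uniform randomness of $P$. First I would compute the population risk in $\su$. Since a test point satisfies $\tx = P^\top \tz$ with $\tz \sim \mathcal{N}(0,\id_p)$ and $\ty = \inprod{\tz}{\theta^*} + \teps$, writing $\inprod{\tx}{\beta} = \inprod{\tz}{P\beta}$ gives, exactly as in \eqref{eq:computerisk}, $R(\beta) = \sigma^2 + \norm{P\beta - \theta^*}_2^2$. Next I would split $\theta^*$ along the column space of $P$: setting $\phi = P^\top\theta^* \in \R^d$ and $\theta_\perp = (\id_p - PP^\top)\theta^*$, we have $\theta^* = P\phi + \theta_\perp$ with $P^\top \theta_\perp = 0$. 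Because $P^\top P = \id_d$, the vector $P\beta - P\phi$ lies in the range of $P$ and is orthogonal to $\theta_\perp$, so $\norm{P\beta - \theta^*}_2^2 = \norm{\beta - \phi}_2^2 + \norm{\theta_\perp}_2^2$.

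The crucial step is a change of noise. Writing $y = Z\theta^* + \eps = ZP\phi + Z\theta_\perp + \eps = X\phi + \xi$ with $\xi := Z\theta_\perp + \eps$, I would observe two facts: (i) conditioned on $P$, the rows of $X = ZP$ are i.i.d.\ $\mathcal{N}(0, P^\top P) = \mathcal{N}(0,\id_d)$, so $X$ is a standard Gaussian matrix whose law does not depend on $P$; and (ii) since $\theta_\perp$ is orthogonal to every column of $P$, the jointly Gaussian vectors $Z\theta_\perp$ and $X = ZP$ are uncorrelated, hence independent, so conditioned on $(X,P)$ we have $\xi \sim \mathcal{N}(0,\tilde\sigma^2 \id_n)$ with $\tilde\sigma^2 = \sigma^2 + \norm{\theta_\perp}_2^2$. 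This exhibits the under-informative model, conditioned on $P$, as an instance of the over-informative least-squares problem with target $\phi$ and effective noise level $\tilde\sigma^2$.

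I can then invoke the bias--variance decomposition from the proof of Lemma~\ref{lem:overrisk} (equations \eqref{eq:computerisk2}--\eqref{eq:computerisk4}) with $\beta^*$ replaced by $\phi$ and $\sigma^2$ by $\tilde\sigma^2$: taking the eigendecomposition $\frac1n X^\top X = U\Lambda U^\top$ and integrating over $\xi$,
\begin{equation*}
\E_\xi\Brack{\norm{\beta_{X,y}(t) - \phi}_2^2} = \sum_{i=1}^d \exp\Par{-2t\lambda_i}\inprod{u_i}{\phi}^2 + \frac{\tilde\sigma^2}{n}\sum_{i=1}^d \indi\Brace{\lambda_i \neq 0}\frac{1}{\lambda_i}\Par{1 - \exp(-t\lambda_i)}^2 .
\end{equation*}

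Finally I would take expectations over $P$ and over the eigenvectors $U$. The key simplification is that, by fact (i), the spectrum $\Lambda$ and the Haar-distributed eigenvectors $U$ of $\frac1n X^\top X$ are independent of $P$, so all expectations factor. Using $\E_P[PP^\top] = \frac{d}{p}\id_p$ gives $\E_P\norm{\theta_\perp}_2^2 = (1 - \frac{d}{p})\norm{\theta^*}_2^2$, which produces both the deterministic term $(1-\frac{d}{p})\norm{\theta^*}_2^2$ and the factor $\sigma^2 + \frac{p-d}{p}\norm{\theta^*}_2^2 = \E_P \tilde\sigma^2$ multiplying the variance sum. For the bias term, since $u_i$ is uniform on the unit sphere and independent of $\phi$, $\E\inprod{u_i}{\phi}^2 = \frac{1}{d}\E_P\norm{\phi}_2^2 = \frac{1}{d}\E_P[\theta^{*\top} PP^\top \theta^*] = \frac{\norm{\theta^*}_2^2}{p}$, independent of $i$. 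Substituting these yields the stated formula. The main obstacle is establishing the decoupling cleanly — that $X$ is standard Gaussian irrespective of $P$, that $Z\theta_\perp$ is genuinely independent of $X$, and that $(\Lambda, U)$ is therefore independent of $P$ — since this is precisely what lets the $P$-expectation pass through the spectral sums and separate into the simple traces above.
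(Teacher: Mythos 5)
Your proposal is correct and takes essentially the same route as the paper's proof: the same decomposition $y = X\phi + Z\theta_\perp + \eps$ with $\phi = P^\top\theta^*$ and $\theta_\perp = (\id_p - PP^\top)\theta^*$, the same independence facts ($X = ZP$ is a standard Gaussian matrix independent of $P$, and $Z\theta_\perp$ is independent of $X$), and the same use of $\E_P[PP^\top] = \frac{d}{p}\id_p$ to produce the $(1-\frac{d}{p})\norm{\theta^*}_2^2$ and $\sigma^2 + \frac{p-d}{p}\norm{\theta^*}_2^2$ factors. The only difference is organizational: you merge $Z\theta_\perp + \eps$ into a single effective Gaussian noise of variance $\sigma^2 + \norm{\theta_\perp}_2^2$ and invoke the computation of Lemma~\ref{lem:overrisk} wholesale, whereas the paper carries the three resulting terms separately in \eqref{eq:computerisk3_under}--\eqref{eq:computerisk5_under}.
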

\begin{proof}
	We first compute $R(\beta)$ for any $\beta \in \R^d$ and a fixed $\theta$. Let $\beta^* = P^\top \theta^*$, which minimizes the population risk $R(\beta)$. 
	\begin{equation}
		\label{eq:computerisk_under}
		\begin{aligned}
			R(\beta) & = \E_{(\tx,\ty) \sim \mathcal{D}_{P, \theta^*}} \Brack{\Par{\inprod{\tx}{\beta} - \ty}^2} \\
			& = \E_{(\tx,\ty) \sim \mathcal{D}_{P, \theta^*}} \Brack{\Par{\inprod{\tz}{P\beta} - {\inprod{\tz}{\theta^*} - \teps}}^2}\\
			& = \sigma^2 + \norm{P\beta - \theta^*}_2^2\\
			& = \sigma^2 + \norm{\theta^* -P \beta^*}_2^2 +  \norm{P\beta -P \beta^*}_2^2  + 2\inprod{P(\beta - \beta^*)}{\theta^* - P\beta^*}\\
			& =  \sigma^2 + \norm{\theta^* -P \beta^*}_2^2 +  \norm{\beta -\beta^*}_2^2 
		\end{aligned} 
	\end{equation}
	The first step follows from the definition of $R(\beta) $. The second step follows from \eqref{eq:y_gene_under}. The third step follows from $\eps \sim \mathcal{N}(0,\sigma^2)$, $\tz \sim \mathcal{N}(0, \id_d)$ and $\teps$ is independent of $\tx$. The last two steps follows from expanding and $P^\top \theta^* = \beta^* = P^\top P\beta^*$.
	
	Next, we compute the expected risk $\oR_{ \theta^*}(t)$. Since $P$ is a random semi-orthogonal matrix,  $$\E_P\Brack{\norm{\theta^* -P \beta^*}_2^2} = \E_P\Brack{\norm{\theta^* -P P^\top \theta^*}_2^2} = \Par{1-\frac d p} \norm{\theta^*}_2^2.$$ For the term $\norm{\beta -\beta^*}_2^2 $, 
	\begin{equation}
		\label{eq:computerisk2_under}
		\begin{aligned}
			&\E_{Z, P, \epsilon}\Brack{\norm{\beta -\beta^*}_2^2}\\
			& =    \E_{Z, P, \epsilon}\Brack{ \norm{ \Par{X^\top X}^+ \Par{I - \exp\Par{-tX^\top X/n}}X^\top \Par{X\beta^* + Z(I-PP^\top)\theta^* + \epsilon} - \beta^*}^2} \\
			& =  \E_{Z, P}\Brack{  \norm{ \Par{X^\top X}^+ \Par{I - \exp\Par{-tX^\top X/n}}X^\top {X\beta^* } -\beta^*}_2^2} \\
			& +\E_{Z, P, \epsilon}\Brack{ \norm{ \Par{X^\top X}^+ \Par{I - \exp\Par{-tX^\top X/n}}X^\top{ Z(I-PP^\top)\theta^*} }^2} \\
			& +\sigma^2 { \norm{\Par{X^\top X}^+ \Par{I - \exp\Par{-tX^\top X/n}}X^\top  }_F^2} 
		\end{aligned}
	\end{equation} 
	The first step follows from \eqref{eq:gradflowsol}. The second step follows from $\eps\sim \mathcal{N}(0, \id_n)$ is independent of $X$ and $\beta^*$ and $I - PP^\top$ projects to orthogonal complement of the column space of $P$.
	
	Now let $\frac 1 n X^\top X = U\Lambda U^\top$ be the eigenvalue decomposition of $\frac 1 nX^\top X$. We have
	\begin{equation}
		\label{eq:computerisk3_under}
		\begin{aligned}
			& \E_{Z,P}\Brack{ \norm{\Par{X^\top X}^+ \Par{I - \exp\Par{-tX^\top X/n}}X^\top {X\beta^* } -\beta^*}_2^2} \\
			& = \E_{Z,P}\Brack{ \norm{\Par{X^\top X}^+\Par{X^\top X} \Par{I - \exp\Par{-tX^\top X/n}} {\beta^* } -\beta^*}_2^2}\\
			& = \E_{Z,P}\Brack{ \norm{ \Par{I - \exp\Par{-tX^\top X/n}} {\beta^* } -\beta^*}_2^2}\\
			& = \E_{Z,P}\Brack{ \norm{{U \exp\Par{-t\Lambda}} {U^\top P^\top\theta^* } }_2^2}\\
			& = \E_{\Lambda,P}\Brack{\sum_{i=1}^d\indi\Brace{\lambda_i \neq 0}  \exp\Par{-2t\lambda_i} \frac {\norm{ P^\top \theta^*}_2^2} d}\\
			& = \E_{\Lambda}\Brack{\sum_{i=1}^d\indi\Brace{\lambda_i \neq 0}  \exp\Par{-2t\lambda_i} \frac {\norm{  \theta^*}_2^2} p}
		\end{aligned}
	\end{equation}
	The first step follows from $X^\top X$, $\Par{X^\top X}^+$ and $I-\exp\Par{-tX^\top X/n }$ are simultaneously diagonalizable. The second step follows from $I-\exp\Par{-tX^\top X/n }$ is already in row space of $X^\top X$. The third step follows from writing $\frac 1 n X^\top X$ as $U\Lambda U^\top $. The last two step follows from $X$ and $ P$ is independent by rotational invariance of $X$. Similarly, we have
	\begin{equation}
		\label{eq:computerisk4_under}
		\begin{aligned}
			& { \norm{\Par{X^\top X}^+ \Par{I - \exp\Par{-tX^\top X/n}}X^\top  }_F^2} \\
			& = \tr\Par{\Par{X^\top X}^+ \Par{I - \exp\Par{-tX^\top X/n}}X^\top X\Par{I - \exp\Par{-tX^\top X/n}}\Par{X^\top X}^+}\\
			& =  \tr\Par{ \Par{I - \exp\Par{-tX^\top X/n}}^2\Par{X^\top X}^+}\\
			& = \frac{1}{n} \sum_{i=1}^d \indi\Brace{\lambda_i \neq 0} \frac 1{\lambda_i} \Par{1-\exp(-t\lambda_i)}^2.
		\end{aligned}
	\end{equation}For the second term, 
	\begin{equation}
		\label{eq:computerisk5_under}
		\begin{aligned}
			& \E_{Z, P}\Brack{ \norm{ \Par{X^\top X}^+ \Par{I - \exp\Par{-tX^\top X/n}}X^\top{ Z(I-PP^\top)\theta^*} }_2^2} \\
			& =\E_{\Lambda}\Brack{ \frac{1}{n} \sum_{i=1}^d\indi\Brace{\lambda_i \neq 0} \frac 1{\lambda_i} \Par{1-\exp(-t\lambda_i)}^2 \cdot \frac{p-d}{p}\norm{\theta^*}_2^2},
		\end{aligned}
	\end{equation}
	which follows from $X$ and $Z(I-PP^\top)$ are independent and $Z(I-PP^\top)\theta^*$ is invariant under unitary transformation. Finally, we note that since $P$ is a semi-orthogonal matrix, the eigenvalues of $\frac 1 n X^\top X$ follows the distribution $\mathcal{D}_\Lambda$.
	Combining \eqref{eq:computerisk2_under}, \eqref{eq:computerisk3_under}, \eqref{eq:computerisk4_under} and \eqref{eq:computerisk5_under} completes the proof.
\end{proof}
Now, we are ready to give an approximation of the optimal stopping time $\topt$.
\underoptstopping*
\begin{proof}[Proof of Theorem~\ref{thm: under_optstopping}]
	In this proof, all expectations are taken over the distribution $\mathcal{D}_\Lambda$, so we omit them. 
	We first compute the derivative of $\oR_{\theta^*}$ with respect to time $t$. By Lemma~\ref{lem:underrisk}, 
	\begin{equation*}
		\begin{aligned}
			\frac d {dt} \oR_{\theta^*}(t) =\E\Brack{-\frac {2\norm{  \theta^*}_2^2} p \sum_{i=1}^d\lambda_i\exp\Par{-2t\lambda_i} + 
				\frac{2}{n}\Par{\sigma^2 + \frac{p-d}{p}\norm{\theta^*}_2^2}  \sum_{i=1}^d \Par{\exp(-t\lambda_i)-\exp(-2t\lambda_i)} }.
		\end{aligned}
	\end{equation*}
	We first derive an upper bound on $	\frac d {dt} \oR_{\theta^*}(t)$,
	\begin{equation*}
		\begin{aligned}
			\frac d {dt} \oR_{\theta^*}(t) & \leq \E\Brack{-\frac {2\norm{  \theta^*}_2^2} p \sum_{i=1}^d\Par{\lambda_i-  {2t\lambda_i^2} } + 
				\frac{2}{n}\Par{\sigma^2 + \frac{p-d}{p}\norm{\theta^*}_2^2}  \sum_{i=1}^d t\lambda_i } \\
			& = \E \Brack{-\frac{2\norm{\theta^*}_2^2}{p} \sum_{i=1}^d \lambda_i + \Par{\frac{4\norm{\theta^*}_2^2}{p}\sum_{i=1}^d \lambda_i^2 +\frac{2}{n}\Par{\sigma^2 + \frac{p-d}{p}\norm{\theta^*}_2^2}  \sum_{i=1}^d \lambda_i }t } . 
		\end{aligned}	
	\end{equation*}
	We used $\exp(-x)\geq 1-x$ and $\exp(-x) - \exp(-2x) \leq x$ for all $x\geq0$.
	Then, we have $	\frac d {dt} \oR_{\theta^*}(t) < 0$ at time $t <t_1$, where
	\begin{equation*}
		\begin{aligned} 
			t_1 = \frac{\frac{2\norm{\theta^*}_2^2}{p}\E\Brack{ \sum_{i=1}^d \lambda_i}}{\frac{4\norm{\theta^*}_2^2}{p}\E \Brack{\sum_{i=1}^d \lambda_i^2} +\frac{2}{n}\Par{\sigma^2 + \frac{p-d}{p}\norm{\theta^*}_2^2}  \E \Brack{\sum_{i=1}^d \lambda_i}} \\
			= \frac{{n\norm{\theta^*}_2^2}}{{2(n+d+2)\norm{\theta^*}_2^2} +{p\sigma^2 + \Par{p-d}\norm{\theta^*}_2^2} }
			\geq \frac{{n\norm{\theta^*}_2^2}}{ {p\sigma^2 +2\Par{p-d}\norm{\theta^*}_2^2} }.
		\end{aligned}
	\end{equation*}
	The second equation follows from Lemma~\ref{lem:eigenratio}. The last step follows from the assumption $(8n+9d+16)\norm{\theta^*}_2^2\leq{p\sigma^2 + p\norm{\theta^*}_2^2}$.
	Next, we derive a lower bound on $\frac d {dt} \oR_{\theta^*}(t) $ for $t < T = \frac{n}{4\Par{n+d+2}}$,
	\begin{equation*}
		\begin{aligned}
			\frac d {dt} \oR_{\theta^*}(t) & \geq \E\Brack{-\frac {2\norm{  \theta^*}_2^2} p \sum_{i=1}^d{\lambda_i } + 
				\frac{2}{n}\Par{\sigma^2 + \frac{p-d}{p}\norm{\theta^*}_2^2}  \sum_{i=1}^d \Par{t\lambda_i - 2tT\lambda_i^2}}.
		\end{aligned}
	\end{equation*}
	We used $\exp(-2x) \leq 1-2x+2x^2$ for all $x\geq0$.
	Then, we have $	\frac d {dt} \oR_{\theta^*}(t) \geq 0$ at time 
	\begin{equation*}
		\begin{aligned}
			t_2 & =  \frac{\frac{2\norm{\theta^*}_2^2}{p}\E\Brack{ \sum_{i=1}^d \lambda_i}}{\frac{2}{n}\Par{\sigma^2 + \frac{p-d}{p}\norm{\theta^*}_2^2}  \E\Brack{\sum_{i=1}^d \lambda_i - 2T\lambda_i^2}} \leq \frac {2{n\norm{\theta^*}_2^2}}{{p\sigma^2 + \Par{p-d}\norm{\theta^*}_2^2}  } 
		\end{aligned}
	\end{equation*}
	which follows from  Lemma~\ref{lem:eigenratio}. Under assumption $(8n+9d+16)\norm{\theta^*}_2^2\leq{p\sigma^2 + p\norm{\theta^*}_2^2}$, $t_2 < T$.
	Thus, we have $$  \frac{{n\norm{\theta^*}_2^2}}{ 2\Par{p\sigma^2 + \Par{p-d}\norm{\theta^*}_2^2} }\leq \topt\leq  \frac{2n{\norm{\theta^*}_2^2}}{{p\sigma^2 + \Par{p-d}\norm{\theta^*}_2^2}}$$
\end{proof}

\subsection{Proof of Proposition~\ref{thm:over_n_mono} and Proposition~\ref{thm:optrisk_under}}
\overnmono*
\begin{proof}
   By Lemma~\ref{lem:overrisk}, 
   \begin{equation*}
   	\begin{aligned}
    		\E_X \Brack{\oR_{X, \theta^*}(\tear)}  = 	\sigma^2 +\E_X \Brack{  \sum_{i=1}^d\exp(-2\tear\lambda_i) \frac{\norm{\theta^*}^2}{d} + \frac{\sigma^2}{n}\sum_{i=1}^d \indi\Brace{\lambda_i \neq 0} \frac 1 {\lambda_i} \Par{1-\exp(-\tear \lambda_i)}^2}.
   	\end{aligned}
   \end{equation*}
where $\lambda_i$'s are the eigenvalues of the matrix $\frac 1 n X^\top X$. Next, we can bound 
\begin{equation*}
	\begin{aligned}
		\E_X \Brack{   \frac{\sigma^2}{n}\sum_{i=1}^d \indi\Brace{\lambda_i \neq 0} \frac 1 {\lambda_i} \Par{1-\exp(-\tear \lambda_i)}^2} 
		\leq & ~\E_X \Brack{   \frac{\sigma^2}{n}\sum_{i=1}^d \indi\Brace{\lambda_i \neq 0} \frac 1 \lambda_i  \Par{1-\exp(-\tear \lambda_i) }}  \\
		\leq & ~\E_X \Brack{   \frac{\sigma^2}{n}\sum_{i=1}^d\indi\Brace{\lambda_i \neq 0} \frac 1 \lambda_i\cdot  \tear \lambda_i}  \\
		= & ~\frac {\alpha \sigma^2 \min\Brace{n,d}} {n+d}  \leq \frac 1 2 \alpha \sigma^2.
	\end{aligned}
\end{equation*}
We used $0\leq1-\exp(-x) \leq 1$ and $\exp(-x) \geq 1-x$ for $x \geq 0$.
It is straightforward to show $$\E_X \Brack{   \frac{\sigma^2}{n}\sum_{i=1}^d \frac 1 {\lambda_i} \Par{1-\exp(-\tear \lambda_i)}^2}  \geq 0.$$

Finally, we show that $\oR$ decreases as $n$ increases.  Let $X' \in \R^{(n+1) \times d}$ be a random matrix with the first $n$ rows the same as $X$ and the last row follow $\mathcal{N}(0,\id)$.
	Let $\lambda_1 \geq \lambda_2\geq ... \geq \lambda_d$ be the eigenvalues of the matrix $\frac 1 n X^\top X$ and $\lambda'_1 \geq \lambda'_2\geq ... \geq \lambda'_d$ be the eigenvalues of the matrix $\frac 1 n X'^\top X'$. Then, by Lemma~\ref{lem:interlacing}, we have $\lambda_i' \geq \lambda_i$ for all $i = 1,...,d$. The derivative 
	\begin{equation*}
		\frac d {d\lambda_i}  \Par{  \sum_{i=1}^d\exp(-2\tear\lambda_i) } =  -2\tear \exp\Par{-2\tear\lambda_i} \leq 0.
	\end{equation*}
	By coupling $X$ and $X'$, we have 
	\begin{equation*}
		\E_{X}\Brack{\sum_{i=1}^d \exp\Par{-2\frac{\alpha n}{n+d}\lambda_i} } \geq 		\E_{X'}\Brack{\sum_{i=1}^d \exp\Par{-2\frac{\alpha n}{n+d}\lambda_i'} } \geq \E_{X'}\Brack{\sum_{i=1}^d \exp\Par{-2\frac{\alpha (n+1)}{n+1+d}\lambda_i'} } ,
	\end{equation*}
	which shows $\oR$ decreases as $n$ increases.
\end{proof}

\optriskunder*
\begin{proof}
	By Lemma~\ref{lem:underrisk}, 
	\begin{equation*}
		\begin{aligned}
			\oR_{\theta^*}( \tear) = \sigma^2 + \Par{1-\frac d p} \norm{\theta^*}_2^2 + \E_{\Lambda\sim\mathcal{D}_\Lambda}\Brack{\sum_{i=1}^d \exp\Par{-2 \tear\lambda_i} \frac {\norm{  \theta^*}_2^2} p} + \\
			\E_{\Lambda\sim\mathcal{D}_\Lambda}\Brack{ \frac{1}{n} \sum_{i=1}^d \frac 1{\lambda_i} \Par{1-\exp(- \tear\lambda_i)}^2 \Par{\sigma^2 + \frac{p-d}{p}\norm{\theta^*}_2^2}}
		\end{aligned}
	\end{equation*}
	$\mathcal{D}_\Lambda$ is the distribution of eigenvalues of a matrix $\frac 1 n \tilde{X}^\top \tilde{X}$ where each entry of $\tilde {X}$ is i.i.d standard normal random variable. 
	We first derive an upper bound on $\oR_{\theta^*}( \tear) $. For $\tear = \frac{{n\norm{\theta^*}_2^2}}{ {p\sigma^2 + \Par{p-d}\norm{\theta^*}_2^2} }$,
	\begin{equation*}
		\begin{aligned}
			&\oR_{\theta^*}( \tear)\\
			\leq ~& \sigma^2 + \norm{\theta^*}_2^2 + \frac {\norm{  \theta^*}_2^2} p \E \Brack{\sum_{i=1}^d-2 \tear \lambda_i + 2\tear^2 \lambda_i^2}+\frac 1 n \Par{\sigma^2+ \frac{p-d}{p}\norm{\theta^*}_2^2} \E \Brack{\sum_{i=1}^d \tear^2\lambda_i}\\
			= ~&  \sigma^2 + \norm{\theta^*}_2^2 - \frac {2 \tear \norm{  \theta^*}_2^2d} p + \frac {2 \tear^2 \norm{  \theta^*}_2^2d(d+n+2)} {pn} + \frac d n \Par{\sigma^2+ \frac{p-d}{p}\norm{\theta^*}_2^2} \tear^2 \\
			= ~&  \sigma^2 + \norm{\theta^*}_2^2 - \frac { nd \norm{  \theta^*}_2^4}{p^2\sigma^2 + p\Par{p-d}\norm{\theta^*}_2^2}  +  \frac {2  \norm{  \theta^*}_2^6nd(d+n+2)} {p\Par{p\sigma^2 + \Par{p-d}\norm{\theta^*}_2^2}^2}.
		\end{aligned}
	\end{equation*}
	The inequality follows from $\exp(-x) \leq 1-x+\frac 1 2 x^2$ for $x\geq0$. The second step follows from Lemma~\ref{lem:eigenexpect}. When $8(n+d+2)\norm{\theta^*}_2^2\leq{p\sigma^2 + \Par{p-d}\norm{\theta^*}_2^2}$, we have 
	$$\oR_{\theta^*}( \tear) \leq \sigma^2 + \norm{\theta^*}_2^2 - \frac { 3nd \norm{  \theta^*}_2^4}{4p^2\sigma^2 + p\Par{p-d}\norm{\theta^*}_2^2} = \overline{R}_2.  $$
	Next, we derive a lower bound on $\oR_{\theta^*}(\alpha \tear) $.
	\begin{equation*}
		\begin{aligned}
			\oR_{\theta^*}( \tear)
			&\geq  \sigma^2 + \norm{\theta^*}_2^2 + \frac {\norm{  \theta^*}_2^2} p \E \Brack{\sum_{i=1}^d-2 \tear \lambda_i }
			=  \sigma^2 + \norm{\theta^*}_2^2 - \frac { 2nd \norm{  \theta^*}_2^4}{p^2\sigma^2 + p\Par{p-d}\norm{\theta^*}_2^2}  =\overline{R}_1  .
		\end{aligned}
	\end{equation*}
	The inequality follow from $\exp(-x) \geq 1-x$. The second step follows from Lemma~\ref{lem:eigenexpect}.
	Since $d \leq p$ and $(8n+9d+16)\norm{\theta^*}_2^2\leq{p\sigma^2 + p\norm{\theta^*}_2^2}$,  $ \frac { nd \norm{  \theta^*}_2^4}{p^2\sigma^2 + p\Par{p-d}\norm{\theta^*}_2^2} \leq \frac 1 8 \norm{\theta^*}_2^2$, which shows $$\frac{\overline{R}_1}{\overline{R}_2} \geq \frac{1-\frac{1}{8}\cdot 2 }{1-\frac 1 8 \cdot \frac 3 4} \geq 0.8.$$ Both $\overline{R}_1$ and $\overline{R}_2$ decreases when $d$ and $n$ increases.
\end{proof}

\section{Asymptotic Regime}

\label{sec:asymptotic}

In this section, we examine the behavior of the optimal stopping time in the asymptotic regime where $n, d\rightarrow \infty$. We consider the setting $\so$.
We define $\gamma = n/d$ and use $X_n$ to denote the data matrix with $n$ samples.

By Lemma~\ref{lem:overrisk}, the derivative of the expected risk is given by
\begin{align*}
	\frac d {dt} \oR_{X, \theta^*}(t) 
	&=  -\sum_{i=1}^d 2\lambda_i \exp\Par{-2t\lambda_i}\frac { \norm{\theta^*}_2^2}d + \frac{2\sigma^2}{n} \sum_{i=1}^d  \Par{\exp(-t\lambda_i)-\exp(-2t\lambda_i)}%
\end{align*}
$\lambda_{n+1} = ...= \lambda _d = 0$ if $d > n$.

Using the Marchenko–Pastur distribution, we can take the limit of $d, n\rightarrow \infty$ and get
\begin{equation}
\begin{aligned}
	\frac d {dt} \oR_{X_{\infty}, \theta^*}(t) 
	&= \int_{(\sqrt{1/\gamma}-1)^2}^{(\sqrt{1/\gamma}+1)^2} \left( - {2 \norm{\theta^*}_2^2} \lambda \exp\Par{-2t\lambda} + \frac 2 \gamma \sigma^2 \Par{\exp(-t\lambda)-\exp(-2t\lambda)} \right) d \mu(\lambda) \nonumber\\
	&= \int_{(\sqrt{1/\gamma}-1)^2}^{(\sqrt{1/\gamma}+1)^2} \left( - {2 \norm{\theta^*}_2^2} \lambda \exp\Par{-2t\lambda} + \frac 2 \gamma \sigma^2 \Par{\exp(-t\lambda)-\exp(-2t\lambda)} \right) d F(\lambda), \label{eq:M_P_dist}
\end{aligned}
\end{equation}
where 
\[
d\mu(\lambda) = \ind\left\{\lambda\in[(\sqrt{1/\gamma}-1)^2, (\sqrt{1/\gamma}+1)^2]\right\} d F(\lambda) + \max\left\{ 0, 1 - \gamma \right\} \ind\left\{\lambda=0\right\}d\lambda,
\]
and 
\[
d F(\lambda) = \frac{\gamma}{2\pi} \frac{ \sqrt{ ( (\sqrt{1/\gamma}+1)^2 - \lambda )( \lambda - (\sqrt{1/\gamma}-1)^2 ) } }{\lambda} d\lambda.
\]
The second step follows from $- {2 \norm{\theta^*}_2^2\gamma} \lambda \exp\Par{-2t\lambda} + 2\sigma^2 \Par{\exp(-t\lambda)-\exp(-2t\lambda)} = 0$ when $\lambda=0$.

We first consider the case where $1/\gamma\geq4$ ($d\geq4n$).
Then on the support of $\mu(\lambda)$, $\lambda\in[\frac{1}{4\gamma} , \frac{9}{4\gamma}  ]$.
Therefore, when $t<\frac{4}{9}{\gamma}\log\left( 1 + \frac{\left\|\theta\right\|_2^2}{4\sigma^2} \right)$, $\frac d {dt} \oR_{X_{\infty}, \theta^*}(t) > 0$; when $t>{4\gamma}\log\Par{1 + \frac{9\norm{\theta}_2^2}{4\sigma^2 }}$, $\frac d {dt} \oR_{X_{\infty}, \theta^*}(t) < 0$.

We then consider the case where $\gamma\geq4$ ($d<\frac{n}{4}$).
Then on the support of $\mu(\lambda)$, $\lambda\in[\frac{1}{4}, \frac{9}{4} ]$.
Therefore, when $t<\frac{4}{9} \log\left( 1 + \frac{\left\|\theta\right\|_2^2\gamma}{4\sigma^2 } \right)$, $\frac d {dt} \oR_{X_{\infty}, \theta^*}(t) > 0$; when $t > 4 \log\Par{1 + \frac{9\norm{\theta}_2^2\gamma }{4\sigma^2 }}$, $\frac d {dt} \oR_{X_{\infty}, \theta^*}(t) < 0$.

We now study the intermediate regime where $\frac{1}{4}<\gamma<4$.
We need the following lemma.
\begin{restatable}{lemma}{factexp}
	\label{fact:exp}
	For any $t\lambda\geq0$,
	let
	\begin{align*}
		G_1(t,\lambda) &= - {2 \norm{\theta^*}_2^2\gamma} (1 - t\lambda) + 2\sigma^2 \lrp{ t - \frac{3}{2} t^2 \lambda }, \\
		G_2(t,\lambda) &= -{2 \norm{\theta^*}_2^2\gamma} + 2\sigma^2 \lrp{ t - \frac{1}{2} t^2 \lambda }, \\
		G_3(t,\lambda) &= - {2 \norm{\theta^*}_2^2\gamma} \lrp{1-t\lambda} + 2 \sigma^2 t; \\ 
		\Gamma_1(t) &= 1-t(1-\sqrt{1/\gamma})^2, \\
		\Gamma_2(t) &= 1-t(1+\sqrt{1/\gamma})^2.
	\end{align*}
	Then,
	\begin{multline}
		- {2 \norm{\theta^*}_2^2\gamma} \exp\Par{-2t\lambda} + 2\sigma^2 \frac{\exp(-t\lambda)-\exp(-2t\lambda)}{\lambda} \\
		\geq \min\left\{ G_1(t,\lambda), G_2(t,\lambda), \Gamma_1(t) G_3(t,\lambda), \Gamma_2(t) G_3(t,\lambda) \right\},
		\label{eq:exp_lower_bound}
	\end{multline}
	and 
	\begin{multline}
		- {2 \norm{\theta^*}_2^2\gamma} \exp\Par{-2t\lambda} + 2\sigma^2 \frac{\exp(-t\lambda)-\exp(-2t\lambda)}{\lambda} \\
		\leq \max\left\{ G_3(t,\lambda), \Gamma_1(t) G_2(t,\lambda), \Gamma_2(t) G_2(t,\lambda), \Gamma_1(t) G_3(t,\lambda), \Gamma_2(t) G_3(t,\lambda) \right\}.
		\label{eq:exp_upper_bound}
	\end{multline}
\end{restatable}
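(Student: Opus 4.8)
The plan is to view each side of \eqref{eq:exp_lower_bound}--\eqref{eq:exp_upper_bound} as the result of substituting the two exponential terms, $\exp(-2t\lambda)$ and $\tfrac{\exp(-t\lambda)-\exp(-2t\lambda)}{\lambda}$, by polynomials of degree at most one in $\lambda$ drawn from a short list of elementary scalar inequalities; the outer $\min$/$\max$ then simply records a case split in which, at every $(t,\lambda)$, at least one substitution yields a valid one-sided bound. Writing $u=t\lambda\ge 0$ and letting $H(t,\lambda)$ denote the left-hand side, I would first collect the scalar facts needed: for $u\ge 0$, $1-u\le \exp(-u)\le 1$ and $\exp(-u)\le 1-u+\tfrac12 u^2$; the two-sided estimate $1-\tfrac u2\le \tfrac{1-\exp(-u)}{u}\le 1$; the sharper bound $\tfrac{\exp(-u)-\exp(-2u)}{u}\ge 1-\tfrac32 u$ (less standard, but holding for all $u\ge0$ once one checks that $f(u)=\exp(-u)-\exp(-2u)-u+\tfrac32u^2$ satisfies $f(0)=f'(0)=f''(0)=0$ with $f''\ge0$); and $\exp(-2u)\le 1-u$, valid on $0\le u\le u_0$ where $u_0\approx 0.79$ is the positive root of $1-u=\exp(-2u)$.

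The key structural step is the factorization $H(t,\lambda)=\exp(-t\lambda)\,I(t,\lambda)$ with $I(t,\lambda)=-2\norm{\theta^*}_2^2\gamma\exp(-t\lambda)+2\sigma^2\tfrac{1-\exp(-t\lambda)}{\lambda}$, which isolates a single prefactor $\exp(-t\lambda)$. I would then prove the global two-sided envelope $G_2(t,\lambda)\le I(t,\lambda)\le G_3(t,\lambda)$: the upper bound $I\le G_3$ reduces to $\exp(-u)\ge 1-u$ and $\tfrac{1-\exp(-u)}{u}\le1$, while the lower bound $G_2\le I$ uses $\exp(-u)\le1$ and $\tfrac{1-\exp(-u)}{u}\ge1-\tfrac u2$. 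This is precisely where the coefficient $\tfrac12$ in $G_2$ and the bare factor $1-t\lambda$ in $G_3$ originate.

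I would then recombine $H=\exp(-t\lambda)I$ through a sign analysis on $I$, using $\exp(-t\lambda)\in[1-t\lambda,\,1]$. When $I\ge0$, the bound $\exp(-t\lambda)\le1$ gives $H\le I\le G_3$ (the lone $G_3$ in the $\max$); when $I<0$, the same estimate reversed gives $H\ge I\ge G_2$ (the $G_2$ in the $\min$). The products $\Gamma_iG_3$ (in both envelopes) and $\Gamma_iG_2$ (in the upper envelope) arise in the remaining branches, where a factor $1-t\lambda$ survives after applying $\exp(-t\lambda)\ge1-t\lambda$: since $1-t\lambda$ is monotone in $\lambda$ and is pinched between $\Gamma_2(t)$ and $\Gamma_1(t)$ over the Marchenko--Pastur support $[(\sqrt{1/\gamma}-1)^2,(\sqrt{1/\gamma}+1)^2]$, I replace it by $\Gamma_1(t)$ or $\Gamma_2(t)$ according to the sign of the surviving companion factor, which both installs the correct $\Gamma$ and explains why both indices occur. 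Finally, on the small-$u$ range $u\le u_0$ I would prove the direct lower bound $G_1\le H$ from $\exp(-2u)\le1-u$ together with $\tfrac{\exp(-u)-\exp(-2u)}{u}\ge1-\tfrac32u$, which supplies the $G_1$ term.

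I expect the main obstacle to be the exhaustiveness and sign bookkeeping of this case analysis: one must verify that every combination of signs of $I$, of $1-t\lambda$, and of the surviving companion factor is matched by one of the listed candidates, and that the crossover $u_0$ of $\exp(-2u)\le1-u$ dovetails with the factorization branches so that no $(t,\lambda)$ is left uncovered. The delicate inputs are the nonstandard estimates on $\psi(u)=\tfrac{\exp(-u)-\exp(-2u)}{u}$ with the exact constants $\tfrac32$ and $\tfrac12$, whose validity across the whole range $u\ge0$ rests on a careful monotonicity/convexity argument rather than a one-line Taylor bound.
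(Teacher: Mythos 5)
Your core structure---the factorization $H(t,\lambda) = \exp(-t\lambda)\, I(t,\lambda)$, the envelope $G_2 \le I \le G_3$ obtained from $\exp(-u)\le 1$, $\exp(-u)\ge 1-u$, and $\exp(-u)\le 1-u+\tfrac12 u^2$, and the sign analysis producing $G_3$, $\Gamma_i G_2$, $\Gamma_i G_3$ in the upper bound and $G_2$, $\Gamma_i G_3$ in the lower bound---is exactly the paper's proof. The one point where you deviate is the origin of the $G_1$ term, and that is where your argument breaks. The paper gets $G_1$ inside the lower-bound branch $I \ge 0$, $1-t\lambda \ge 0$: there $H \ge (1-t\lambda) I \ge (1-t\lambda) G_2$, and the algebraic identity $(1-t\lambda)G_2(t,\lambda) = G_1(t,\lambda) + \sigma^2 t^3 \lambda^2 \ge G_1(t,\lambda)$, valid for every $t\lambda \ge 0$, closes this branch on all of $t\lambda \in [0,1]$. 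You instead propose the direct bound $G_1 \le H$ from $\exp(-2u) \le 1-u$ together with $\tfrac{\exp(-u)-\exp(-2u)}{u} \ge 1-\tfrac32 u$. The second inequality does hold for all $u\ge 0$ (your convexity argument is sound: $f''(u)=\exp(-u)-4\exp(-2u)+3 = (1-v)(4v+3)\ge 0$ for $v=\exp(-u)\in(0,1]$), but the first holds only for $u \le u_0 \approx 0.80$.

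Consequently your lower-bound case analysis does not cover the region $u = t\lambda \in (u_0, 1]$ with $I(t,\lambda) \ge 0$: the direct $G_1$ route is unavailable there; your $G_2$ branch needs $I<0$; your $\Gamma_i G_3$ branch needs $1-t\lambda < 0$, i.e.\ $u>1$; and $\Gamma_i G_2$ products are not among the candidates in \eqref{eq:exp_lower_bound}, so the $\Gamma$-replacement cannot rescue the branch $I\ge 0$, $1-t\lambda\ge 0$ either. This region is non-vacuous ($I\ge 0$ occurs there whenever $\sigma^2$ is large relative to $\norm{\theta^*}_2^2\gamma$), and you yourself flagged the dovetailing of $u_0$ with the factorization branches as the main risk---it indeed fails. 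Note that the inequality $G_1 \le H$ is in fact true on that region; it is your derivation of it that does not apply. The fix is small: abandon the direct route and, in the branch $I\ge 0$, $1-t\lambda\ge 0$, write $H \ge (1-t\lambda)G_2 = G_1 + \sigma^2 t^3\lambda^2 \ge G_1$; with that one substitution your case analysis becomes exhaustive and coincides with the paper's proof.
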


\begin{restatable}{theorem}{asymptotic}
	\label{thm:asymptotic}
When $\frac{1}{4}<\gamma<4$ and when $\rho = \Par{\gamma+1} \frac{\lrn{\theta^*}_2^2}{\sigma^2} < 2 - \sqrt{3}$,
the risk $\oR_{X_{\infty}, \theta^*}(t)$ is decreasing in the interval
\begin{align}
t < \frac{\gamma}{\gamma+1} \cdot \frac{\rho}{\rho + 1},
\end{align}
and is increasing in the interval
\begin{align}
\frac{2\gamma}{\gamma+1} \cdot \frac{\rho}{ 1+\rho +\sqrt{(1+\rho )^2-6\rho } } 
< t 
< \frac{1}{\lrp{1+\sqrt{1/\gamma}}^2}.
\end{align}
\end{restatable}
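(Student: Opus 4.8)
The plan is to reduce the sign of $\frac{d}{dt}\oR_{X_{\infty}, \theta^*}(t)$ to a weighted integral of the single-variable quantity controlled in Lemma~\ref{fact:exp}, and then to integrate the polynomial sandwich bounds against the Marchenko--Pastur law using only its first two moments. Write $L(t,\lambda)$ for the left-hand side of \eqref{eq:exp_lower_bound}; the integrand appearing in \eqref{eq:M_P_dist} is then exactly $\frac{\lambda}{\gamma}L(t,\lambda)$, so that $\frac{d}{dt}\oR_{X_{\infty}, \theta^*}(t)=\frac{1}{\gamma}\int \lambda\,L(t,\lambda)\,dF(\lambda)$, and since $\lambda\ge0$ on the support the sign is governed by $\int \lambda L\,dF$. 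The only facts about $F$ I would use are $\int dF=\min\{1,\gamma\}$, $\int\lambda\,dF=1$, and $\int\lambda^2\,dF=1+\tfrac{1}{\gamma}$ (the normalized trace moments of the MP law with edges $\lambda_\pm=(1\pm\sqrt{1/\gamma})^2$). Because each $G_i$ is affine in $\lambda$ and each $\Gamma_i(t)$ is $\lambda$-free, integrating $\lambda$ times any single bounding branch against $dF$ produces an explicit low-degree polynomial in $t$ whose coefficients are assembled from these three moments.

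For the increasing statement I would first note that the upper endpoint $\frac{1}{(1+\sqrt{1/\gamma})^2}=1/\lambda_+$ forces $t\lambda\le t\lambda_+<1$ throughout the support, hence $\Gamma_2(t)=1-t\lambda_+>0$ and a fortiori $\Gamma_1(t)>0$. In this range I would identify $G_1$ as the active (smallest) branch of the lower bound in \eqref{eq:exp_lower_bound} over $[\lambda_-,\lambda_+]$, using $G_1-G_2=t\lambda(A-Bt)$ with $A=2\norm{\theta^*}_2^2\gamma$, $B=2\sigma^2$ together with positivity of the $\Gamma_i$ to compare against $\Gamma_1 G_3,\Gamma_2 G_3$. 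Granting $L\ge G_1$ pointwise, one gets $\int\lambda L\,dF\ge\int\lambda G_1\,dF$, and the moment computation turns the right-hand side into a downward quadratic in $t$; with $\rho=(\gamma+1)\norm{\theta^*}_2^2/\sigma^2$ its discriminant is proportional to $(1+\rho)^2-6\rho=\rho^2-4\rho+1$, which is precisely where the hypothesis $\rho<2-\sqrt3$ is needed to force two real roots. The smaller root equals $\frac{\gamma}{3(\gamma+1)}\bigl((1+\rho)-\sqrt{(1+\rho)^2-6\rho}\bigr)$, which after rationalizing is the stated lower limit $\frac{2\gamma}{\gamma+1}\cdot\frac{\rho}{1+\rho+\sqrt{(1+\rho)^2-6\rho}}$; intersecting the positivity region of the quadratic with $t<1/\lambda_+$ yields the increasing interval.

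For the decreasing statement the engine is identical but uses the upper bound \eqref{eq:exp_upper_bound}: I would integrate $\lambda$ times the branch $G_3=-A(1-t\lambda)+Bt$, whose moment integral is the affine expression $-A+t\bigl(B+A(1+\tfrac{1}{\gamma})\bigr)$, negative exactly for $t<\frac{\gamma c}{(\gamma+1)c+1}$ with $c=\norm{\theta^*}_2^2/\sigma^2$, and unwinding $\rho=(\gamma+1)c$ recovers $\frac{\gamma}{\gamma+1}\cdot\frac{\rho}{\rho+1}$. Here lies the main obstacle: $G_3$ is not a global majorant of $L$, and on the part of the support where $G_3<0$ the genuine majorant furnished by \eqref{eq:exp_upper_bound} is $\Gamma_2 G_3$ rather than $G_3$, so the integral must be split at the sign-change $\lambda_0(t)=(A-Bt)/(At)$ and the $\Gamma$-weighted correction near the lower edge $\lambda_-$ must be tracked. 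The hard part will therefore be the branch bookkeeping in the decreasing direction: showing that, over $\frac14<\gamma<4$ and $\rho<2-\sqrt3$, this correction does not overturn the negativity certified by the $G_3$ moment integral on the claimed interval. The increasing direction, by contrast, passes cleanly because $\lambda\ge0$ lets a single pointwise branch inequality survive the integration intact.
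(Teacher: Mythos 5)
Your reduction of the sign of $\frac{d}{dt}\oR_{X_{\infty},\theta^*}(t)$ to $\frac{1}{\gamma}\int\lambda L(t,\lambda)\,dF(\lambda)$, your use of the moments $\int\lambda\,dF=1$, $\int\lambda^2\,dF=1+1/\gamma$, and your treatment of the increasing interval follow the paper's proof essentially step for step: the paper likewise integrates the branches of Lemma~\ref{fact:exp} to form $E_1,E_2,E_3$ and reads the endpoints off the roots of the downward quadratic $E_1$ (your rationalized smaller root is exactly the paper's lower endpoint). One repair is needed in this half: $G_1$ is \emph{not} literally the smallest of the four lower-bound branches on all of $[\lambda_-,\lambda_+]$ — wherever $G_3>0$ one can have $\Gamma_2G_3<G_1$ (e.g.\ near $\lambda_-$) — so ``$L\ge G_1$ pointwise'' cannot be read off the statement of Lemma~\ref{fact:exp}; it does hold when $t\lrp{1+\sqrt{1/\gamma}}^2\le1$ and $t\ge A/B$, but one must rerun the case analysis inside the lemma's proof (splitting on the sign of the factor $-2\norm{\theta^*}_2^2\gamma e^{-t\lambda}+2\sigma^2(1-e^{-t\lambda})/\lambda$) to see it. With that repair this half is correct, and indeed more careful than the paper, which silently replaces the integral of a pointwise minimum by the minimum of integrals — a step legitimate only because of the structural fact you isolated, that $G_1-G_2=t\lambda(A-Bt)$ has a $\lambda$-free sign. (A shared edge-case overclaim: both you and the paper cap the increasing interval at $1/\lrp{1+\sqrt{1/\gamma}}^2$, implicitly assuming the upper root of $E_1$ lies beyond it; for $\gamma$ near $4$ and $\rho$ near $2-\sqrt{3}$ it does not.)

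The genuine gap is the one you flagged yourself, and it cannot be closed by the bookkeeping you propose. The claimed decreasing interval is exactly $\{t:E_3(t)<0\}$, with right endpoint $t_{\mathrm{end}}=\frac{\gamma}{\gamma+1}\cdot\frac{\rho}{\rho+1}$, so $E_3(t_{\mathrm{end}})=0$. At this $t$ the sign change of $G_3$ sits at $\lambda_0(t_{\mathrm{end}})=\frac{A-Bt_{\mathrm{end}}}{At_{\mathrm{end}}}=1+\frac{1}{\gamma}$, the exact midpoint of the bulk $[(1-\sqrt{1/\gamma})^2,(1+\sqrt{1/\gamma})^2]$. Since $0<\Gamma_2<\Gamma_1<1$ and $G_2\le G_3$ in this regime, the sharpest pointwise majorant that \eqref{eq:exp_upper_bound} supplies is $\max\{G_3,\Gamma_2G_3\}$, and its integral at $t_{\mathrm{end}}$ equals
\[
E_3(t_{\mathrm{end}})+\Par{1-\Gamma_2(t_{\mathrm{end}})}\int_{\lambda<1+1/\gamma}\lambda\,\bigl|G_3\bigr|\,dF(\lambda)
\;=\;0+t_{\mathrm{end}}\lrp{1+\sqrt{1/\gamma}}^2\int_{\lambda<1+1/\gamma}\lambda\,\bigl|G_3\bigr|\,dF(\lambda)\;>\;0,
\]
strictly, and by continuity it stays positive on a neighborhood of $t_{\mathrm{end}}$. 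So the $\Gamma$-weighted correction \emph{does} overturn the negativity certified by $E_3$ near the right endpoint: no branch bookkeeping based on Lemma~\ref{fact:exp} can prove the risk is decreasing on the whole stated interval, and finishing the proof would require a genuinely sharper pointwise upper bound on $L$ (e.g.\ retaining the prefactor $e^{-t\lambda}$ rather than bounding it by $1$ or $1-t\lambda$). For what it is worth, the paper's own proof does not cross this gap either: it passes from $\max\{E_3,\Gamma_1E_2,\Gamma_2E_2,\Gamma_1E_3,\Gamma_2E_3\}<0$ to $\frac{d}{dt}\oR_{X_{\infty},\theta^*}(t)<0$, silently exchanging the maximum with the integral — precisely the step you identified as the hard part. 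Your diagnosis thus exposes a real defect in the paper's argument, but as a proof attempt the decreasing half of the theorem remains unproven.
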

Therefore, the optimal early stopping time $\topt $ satisfy:
\[
\frac{\gamma}{\gamma+1} \frac{\rho}{\rho + 1}
\leq \topt 
\leq \frac{2\gamma}{\gamma+1} \frac{\rho}{ 1+\rho +\sqrt{(1+\rho )^2-6\rho } }.
\]

Theorem~\ref{thm:asymptotic} shows that when $\frac 1 4<\gamma <4$, $\topt = \Theta\Par{\frac{\gamma}{\gamma+1}} = \Theta\Par{\frac n {n+d}}$ still holds, which completes the missing piece of Theorem~\ref{thm: over_optstopping}.

\begin{proof}
By \eqref{eq:M_P_dist}, it suffices to find the positive and negative parts of 
\begin{align*}
	&\int_{(\sqrt{1/\gamma}-1)^2}^{(\sqrt{1/\gamma}+1)^2} \left( - {2 \norm{\theta^*}_2^2\gamma} \lambda \exp\Par{-2t\lambda} + 2\sigma^2 \Par{\exp(-t\lambda)-\exp(-2t\lambda)} \right) d F(\lambda) \\
	&= \int_{(\sqrt{1/\gamma}-1)^2}^{(\sqrt{1/\gamma}+1)^2} \left( - {2 \norm{\theta^*}_2^2\gamma} \exp\Par{-2t\lambda} + 2\sigma^2 \frac{\exp(-t\lambda)-\exp(-2t\lambda)}{\lambda} \right) \lambda d F(\lambda),
\end{align*}
where $\lambda d F(\lambda) = \frac{\gamma}{2\pi} { \sqrt{ ( (\sqrt{\gamma}+1)^2 - \lambda )( \lambda - (\sqrt{\gamma}-1)^2 ) } } d\lambda$.

We note that 
\begin{align*}
	\int_{(\sqrt{1/\gamma}-1)^2}^{(\sqrt{1/\gamma}+1)^2} \lambda d F(\lambda)
	= \lim_{d, n\rightarrow\infty} \frac{1}{d} \mathrm{Tr} \lrp{ \E{ \left[ \frac 1 n X_n^\top X_n \right] } } 
	= 1,
\end{align*}
and that
\begin{align*}
	\int_{(\sqrt{1/\gamma}-1)^2}^{(\sqrt{1/\gamma}+1)^2} \lambda^2 d F(\lambda)
	= \lim_{d,n\rightarrow\infty} \frac{1}{d} \E{ \lrn{\frac 1 n \lrp{X_n^\top X_n}} }_F^2
	= 1 +\frac 1 \gamma.
\end{align*}

Applying these results on $G_1(t,\lambda)$, $G_2(t,\lambda)$, and $G_3(t,\lambda)$ (defined in Lemma~\ref{fact:exp}), we obtain that 
\begin{align*}
	E_1(t) = \int_{(\sqrt{1/\gamma}-1)^2}^{(\sqrt{1/\gamma}+1)^2} G_1(t,\lambda) \lambda d F(\lambda) 
	&= - {2 \norm{\theta^*}_2^2\gamma} (1 - t (1+1/\gamma) ) + 2\sigma^2 \lrp{ t - \frac{3}{2} t^2 (1+1/\gamma) }, \\
	E_2(t) = \int_{(\sqrt{1/\gamma}-1)^2}^{(\sqrt{1/\gamma}+1)^2} G_2(t,\lambda) \lambda d F(\lambda) 
	&= - {2 \norm{\theta^*}_2^2\gamma} + 2\sigma^2 \lrp{ t - \frac{1}{2} t^2 (1+1/\gamma) }, \\
	E_3(t) = \int_{(\sqrt{1/\gamma}-1)^2}^{(\sqrt{1/\gamma}+1)^2} G_3(t,\lambda) \lambda d F(\lambda) 
	&= - {2 \norm{\theta^*}_2^2\gamma} (1 - t (1+1/\gamma) ) + 2\sigma^2 t.
\end{align*}

From inequalities~\eqref{eq:exp_lower_bound} and~\eqref{eq:exp_upper_bound} in Lemma~\ref{fact:exp}, we know that $\frac d {dt} \oR_{X_{\infty}, \theta^*}(t) > 0$ if 
$$\min\left\{ E_1(t), E_2(t), \Gamma_1(t) E_3(t), \Gamma_2(t) E_3(t) \right\} > 0,$$ 
and $\frac d {dt} \oR_{X_{\infty}, \theta^*}(t) < 0$ if
$$\max\left\{ E_3(t), \Gamma_1(t) E_2(t), \Gamma_2(t) E_2(t), \Gamma_1(t) E_3(t), \Gamma_2(t) E_3(t) \right\} < 0.$$

\paragraph{Increasing interval:}
We can develop this condition further to be: $\frac d {dt} \oR_{X_{\infty}, \theta^*}(t) > 0$ if the following event happens:
\begin{align*}
	\{ E_1(t) > 0 \} \bigcap \{ E_2(t) > 0 \} \bigcap \bigg( &\lrp{ \{ \Gamma_1(t) >0 \} \bigcap \{\Gamma_2(t)>0\} \bigcap \{E_3(t)>0\} } \\
	&\bigcup \lrp{ \{ \Gamma_1(t) < 0 \} \bigcap \{\Gamma_2(t)<0\} \bigcap \{E_3(t)<0\} } \bigg).
\end{align*}

	$E_1(t) > 0$ is equivalent to the condition that $1+{\gamma} > (2+\sqrt{3})\frac{\sigma^2}{\lrn{\theta^*}_2^2}$ or $1+{\gamma} < (2-\sqrt{3})\frac{\sigma^2}{\lrn{\theta^*}_2^2}$, and that 
	\begin{multline*}
		\frac{ 1 + \frac{\lrn{\theta^*}_2^2}{\sigma^2}\Par{\gamma+1} - \sqrt{\lrp{1 + \frac{\lrn{\theta^*}_2^2}{\sigma^2}\Par{\gamma+1}}^2 - 6\frac{\lrn{\theta^*}_2^2}{\sigma^2}\Par{\gamma+1} } }{3(1/\gamma+1)} \\
		< t 
		< 	\frac{ 1 + \frac{\lrn{\theta^*}_2^2}{\sigma^2}\Par{\gamma+1} + \sqrt{\lrp{1 + \frac{\lrn{\theta^*}_2^2}{\sigma^2}\Par{\gamma+1}}^2 - 6\frac{\lrn{\theta^*}_2^2}{\sigma^2}\Par{\gamma+1} } }{3(1/\gamma+1)}.
	\end{multline*}

	$E_2(t) > 0$ is equivalent to the condition that $1+{\gamma} < \frac{1}{2} \frac{\sigma^2}{\lrn{\theta^*}_2^2}$ and that 
	\[
	\frac{ \gamma - \gamma\sqrt{ 1-2\Par{\gamma+1} \frac{\lrn{\theta^*}_2^2}{\sigma^2} } }{\gamma+1} < 
	t < \frac{ \gamma + \gamma\sqrt{ 1-2\Par{\gamma+1} \frac{\lrn{\theta^*}_2^2}{\sigma^2} } }{\gamma+1}.
	\]
	Then the event $\{ E_1(t) > 0 \} \bigcap \{ E_2(t) > 0 \}$ leads to $\rho<2-\sqrt{3}$ and that $t$ needs to satisfy the following condition for both events to happen:
	\[
	\frac{2\gamma}{\gamma+1} \cdot \frac{\rho}{ 1+\rho +\sqrt{(1+\rho )^2-6\rho } } 
	< t 
	< \frac{\gamma}{3(\gamma+1)} \lrp{1+\rho+\sqrt{(1+\rho)^2-6\rho}}.
	\]

	Since 
	\[
	t > \frac{2\gamma}{\gamma+1} \cdot \frac{\rho}{ 1+\rho +\sqrt{(1+\rho )^2-6\rho } } 
	> \frac{\gamma}{\gamma+1} \cdot \frac{\rho}{\rho+1}
	= \frac{{\gamma}}{\rho+1}  \frac{\lrn{\theta^*}_2^2}{\sigma^2},
	\]
	$E_3(t) > 0$.
	Then $\{ \Gamma_1(t) >0 \} \bigcap \{\Gamma_2(t)>0\}$ leads to $t < \frac{1}{\lrp{1+\sqrt{1/\gamma}}^2}$.

Combining the intervals, we obtain the sufficient condition for $\frac d {dt} \oR_{X_{\infty}, \theta^*}(t) > 0$ to be $\rho = \frac{\gamma+1}{\gamma} \frac{\lrn{\theta^*}_2^2}{\sigma^2} < 2 - \sqrt{3}$ and
\begin{align}
	\frac{2\gamma}{\gamma+1} \cdot \frac{\rho}{ 1+\rho +\sqrt{(1+\rho )^2-6\rho } } 
	< t 
	< \frac{1}{\lrp{1+\sqrt{1/\gamma}}^2}.
	\label{eq:increasing_interval}
\end{align}

\paragraph{Decreasing interval:}
We assume that $\rho < 2 - \sqrt{3}$.

	$E_3(t)<0$ leads to
	\[
	t < \frac{\gamma}{\gamma+1} \cdot \frac{\rho}{\rho+1}.
	\]
	Under this condition, $\Gamma_1(t) > \Gamma_2(t) > 0$.
	We therefore need $E_2(t)<0$ and $E_3(t)<0$.

	$E_2(t) < 0$ is equivalent to 
	\[
	t < \frac{\gamma}{\gamma+1} \cdot \frac{2\rho}{1+\sqrt{1-2\rho}} \qquad
	\text{or} \qquad
	t > \frac{\gamma}{\gamma+1} \lrp{1+\sqrt{1-2\rho}}.
	\]

	$E_1(t) < 0$ is equivalent to
	\[
	t < \frac{2\gamma}{\gamma+1} \cdot \frac{\rho}{ 1+\rho +\sqrt{(1+\rho )^2-6\rho } }  \qquad
	\text{or} \qquad
	t > \frac{\gamma}{3(\gamma+1)} \lrp{1+\rho+\sqrt{(1+\rho)^2-6\rho}}.
	\]

Joining the intervals under the condition that $\rho<2-\sqrt{3}$, we obtain that the sufficient condition for $\frac d {dt} \oR_{X_{\infty}, \theta^*}(t) < 0$ is
\begin{align}
	t < \frac{\gamma}{\gamma+1} \cdot \frac{\rho}{\rho+1}.
	\label{eq:decreasing_interval}
\end{align}
\end{proof}

Now, we prove Lemma~\ref{fact:exp}.
\begin{proof}[Proof of Lemma~\ref{fact:exp}]
We first state an expansion result about the exponential function:
\begin{align} 1 - t\lambda + \frac{1}{2} t^2\lambda^2 \leq \exp\lrp{-t\lambda} &\leq 1 - t\lambda \exp\lrp{-t\lambda}. \nonumber
\end{align}
We decompose our argument into two terms:

\begin{align*}
 & - {2 \norm{\theta^*}_2^2\gamma} \exp\Par{-2t\lambda} + 2\sigma^2 \frac{\exp(-t\lambda)-\exp(-2t\lambda)}{\lambda} \\
= & \exp(-t\lambda) \cdot \lrp{ - {2 \norm{\theta^*}_2^2\gamma} \exp\Par{-t\lambda} + 2\sigma^2 \frac{1-\exp(-t\lambda)}{\lambda} }.
\end{align*}

We can bound the above terms by
\begin{equation}
    1-t\lambda \leq \exp (-t\lambda )\leq 1,
    \label{eq:exp1}
\end{equation}
and
\begin{equation}
\begin{aligned}
- {2 \norm{\theta^*}_2^2\gamma} + 2\sigma^2 \lrp{ t - \frac{1}{2} t^2 \lambda } 
& \leq - {2 \norm{\theta^*}_2^2\gamma} \exp\Par{-t\lambda} + 2\sigma^2 \frac{1-\exp(-t\lambda)}{\lambda} \\
& \leq - {2 \norm{\theta^*}_2^2\gamma} \lrp{1-t\lambda} + 2 \sigma^2 t. 
\end{aligned}
\label{eq:exp2}
\end{equation}
Then, combining \eqref{eq:exp1} and \eqref{eq:exp2}, we have
\begin{align*}
& - 2 \norm{\theta^*}_2^2\gamma \exp\Par{-2t\lambda} + 2\sigma^2 \frac{\exp(-t\lambda)-\exp(-2t\lambda)}{\lambda} \\
\geq &  \min\Bigg\{ (1 - t\lambda)\lrp{-2 \norm{\theta^*}_2^2\gamma + 2\sigma^2 \lrp{ t - \frac{1}{2} t^2 \lambda }}, - 2 \norm{\theta^*}_2^2\gamma+ 2\sigma^2 \lrp{ t - \frac{1}{2} t^2 \lambda }, \\
&(1-t\lambda)\lrp{ - 2 \norm{\theta^*}_2^2\gamma \lrp{1-t\lambda} + 2 \sigma^2 t } \Bigg\} \\
\geq & \min\Bigg\{ - 2 \norm{\theta^*}_2^2\gamma (1 - t\lambda) + 2\sigma^2 \lrp{ t - \frac{3}{2} t^2 \lambda }, 
 - 2 \norm{\theta^*}_2^2\gamma + 2\sigma^2 \lrp{ t - \frac{1}{2} t^2 \lambda }, \\
&\lrp{1-t(1+\sqrt{\gamma})^2} \lrp{ -2 \norm{\theta^*}_2^2\gamma \lrp{1-t\lambda} + 2 \sigma^2 t }, 
 \lrp{1-t(1-\sqrt{\gamma})^2} \lrp{ -2 \norm{\theta^*}_2^2\gamma \lrp{1-t\lambda} + 2 \sigma^2 t } \Bigg\},
\end{align*}
where we have used the fact that $\lambda\in[(1-\sqrt{1/\gamma})^2, (1+\sqrt{1/\gamma})^2]$.
On the flip side, we obtain that 
\begin{equation*}
\begin{aligned}
- {2 \norm{\theta^*}_2^2\gamma} \exp\Par{-2t\lambda} + 2\sigma^2 \frac{\exp(-t\lambda)-\exp(-2t\lambda)}{\lambda} \\
\leq \max\Bigg\{ - 2 \norm{\theta^*}_2^2\gamma \lrp{1-t\lambda} + 2 \sigma^2 t,
&\ (1 - t\lambda)\lrp{- 2 \norm{\theta^*}_2^2\gamma + 2\sigma^2 \lrp{ t - \frac{1}{2} t^2 \lambda }}, \\
&\ (1 - t\lambda)\lrp{ - 2 \norm{\theta^*}_2^2\gamma \lrp{1-t\lambda} + 2 \sigma^2 t } \Bigg\} \\
\leq \max\Bigg\{ - 2 \norm{\theta^*}_2^2\gamma \lrp{1-t\lambda} + 2 \sigma^2 t,
& \lrp{1-t(1+\sqrt{1/\gamma})^2}\lrp{- 2 \norm{\theta^*}_2^2\gamma + 2\sigma^2 \lrp{ t - \frac{1}{2} t^2 \lambda }}, \\
& \lrp{1-t(1-\sqrt{1/\gamma})^2}\lrp{- 2 \norm{\theta^*}_2^2\gamma + 2\sigma^2 \lrp{ t - \frac{1}{2} t^2 \lambda }},\\
& \lrp{1-t(1+\sqrt{1/\gamma})^2}\lrp{ - 2 \norm{\theta^*}_2^2\gamma \lrp{1-t\lambda} + 2 \sigma^2 t }, \\
& \lrp{1-t(1-\sqrt{1/\gamma})^2}\lrp{ - 2 \norm{\theta^*}_2^2\gamma \lrp{1-t\lambda} + 2 \sigma^2 t }.
\Bigg\}
\end{aligned}
\end{equation*}
For
\begin{align*}
G_1(t,\lambda) &= - 2 \norm{\theta^*}_2^2\gamma (1 - t\lambda) + 2\sigma^2 \lrp{ t - \frac{3}{2} t^2 \lambda }, \\
G_2(t,\lambda) &= -2 \norm{\theta^*}_2^2\gamma + 2\sigma^2 \lrp{ t - \frac{1}{2} t^2 \lambda }, \\
G_3(t,\lambda) &= -2 \norm{\theta^*}_2^2\gamma \lrp{1-t\lambda} + 2 \sigma^2 t; \\ 
\Gamma_1(t) &= 1-t(1-\sqrt{1/\gamma})^2, \\
\Gamma_2(t) &= 1-t(1+\sqrt{1/\gamma})^2,
\end{align*}
we have
\begin{multline*}
- 2 \norm{\theta^*}_2^2\gamma \exp\Par{-2t\lambda} + 2\sigma^2 \frac{\exp(-t\lambda)-\exp(-2t\lambda)}{\lambda} \\
\geq \min\left\{ G_1(t,\lambda), G_2(t,\lambda), \Gamma_1(t) G_3(t,\lambda), \Gamma_2(t) G_3(t,\lambda) \right\},
\end{multline*}
and 
\begin{multline*}
-2 \norm{\theta^*}_2^2\gamma\exp\Par{-2t\lambda} + 2\sigma^2 \frac{\exp(-t\lambda)-\exp(-2t\lambda)}{\lambda} \\
\leq \max\left\{ G_3(t,\lambda), \Gamma_1(t) G_2(t,\lambda), \Gamma_2(t) G_2(t,\lambda), \Gamma_1(t) G_3(t,\lambda), \Gamma_2(t) G_3(t,\lambda) \right\}.
\end{multline*}
\end{proof} %

\section{Technical Lemmas}
\begin{lemma}[\citep{davidson2001local}]
	\label{lem:eig_concentration}
	Let $A$ be a $m \times N$ matrix with $N \geq m$  whose entries are real, independent Gaussian random variables following $\mathcal{N}(0,1)$. Let $\sigma_1(A) \geq \sigma_2(A) \geq ... \geq \sigma_m(A)$ be the singular values of $A$.
	Then,
	\begin{equation*}
		\begin{aligned}
			\Pr\Brack{  \sqrt{N} - \sqrt{m} - t \leq \sigma_m\Par{A}  \leq \sigma_1(A) \leq  \sqrt{N} + \sqrt{m} +t  } \leq 1- 2\exp(-t^2/2).
		\end{aligned}
	\end{equation*}
\end{lemma}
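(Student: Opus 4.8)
\section*{Proof proposal for Lemma~\ref{lem:eig_concentration}}

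The plan is to combine a Gaussian comparison inequality, which pins down the \emph{expectations} of the extreme singular values, with Gaussian concentration of measure, which controls their \emph{fluctuations} around those expectations. This two-step decoupling is the standard route to the Davidson--Szarek bound, and it is what I would reproduce here.

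First I would bound the means. Using the variational characterizations $\sigma_1(A) = \max_{\norm{u}_2 = \norm{v}_2 = 1} u^\top A v$ and $\sigma_m(A) = \min_{\norm{u}_2 = 1}\max_{\norm{v}_2 = 1} u^\top A v$, observe that $(u,v) \mapsto u^\top A v$ is a centered Gaussian process on the product of the two unit spheres, with covariance $\inprod{u}{u'}\inprod{v}{v'}$. Gordon's Gaussian comparison (min--max) inequality compares this process against a decoupled process of the form $\inprod{g}{u} + \inprod{h}{v}$, with $g \in \R^m$ and $h \in \R^N$ independent standard Gaussian vectors whose extreme values are governed by $\E\norm{g}_2 \approx \sqrt m$ and $\E\norm{h}_2 \approx \sqrt N$. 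Carrying out the comparison in both the max--max and the min--max directions produces the constant-sharp mean bounds
\begin{equation*}
\E[\sigma_1(A)] \leq \sqrt N + \sqrt m, \qquad \E[\sigma_m(A)] \geq \sqrt N - \sqrt m.
\end{equation*}

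Next I would invoke concentration. Identifying $A$ with the standard Gaussian vector formed by its $mN$ entries, each map $A \mapsto \sigma_i(A)$ is $1$-Lipschitz with respect to the Frobenius norm: by Weyl's perturbation bound, $\lvert \sigma_i(A) - \sigma_i(B)\rvert \leq \norm{A - B}_{\textup{op}} \leq \norm{A - B}_F$. The Gaussian concentration (Borell--TIS) inequality then gives, for any $1$-Lipschitz $f$ and any $t \geq 0$, both $\Pr[\,f(A) \geq \E f(A) + t\,] \leq \exp(-t^2/2)$ and $\Pr[\,f(A) \leq \E f(A) - t\,] \leq \exp(-t^2/2)$. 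Applying this to $f = \sigma_1$ (upper tail) and $f = \sigma_m$ (lower tail) and substituting the mean bounds from the first step yields
\begin{align*}
\Pr\big[\,\sigma_1(A) \geq \sqrt N + \sqrt m + t\,\big] &\leq \exp(-t^2/2), \\
\Pr\big[\,\sigma_m(A) \leq \sqrt N - \sqrt m - t\,\big] &\leq \exp(-t^2/2).
\end{align*}
A union bound over these two failure events shows that the event in which both extreme singular values lie in the stated window has probability at least $1 - 2\exp(-t^2/2)$. (The inequality as displayed in the statement should accordingly read $\geq 1 - 2\exp(-t^2/2)$.)

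The Lipschitz estimate and the concentration step are routine. The main obstacle is the first step: obtaining the \emph{constant-free} mean bounds $\sqrt N \pm \sqrt m$ rather than merely order-correct estimates. This is precisely where Gordon's inequality is essential, since a naive $\epsilon$-net over the spheres followed by a union bound would lose the sharp constants and fail to separate the $\sqrt N - \sqrt m$ lower edge. Because the result coincides exactly with the Davidson--Szarek bound, an acceptable alternative is simply to cite \citep{davidson2001local} directly; the sketch above records the self-contained derivation.
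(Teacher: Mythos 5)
The paper offers no proof of this lemma: it is imported verbatim from \citep{davidson2001local}, so the ``paper's proof'' is the citation itself, and your sketch is precisely the argument given in that reference and in standard texts --- Gordon's min--max comparison on the product of spheres for the sharp mean bounds, Weyl's perturbation bound to make $\sigma_1$ and $\sigma_m$ $1$-Lipschitz in the Frobenius norm, Borell--TIS for the $\exp(-t^2/2)$ tails, and a union bound. Your sketch is correct, and you are also right that the displayed inequality in the paper's statement is a typo: the probability of the stated event is at \emph{least} $1-2\exp(-t^2/2)$, not at most. The one place where your ``$\approx$'' hides real content is the lower edge: Gordon yields $\E[\sigma_m(A)] \geq \E\norm{h}_2 - \E\norm{g}_2 = c_N - c_m$ with $c_k = \sqrt{2}\,\Gamma((k+1)/2)/\Gamma(k/2)$, and the Jensen bound $c_k \leq \sqrt{k}$ alone does \emph{not} imply $c_N - c_m \geq \sqrt{N} - \sqrt{m}$; one needs the additional monotonicity fact that $\sqrt{k} - c_k$ is nonincreasing in $k$ (equivalently, the deficit $\sqrt{k}-c_k$ shrinks as $k$ grows), which is verified in \citep{davidson2001local}. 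Since your derivation, once this point is patched, coincides exactly with the cited source, directly invoking \citep{davidson2001local}, as the paper does, remains the cleanest course, with your sketch serving as an accurate self-contained record of why the bound holds with these constants.
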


\begin{lemma}
	\label{lem:eigenexpect}
	Let $A$ be a $m \times N$ matrix whose entries are real, independent Gaussian random variables following $\mathcal{N}(0,1)$. Let $\lambda_1(A) \geq \lambda_2(A) \geq ... \geq \lambda_N(A)$ be the eigenvalues of $\frac 1 m A^\top A$. Then, 
	\label{lem:eigenratio}
	\[
	\E \Brack{\sum_{i=1}^{N}\lambda_i} = N
	\]
	and
	\[
	\E \Brack{\sum_{i=1}^{N}\lambda_i^2} = \frac N m(m + N + 2).
	\]
\end{lemma}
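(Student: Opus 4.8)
The plan is to translate both power sums into traces and then reduce to low-order Gaussian moments. Writing $S = \frac{1}{m}A^\top A$, the eigenvalue sums are exactly $\sum_{i=1}^N \lambda_i = \tr(S)$ and $\sum_{i=1}^N \lambda_i^2 = \tr(S^2) = \frac{1}{m^2}\tr\Par{(A^\top A)^2}$, and these identities hold irrespective of the rank of $A$, so the possibly vanishing eigenvalues when $m < N$ cause no trouble. The first claim is then immediate: $\tr(A^\top A) = \sum_{i,k} A_{ik}^2$, and since each $A_{ik}^2$ has mean one, $\E[\tr(S)] = \frac{1}{m}\cdot mN = N$.

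For the second claim I would expand the trace entrywise. Using $(A^\top A)_{kl} = \sum_i A_{ik}A_{il}$ and symmetry of $A^\top A$,
\begin{equation*}
\tr\Par{(A^\top A)^2} = \sum_{k,l}\Par{\sum_i A_{ik}A_{il}}^2 = \sum_{i,j,k,l} A_{ik}A_{il}A_{jk}A_{jl}.
\end{equation*}
Taking expectations, each summand is a product of four independent standard Gaussian entries, so it survives only when the four index pairs $(i,k),(i,l),(j,k),(j,l)$ coincide in a pattern that leaves every distinct entry with an even multiplicity. I would split the count according to the coincidence pattern of the pair $(i,j)$ and the pair $(k,l)$, invoking $\E[A_{ik}^2]=1$ and $\E[A_{ik}^4]=3$. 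Assembling the surviving contributions and dividing by $m^2$ then yields the closed form.

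Alternatively---and I think more cleanly---I would view $A^\top A = \sum_{a=1}^m z_a z_a^\top$ as a sum over the i.i.d. rows $z_a \sim \mathcal{N}(0, I_N)$, giving $\tr\Par{(A^\top A)^2} = \sum_{a,b} \inprod{z_a}{z_b}^2$. The diagonal terms contribute $\E[\norm{z_a}_2^4]$, a fourth moment of a $\chi^2_N$ variable, while the off-diagonal terms use independence of $z_a, z_b$ to give $\E[\inprod{z_a}{z_b}^2] = N$; this route replaces the four-index bookkeeping with a single chi-square moment computation.

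The only real obstacle is the combinatorial bookkeeping in either route: one must enumerate exactly which coincidence patterns of $(i,j)$ and $(k,l)$ survive the Gaussian expectation and, crucially, weight the all-coincident pattern by the fourth moment rather than by the square of the second moment, since this is precisely where the constant in the final term is pinned down. Everything else is linearity of expectation together with the elementary trace identities above.
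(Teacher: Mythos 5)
Your strategy---reducing both power sums to traces and then to low-order Gaussian moments---is exactly the paper's route, and your first identity is complete and correct. The genuine gap is in the second identity: you defer the one step that carries all the content, asserting that ``assembling the surviving contributions and dividing by $m^2$ then yields the closed form,'' and that assertion is false. Carrying out either of your two routes produces a \emph{different} constant from the one in the statement. In your row-based route, $\E\Brack{\norm{z_a}_2^4}$ is the second moment of a $\chi^2_N$ variable, namely $N(N+2)$, and each off-diagonal term gives $\E\Brack{\inprod{z_a}{z_b}^2}=N$, so
\begin{equation*}
\E\Brack{\tr\Par{\Par{A^\top A}^2}} \;=\; m\,N(N+2) \;+\; m(m-1)\,N \;=\; mN\Par{m+N+1},
\end{equation*}
whence $\E\Brack{\sum_i \lambda_i^2} = \frac{N}{m}\Par{m+N+1}$, not $\frac{N}{m}\Par{m+N+2}$. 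The four-index count in your first route agrees: the terms with $i=j$, $k=l$ contribute $3mN$ (fourth moment), those with $i=j$, $k\neq l$ contribute $mN(N-1)$, those with $i\neq j$, $k=l$ contribute $mN(m-1)$, and everything else vanishes, again totalling $mN(m+N+1)$.

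So your plan, properly completed, does not prove the lemma as stated---it corrects it. The discrepancy is not on your side: the paper's own proof miscounts the number of nonvanishing summands as $mN(m+N)$, whereas inclusion--exclusion over the events $\Brace{k=l}$ and $\Brace{i=j}$ gives $mN^2 + m^2N - mN = mN(m+N-1)$; propagating that slip (then adding $2mN$ for the fourth-moment terms) is precisely what manufactures the spurious $+2$. A one-line sanity check settles it: for $m=N=1$, $\sum_i\lambda_i^2 = A_{11}^4$ has expectation $3 = \frac{N}{m}(m+N+1)$, while the stated formula would give $4$. The moral for your write-up is that the ``only real obstacle'' you flagged---weighting the all-coincident pattern by the fourth moment and counting patterns exactly---is indeed where the constant is pinned down, and it is exactly the step you cannot leave unexecuted, since here it changes the answer. (Downstream, the damage is mild: the lemma feeds constant factors into Theorem~\ref{thm: under_optstopping} and Proposition~\ref{thm:optrisk_under}, and replacing $n+d+2$ by $n+d+1$ leaves those conclusions qualitatively intact.)
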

\begin{proof}
	We first note that
	\begin{align*}
		m\E\Brack{\sum_{i=1}^N\lambda_i} = \E\Brack{\mathrm{Tr} \lrp{ {A}^\top {A} } } 
		= \E\Brack{ \lrn{{A}}_F^2 }
		= mN.
	\end{align*}
	We then note that 
	\begin{align*}
		m^2\E\Brack{\sum_{i=1}^N \lambda_i^2} = \E\Brack{\mathrm{Tr} \lrp{ \lrp{{A}^\top {A}}^2 } } 
		= \E\Brack{ \lrn{{A}^\top {A}}_F^2 },
	\end{align*}
	where 
	\[
	\lrp{ {A}^\top {A} }_{i,j} = \sum_{k=1}^m {A}_{k,i} {A}_{k,j}.
	\]
	Therefore, 
	\begin{align*}
		\E\Brack{ \lrn{{A}^\top {A}}_F^2 }
		= \sum_{i=1}^N \sum_{j=1}^N \E\Brack{ \lrp{\sum_{k=1}^m {A}_{k,i} {A}_{k,j}}^2 }
		= \sum_{i=1}^N \sum_{j=1}^N \sum_{k=1}^m \sum_{l=1}^m \E\Brack{{A}_{k,i} {A}_{k,j} {A}_{l,i} {A}_{l,j} }.
	\end{align*}
	Since all the odd moments of $x$ are zero, the nonzero elements in the above equation appear when $k=l$ or when $i=j$.
	Hence the number of nonzero elements is $mN(m+N)$.
	When either $k=l$ or $i=j$ happens, the second order moment of $\widetilde{a}$ is $1$.
	When both $k=l$ and $i=j$ happen, the fourth order moment of $\widetilde{a}$ is $3$.
	Therefore, 
	\begin{align*}
		\E\Brack{\sum_{i=1}^N \lambda_i^2}
		= \frac 1 {m^2}\sum_{i=1}^N \sum_{j=1}^N \sum_{k=1}^m \sum_{l=1}^m \E\Brack{ {A}_{k,i} {A}_{k,j} {A}_{l,i} {A}_{l,j} }
		= \frac N m(m+N+2).
	\end{align*}
\end{proof}

\begin{lemma}[Cauchy Interlacing Theorem (Corollary 4.3.9, \cite{horn2012matrix})]
\label{lem:interlacing}
Let $A \in \R^{m\times m}$ be a symetric matrix. Let $\lambda_1 \geq \lambda_2 \geq ...\geq \lambda_n$ be the eigenvalues of the matrix $A$ and let $z\in \R^m$ be a nonzero vector. Then, 
\begin{equation*}
	\label{lem:interlacing}
	\begin{aligned}
	\lambda_i(A) &\leq \lambda_i (A + zz^\top) \leq \lambda_{i-1}(A), \;\; i = 2,...,n\\
	\lambda_1(A) & \leq \lambda_1(A + zz^\top).
	\end{aligned}
\end{equation*}
\end{lemma}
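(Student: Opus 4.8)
The plan is to prove this rank-one interlacing result directly from the Courant--Fischer variational characterization of eigenvalues, exploiting the fact that the perturbation $zz^\top$ is positive semidefinite and of rank one. Write $B = A + zz^\top$, index eigenvalues in decreasing order (so the bottom index should read $\lambda_m$ for the $m\times m$ matrix $A$), and recall the two dual forms $\lambda_k(M) = \max_{\dim S = k} \min_{0 \neq x \in S} \frac{\inprod{x}{Mx}}{\norm{x}_2^2}$ and $\lambda_k(M) = \min_{\dim S = m - k + 1} \max_{0 \neq x \in S} \frac{\inprod{x}{Mx}}{\norm{x}_2^2}$. The key structural observation is that $\inprod{x}{Bx} = \inprod{x}{Ax} + \Par{\inprod{z}{x}}^2$, so $B$ dominates $A$ everywhere and agrees with $A$ exactly on the hyperplane $W = \Brace{x : \inprod{z}{x} = 0}$.

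The lower bounds $\lambda_i(A) \leq \lambda_i(B)$ for all $i$ (including $i = 1$) are the easy half: since $\inprod{x}{Bx} \geq \inprod{x}{Ax}$ for every $x$, substituting into the max--min form immediately yields $\lambda_i(B) \geq \lambda_i(A)$. This is just monotonicity of eigenvalues under a positive semidefinite perturbation.

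For the upper bounds $\lambda_i(B) \leq \lambda_{i-1}(A)$ with $i \geq 2$, I would construct a single good test subspace for the min--max form of $\lambda_i(B)$ that lies both in $W$ and in the span of the bottom eigenvectors of $A$. Let $u_1, \ldots, u_m$ be orthonormal eigenvectors of $A$ and set $V = \mathrm{span}\Par{u_{i-1}, \ldots, u_m}$, so $\dim V = m - i + 2$ and $\inprod{x}{Ax} \leq \lambda_{i-1}(A)\norm{x}_2^2$ for every $x \in V$. Since $z \neq 0$ we have $\dim W = m - 1$, and a dimension count gives $\dim(V \cap W) \geq (m - i + 2) + (m - 1) - m = m - i + 1$. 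Picking any $T \subseteq V \cap W$ with $\dim T = m - i + 1$, each $x \in T$ satisfies $\inprod{x}{Bx} = \inprod{x}{Ax} \leq \lambda_{i-1}(A)\norm{x}_2^2$, so the min--max form yields $\lambda_i(B) \leq \max_{0 \neq x \in T} \frac{\inprod{x}{Bx}}{\norm{x}_2^2} \leq \lambda_{i-1}(A)$.

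The only delicate step is the dimension-counting argument guaranteeing that $V \cap W$ is large enough to serve as an admissible test subspace of dimension exactly $m - i + 1$; everything else follows mechanically once the intersection is of the correct size, since the agreement of $A$ and $B$ on $W$ and the eigenvalue ceiling on $V$ combine to close the bound. As the statement is precisely Corollary 4.3.9 of \cite{horn2012matrix}, one may simply cite it, but the variational argument above gives a fully self-contained derivation.
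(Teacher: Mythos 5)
Your proof is correct. The paper itself offers no argument for this lemma at all --- it is stated purely as a citation to Corollary 4.3.9 of Horn and Johnson, so there is no internal proof to compare against; your contribution is a fully self-contained derivation of the cited fact. Your argument is the standard variational one, and both halves check out: the lower bound $\lambda_i(A) \leq \lambda_i(A + zz^\top)$ follows from monotonicity of the max--min characterization under the positive semidefinite perturbation $zz^\top$, and the upper bound $\lambda_i(A+zz^\top) \leq \lambda_{i-1}(A)$ follows from your test subspace $T \subseteq V \cap W$: the dimension count $\dim(V \cap W) \geq (m-i+2) + (m-1) - m = m-i+1$ is exactly right, on $W$ the quadratic forms of $A$ and $A + zz^\top$ agree, and on $V$ the form of $A$ is capped by $\lambda_{i-1}(A)$, so the min--max characterization over subspaces of dimension $m-i+1$ closes the bound. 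You also correctly flag the typo in the statement (the bottom index should be $\lambda_m$, not $\lambda_n$, for an $m \times m$ matrix). The only trade-off worth noting is economy: since the result is a classical textbook theorem invoked as a black box in the proof of Proposition~\ref{thm:over_n_mono}, citing it (as the paper does) is entirely adequate, and reproving it adds length without adding generality; but as a blind verification exercise your derivation is complete and has no gaps.
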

\section{Discretization errors}
\label{sec:discrete}
In practice, one would discretize the algorithm and perform gradient descent, instead of using the gradient flow, to optimize over empirical loss.
This can be formulated in the following expression:
\begin{align*}
\beta_{k+1} = \beta_k - \frac{h}{n} X^\top X \beta_k + \frac{h}{n} X^\top y,
\end{align*}
where $h$ is the step size of the algorithm.
A solution to the above iterative equation, starting at $\beta_0$ is:
\begin{align}
\beta_k &= \lrp{I-\frac{h}{n}X^\top X}^k \beta_0 + \frac{h}{n} \lrp{ \sum_{i=0}^{k-1} \lrp{I-\frac{h}{n}X^\top X}^i } X^\top y \notag\\
&= \lrp{I-\frac{h}{n}X^\top X}^k \beta_0 + \lrp{X^\top X}^\dag \lrp{I-\lrp{I-\frac{h}{n}X^\top X}^k } X^\top y.\notag
\end{align}
Taking $\beta_0=0$, we obtain that for any $h<1/s_{\max}$, where $s_{\max}$ is the largest eigenvalue of $X^\top X/n$,
\begin{align}
\sup_{k} \lrn{ \beta_k - \hat{\beta}(kh) } &= \lrp{X^\top X}^\dag \lrp{ \exp\lrp{- \frac{kh}{n} X^\top X } - \lrp{I-\frac{h}{n}X^\top X}^k } X^\top y \notag\\
&\leq \frac{h}{2n} \lrn{X^\top y} \lrp{\exp(h s_{\max}) - 1}, \notag
\end{align}
since $\sup_{\tau>0} | e^{-l\tau}-(1-l)^\tau | \leq \frac{l}{2} \lrp{ e^l - 1 }$, $\forall l < 1$.

\section{Additional details of experiments}
\label{sec:exp_appendix}
\subsection{Linear regression}
\label{sec:exp_linear}
\begin{figure}[h!]
\centering

\includegraphics[width=\linewidth]{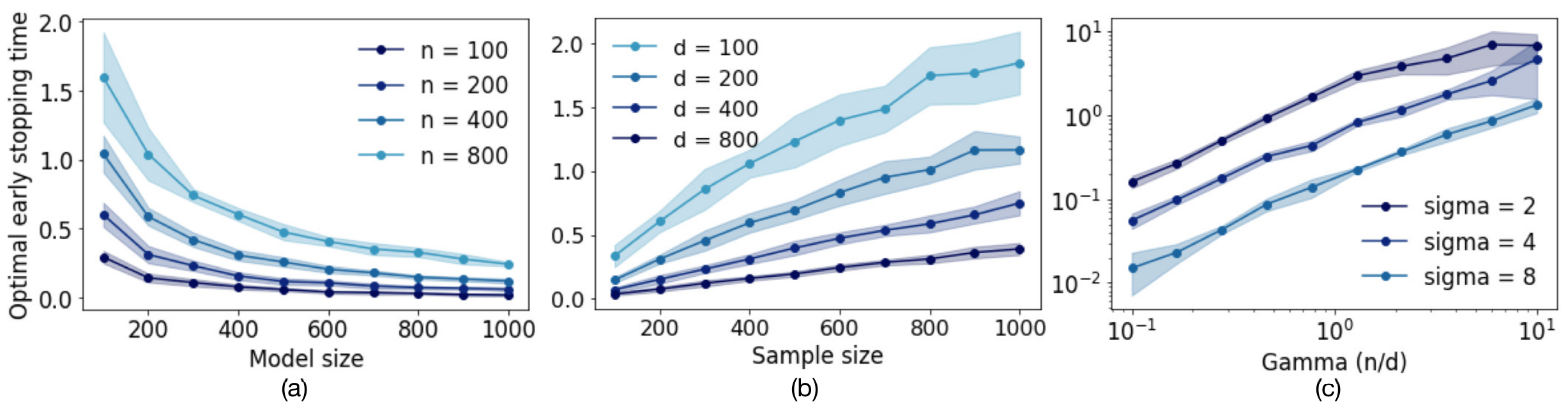}

\caption{(a) Optimal early stopping time decreases with increasing model sizes. (b) Optimal early stopping time increases with increasing sample sizes. (c) Optimal early stopping time increases with increasing $\gamma$. Shaded areas give the empirical bounds generated by 10 runs of the experiments.} 

\label{fig:overParam_n_and_d_CI}
\end{figure}

\paragraph{Optimal early stopping time in the over-informative parametrization setting.} In this experiment (Figure \ref{fig:overParam_n_and_d_CI}), we follow the setting $\so$. We use $p =10$ for all $n$ and $d$. The optimal early stopping time is selected as the time with the first local minimal loss.
We observe from these three plots that the optimal early stopping time decreases with larger model size $d$, and increases with more samples $n$. 

\begin{figure}[h!]
\centering

\includegraphics[width=0.7\linewidth]{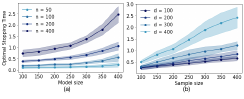}

\caption{(a) Optimal early stopping time increases with increasing model sizes. (b) Optimal early stopping time increases with increasing sample sizes. Shaded areas give the empirical bounds generated by 10 runs of the experiments.}
\label{fig:underParam_n_and_d_CI}

\end{figure}

\paragraph{Optimal early stopping time in the under-informative parametrization setting.} In this experiment (Figure \ref{fig:underParam_n_and_d_CI}), we follow the setting $\su$. We use $p = 400$ for all $n$ and $d$. The optimal early stopping time increases with increasing model sizes $d$, and increasing sample sizes $n$.

\paragraph{Risk monotonicity of optimal early stopping. } In Figure \ref{fig:risk_mono}, for over-informative models, we observe that the optimal early stopping risk decreases with increasing sample sizes $n$. For under-informative models, the optimal early stopping risk decreases with both increasing $d$ and $n$. This is consistent with the risk monotonicity analysis. 

\begin{figure}[h!]
\centering
\includegraphics[width=\linewidth]{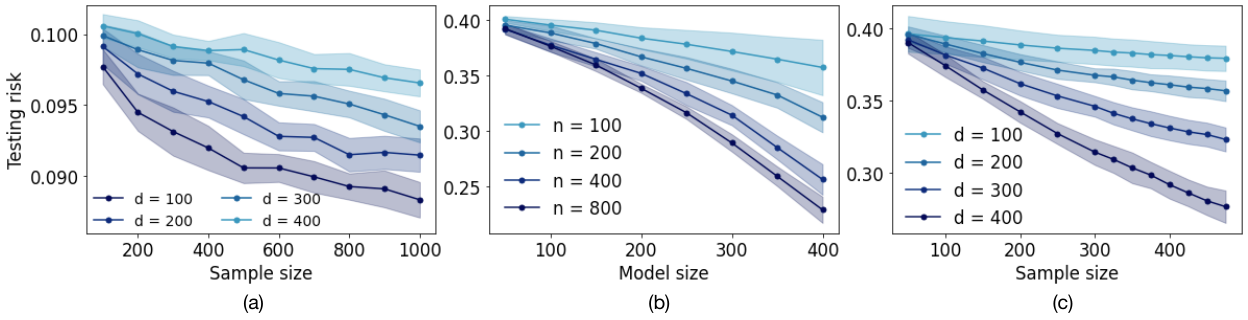}
\caption{(a) Testing risk at optimal early stopping time decreases with sample sizes in the over-informative parametrization setting. (b) (c) Testing risk at optimal early stopping time decreases with model sizes and sample sizes in the under-informative parametrization setting. }
\label{fig:risk_mono}
\end{figure}

\label{expr:settings}

\paragraph{Testing loss vs time.}
Figure~\ref{fig:visual_over} and Figure~\ref{fig:visual_under}
plot the testing loss as a function of time in the over- and under- informative parametrization settings respectively. The solid dots on the curves represent the points with lowest testing loss. We choose $p=10, \sigma=6$ and $p=500, \sigma=2$ in Figure~\ref{fig:visual_over} and Figure~\ref{fig:visual_under}. We can observe from Figure~\ref{fig:visual_over} the optimal early stopping time decreases with increasing $d$ and increases with increasing $n$. Moreover, testing loss decreases with increasing $n$. We can observe from Figure~\ref{fig:visual_under} the optimal early stopping time increases with both $d$ and $n$. 

\begin{figure}[h!]
\centering
\includegraphics[width=0.7\linewidth]{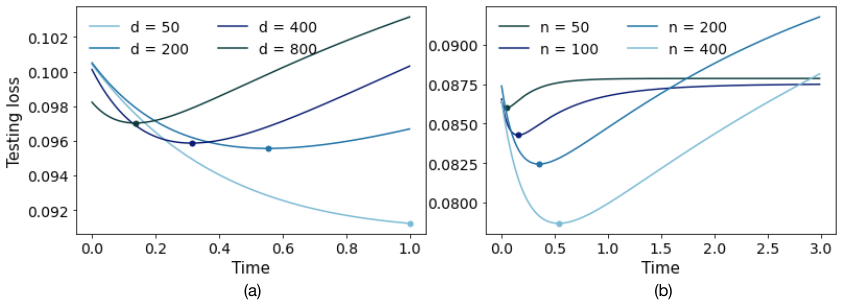}

\caption{(a) Testing loss behaviour for increasing training time with various model sizes. (b) Testing loss behaviour for increasing training time with various number of samples. }
\label{fig:visual_over}
\end{figure}

\begin{figure}[h!]
\centering
\includegraphics[width=0.7\linewidth]{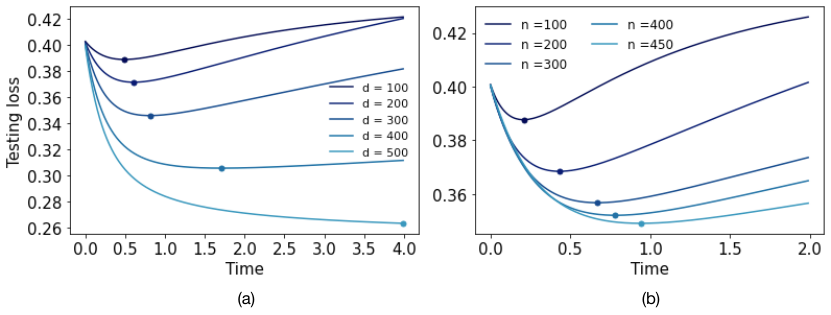}

\caption{(a) Testing loss behaviour for increasing training time with various model sizes. (b) Testing loss behaviour for increasing training time with various sample sizes. }
\label{fig:visual_under}
\end{figure}

\subsection{Additional plots image classification using deep neural networks}

\paragraph{Additional plots for varying widths.} In Figure~\ref{fig:error_test}, we show the testing error as a function of epoch for training MNIST, CIFAR-10 and CIFAR-100 using models of different width. We can observe the double descent phenomenon and the varying optimal early stopping time for different models.

\paragraph{Test error for varying sample size.} In Figure~\ref{fig:error_mono}, we show the testing error as a function of sample size for training MNIST, CIFAR-10 and CIFAR-100. We observe that the test error decreases for increasing sample size.
\begin{figure}[h!]
\centering
\includegraphics[width=\linewidth]{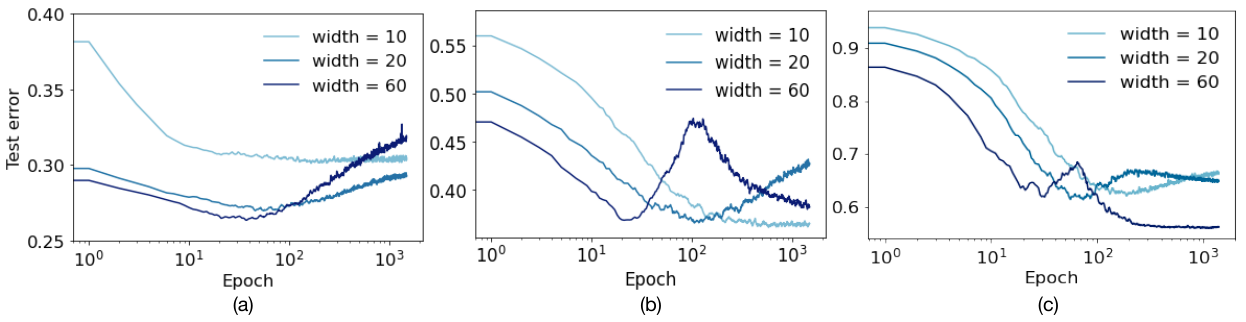}
\caption{Epoch-wise double descent and optimal early stopping time for (a) MNIST (b) CIFAR-10 (c) CIFAR-100.}
\label{fig:error_test}
\end{figure}

\begin{figure}[h!]
\centering
\includegraphics[width=\linewidth]{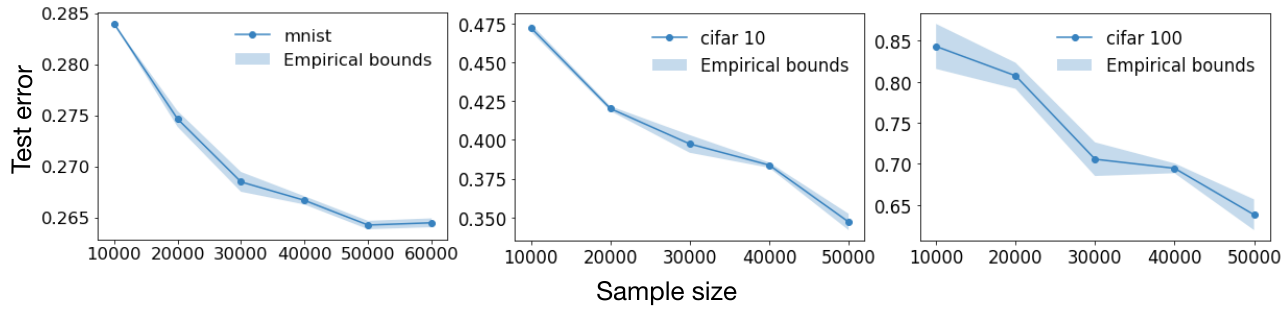}
\caption{Test error decreases with increasing sample size. }
\label{fig:error_mono}
\end{figure}

\begin{figure}[H]
\centering
\includegraphics[width=0.3\linewidth]{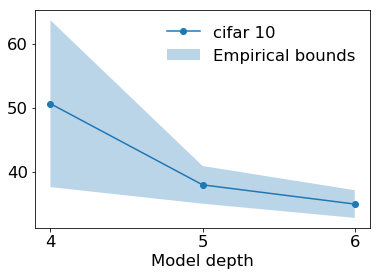}
\caption{Optimal Early Stopping time decreases with increasing depth.}
\label{fig:opt_depth}
\end{figure}
\paragraph{ResNet Depths} In Figure~\ref{fig:opt_depth}, we show that the optimal stopping time can also decrease with increasing ResNet depth. We train CIFAR-10 using ResNet. We add 20$\%$ of label noise to all datasets.
We use a family of ResNet networks with ResNet blocks of widths $=20$. 
When model depth $=4$, the ResNet block widths are $[20, 40, 80, 160]$; when model depth $=5$, the ResNet block widths are $[20, 40, 80, 160, 160]$; when model depth $=6$, the ResNet block widths are $[20, 40, 80, 160, 160, 160]$. The number of parameters grows up to 300M when depth $=6$. Inputs are normalized to $[-1, 1]$ with standard data-augmentation. We use stochastic gradient descent with cross-entropy loss, learning rate $\eta=0.1$, and minibatch size $B=512$. We choose the optimal stopping time similarly as in Section \ref{sec:exp_deep}. Increasing the depth of ResNet would similarly enlarge the network parameter size, resulting an observable decrease of optimal early stopping time. 

\begin{figure}[h!]

\begin{minipage}{0.45\textwidth}

  \includegraphics[width=0.8\linewidth]{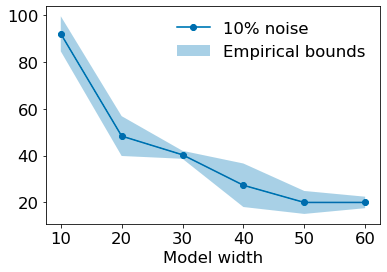}
  \caption{Optimal stopping time vs. model width with 10$\%$ label noise.}
    \label{fig:ln10}
\end{minipage}
\hfill
\begin{minipage}{.45\textwidth}

  \includegraphics[width=0.8\linewidth]{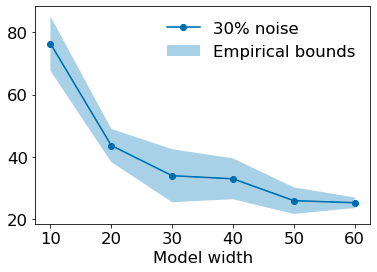}
  \caption{Optimal stopping time vs. model width with 30$\%$ label noise.}
  \label{fig:ln30}
\end{minipage}

\end{figure}
\paragraph{Label Noise Levels} In Figure~\ref{fig:ln10} and Figure~\ref{fig:ln30}, we show that the optimal stopping time decreases with increasing model width holds with 10$\%$ and 30 $\%$ label noise
when training on CIFAR-10 using ResNet. The variance of each independent experimental trail slightly breaks the monotonicity, yet the overall trend is clear.
We use a family of ResNet networks with convolutional layers of widths $[d, 2d, 4d, 8d]$ for different layer, and we set ResNet width from 10 to 60. 
Inputs are normalized to $[-1, 1]$ with standard data-augmentation. We use stochastic gradient descent with cross-entropy loss, learning rate $\eta=0.1$. and minibatch size $B=512$. We choose the optimal stopping time similarly as in Section \ref{sec:exp_deep}.

\end{document}